\newcommand{\bmb}[1]{\bar{\bm{#1}}}
\newcommand{\bmh}[1]{\hat{\bm{#1}}}
\newcommand{\rv}[1]{\mathbf{#1}}
\newcommand{\rvu}[1]{\MakeUppercase{\bm #1}}
\newcommand{\rvb}[1]{\bar{\rv{#1}}}
\newcommand{\bb}[1]{\mathbb{#1}}
\renewcommand{\cal}[1]{\mathcal{#1}}
\newcommand{\bfit}[1]{\textbf{\textit{#1}}}
\newcommand{\tr}[1]{\operatorname{Tr}\left[{#1}\right]}
\newcommand{\Tr}{\operatorname{Tr}}
\newcommand{\ip}[2]{\left<{#1},{#2}\right>}
\newcommand{\R}{\mathbb{R}}
\newcommand{\E}{\mathbb{E}}
\renewcommand{\P}{\mathbb{P}}
\newcommand{\Ph}{\hat{\mathbb{P}}}
\newcommand{\Pnh}{\hat{\mathbb{P}}_n}
\newcommand{\Po}{\mathbb{P}_0}
\newcommand{\Pon}{\mathbb{P}^n_0}
\newcommand{\Pb}{\bar{\mathbb{P}}}
\renewcommand{\d}{\mathrm{d}}
\newcommand{\stp}{\hfill $\square$}
\newcommand{\captext}[1]{\texorpdfstring{#1}{}}
\definecolor{hl-bg-color}{RGB}{255,255,215}
\definecolor{new-magenta}{RGB}{255,0,255}
\newcommand{\tabincell}[2]{\begin{tabular}{@{}#1@{}}#2\end{tabular}}
\newcommand{\quotemark}[1]{“#1”}
\DeclareMathOperator*{\argmax}{argmax}
\DeclareMathOperator*{\argmin}{argmin}
\newcommand{\stt}{\mathrm{s.t.}}
\newcommand{\defeq}{\coloneqq}
\newtheorem{theorem}{{Theorem}}
\newtheorem{proposition}{{Proposition}}
\newtheorem{corollary}{{Corollary}}
\newtheorem{fact}{{Fact}}
\newtheorem{remark}{{Remark}}
\newtheorem{definition}{{Definition}}
\newtheorem{assumption}{{Assumption}}
\newtheorem{philosophy}{{Philosophy}}
\newtheorem{example}{{Example}}
\newcommand{\citep}{\cite}
\newcommand{\citet}{\cite}
\begin{document}
\newpage
\title{Distributional Robustness Bounds Generalization Errors}

\author{Shixiong Wang~
        and Haowei Wang 
\thanks{S. Wang is with the Institute of Data Science, National University of Singapore, Singapore (E-mail: s.wang@u.nus.edu). H. Wang is with the Department of Industrial Systems Engineering and Management, National University of Singapore, Singapore (E-mail: haowei\_wang@u.nus.edu).
}
\thanks{This research is supported by the National Research Foundation Singapore and DSO National Laboratories under the AI Singapore Programme (Award No. AISG2-RP-2020-018).}
}

\maketitle

\begin{abstract}
Bayesian methods, distributionally robust optimization methods, and regularization methods are three pillars of trustworthy machine learning combating distributional uncertainty, e.g., the uncertainty of an empirical distribution compared to the true underlying distribution. 
This paper investigates the connections among the three frameworks and, in particular, explores why these frameworks tend to have smaller generalization errors. 
Specifically, first, we suggest a quantitative definition for \quotemark{distributional robustness}, propose the concept of \quotemark{robustness measure}, and formalize several philosophical concepts in distributionally robust optimization.
Second, we show that Bayesian methods are distributionally robust in the probably approximately correct (PAC) sense; in addition, by constructing a Dirichlet-process-like prior in Bayesian nonparametrics, it can be proven that any regularized empirical risk minimization method is equivalent to a Bayesian method. 
Third, we show that generalization errors of machine learning models can be characterized using the distributional uncertainty of the nominal distribution and the robustness measures of these machine learning models, which is a new perspective to bound generalization errors, and therefore, explain the reason why distributionally robust machine learning models, Bayesian models, and regularization models tend to have smaller generalization errors in a unified manner.
\end{abstract}

\begin{keywords}
Statistical Machine Learning,
Generalization Error,
Distributionally Robust Optimization,
Bayesian Nonparametrics,
Regularization Method.
\end{keywords}

\section{Introduction} \label{sec:introdction}
\subsection{Background}
A great number of statistical machine learning problems can be modeled by the optimization problem
\begin{equation}\label{eq:true-opt}
    \min_{\bm x \in \cal X}~\E_{\Po} h(\bm x, \rv \xi), 
\end{equation}
where $\bm x$ is the decision vector, $\cal X \subseteq \R^l$ is the feasible region, and $\rv \xi$ is the parameter supported on $\Xi \subseteq \R^m$; the parameter $\rv \xi$ is a random vector and $\Po$ is its true underlying distribution; $h: \cal X \times \Xi \to \R$ (typically $\R_{+}$) is the cost function. For example, see \citet[Chap. 1.2]{vapnik2000nature}, \citet[Chap. 1.5]{bishop2006pattern}, \citet[Chap. 2.1]{james2021introduction}, \citet{kuhn2019wasserstein}. In data-driven supervised machine learning, $\rv \xi$ can represent a data pair $(\rvu i, \rvu o)$, i.e., $\rv \xi \defeq (\rvu i, \rvu o)$, where $\rvu i$ is the feature vector, $\rvu o$ is the response, and $\bm x$ is the parameter of the hypothesis which is a function from $\rvu i$ to $\rvu o$. The optimization problem \eqref{eq:true-opt} is also popular in several other areas than statistical machine learning, e.g., applied statistics, operations research, and statistical signal processing, where the specific meanings that it conveys vary from one to another; for extensive reading, see Appendix \ref{append:extensive-reading}.

\subsection{Problem Statement}\label{subsec:problem-statement}
In the practice of machine learning, the underlying true distribution $\Po$ might be unknown, and the nominal distribution $\Pb$ (e.g., an estimate of $\Po$) is used to construct the \textbf{nominal model} 
\begin{equation}\label{eq:nominal-method}
\min_{\bm x \in \cal X} \E_{\Pb}h(\bm x, \rv \xi),
\end{equation}
which serves as a surrogate for the \textbf{true model} \eqref{eq:true-opt}. In the data-driven case, $\Pb$ can be the empirical distribution $\Pnh$ constructed using $n$ training samples from $\Po$, and \eqref{eq:nominal-method} becomes
\begin{equation}\label{eq:erm-method}
\min_{\bm x \in \cal X} \E_{\Pnh} h(\bm x, \rv \xi), 
\end{equation}
which is a sample-average approximation (SAA) model known in the operations research community \citep{anderson2020can} or an empirical risk minimization (ERM) model known in the machine learning community \citep{murphy2012machine,mohri2018foundations}.\footnote{In this paper, we interchangeably use the two terms SAA and ERM.} However, $\Pb$ might be uncertain, that is, there might exist a discrepancy between $\Pb$ and $\Po$. For instance, in the data-driven setting, the uncertainty of $\Pnh$ comes from the scarcity of data. Usually, neglecting such \textbf{distributional uncertainty} in $\Pb$ and directly using the nominal model \eqref{eq:nominal-method} may cause significant performance degradation because the \textbf{empirical error} $\E_{\Pb}h(\bm x, \rv \xi)$ is not guaranteed to provide an upper bound for the \textbf{generalization error} $\E_{\Po}h(\bm x, \rv \xi)$, for every $\bm x$. As a result, by approaching the minimum empirical error $\E_{\Pb}h(\bmb x, \rv \xi)$, the generalization error $\E_{\Po}h(\bmb x, \rv \xi)$ at $\bmb x$ is not necessarily tightly controlled, where $\bmb x \in \argmin_{\bm x} \E_{\Pb}h(\bm x, \rv \xi)$. Therefore, reducing the negative influence caused by the distributional uncertainty in $\Pb$ is the core of trustworthy machine learning. 
Since $\E_{\Po}(\bm x, \rv \xi)$ cannot be directly evaluated, a natural strategy is to find a point-wise \textbf{generalization error (upper) bound} $\operatorname{UppBnd}(\bm x)$ such that 
\begin{equation}\label{eq:gen-err-bound}
   \E_{\Po}(\bm x, \rv \xi) \le \operatorname{UppBnd}(\bm x),~~~\bm x \in \cal X.
\end{equation}
Then by minimizing $\operatorname{UppBnd}(\bm x)$ with $\bm x^*$, the generalization error $\E_{\Po}(\bm x^*, \rv \xi)$ of $\bm x^*$ is also diminished. A popular realization of $\operatorname{UppBnd}(\bm x)$ is through employing the empirical error $\E_{\Pb}(\bm x, \rv \xi)$.
To be specific, the \textbf{generalization error gap} from the empirical error, which is a measure of overfitting, i.e.,
\begin{equation}\label{eq:gen-error}
    \E_{\Po} h(\bm x, \rv \xi) - \E_{\Pb} h(\bm x, \rv \xi),
\end{equation}
should be \textit{minimized or tightly upper bounded}, in expectation or in probability because \eqref{eq:gen-error} is a random quantity; note that $\Pb$ might be a random measure such as the empirical measure $\Pnh$. As an example, in the data-driven setting, the \textbf{expected generalization error gap} from the empirical error, i.e., 
\begin{equation}\label{eq:gen-error-data-driven}
\E_{\Pon}\left[\E_{\Po} h(\bm x, \rv \xi) - \E_{\Pnh} h(\bm x, \rv \xi)\right],
\end{equation} 
is studied, where $\Pon$ is the $n$-fold product measure induced by $\Po$.

Trustworthy machine learning hence aims to design a new model based on, instead of directly using, the nominal/surrogate model \eqref{eq:nominal-method} such that the generalization error bound in \eqref{eq:gen-err-bound} or the generalization error gap in \eqref{eq:gen-error} is minimized or tightly controlled in expectation or in probability.

\subsection{Literature Review}
Facing the distributional mismatch between $\Pb$ and $\Po$, three treatment frameworks exist across the fields of statistics, operations research, and machine learning.

Bayesian methods are the earliest and also the first-hand choice \citep{ferguson1973bayesian,ghosal2017fundamentals,wu2018bayesian}. Suppose that we can construct a family $\cal C \subseteq \cal M(\Xi)$ of nominal distributions based on the knowledge of $\Pb$ where $\cal M(\Xi)$ is the set of all distributions on the measurable space $(\Xi, \cal B_{\Xi})$ and $\cal B_{\Xi}$ is the Borel $\sigma$-algebra on $\Xi$. For example, $\cal C$ can be a closed distributional $\epsilon$-ball centered at $\Pb$, i.e., $\cal C \defeq B_{\epsilon}(\Pb)$. Bayesian methods try to assign a distribution $\bb Q$ on the measurable space $(\cal C, \cal B_{\cal C})$ and solve the Bayesian counterpart of the nominal model \eqref{eq:nominal-method}: 
\begin{equation}\label{eq:bayesian-method}
\min_{\bm x \in \cal X} \E_{\P \sim \bb Q} \E_{\rv \xi \sim \P} h(\bm x, \rv \xi),
\end{equation}
where $\P \in \cal C$ and $\bb Q \in \cal M(\cal C)$. Note that $\P$ and $\bb Q$ may be non-parametric. In this case, we need to guarantee that $\Po \in \cal C$ and $\bb Q$ should be such that $\Po$ is the most probable element to be drawn from $\cal C$. In fact, under some mild conditions, there exists an element $\P' \in \cal C$ such that $\E_{\bb Q} \E_{\P} h(\bm x, \rv \xi) = \E_{\P'} h(\bm x, \rv \xi)$ for every $\bm x$; see Theorem \ref{thm:bayeisan-mean-dist}. Therefore, in nature, Bayesian methods inform us of a possible way to choose the \quotemark{best} candidate $\P'$ from the nominal distribution class $\cal C$. The ideal case is that $\P'$ can coincide with $\Po$. If $\P'$ is a better surrogate for (i.e., is closer to) $\Po$ than $\Pb$, Bayesian methods have the potential to reduce generalization errors.

Another promising and popular treatment gives rise of the min-max distributionally robust optimization (DRO) framework \citep{rahimian2022frameworks}
\begin{equation}\label{eq:dro-method}
\min_{\bm x \in \cal X} \max_{\P \in \cal C} \E_{\P} h(\bm x, \rv \xi),
\end{equation}
if we are interested in controlling the worst-case performance of the nominal model \eqref{eq:nominal-method}.
When the underlying true distribution $\Po$ is contained in $\cal C$, by minimizing the worst-case cost, the cost at $\Po$ can be also expected to be put down; cf. \eqref{eq:gen-err-bound} where $\operatorname{UppBnd}(\bm x) \defeq \max_{\P \in \cal C} \E_{\P} h(\bm x, \rv \xi),~\bm x \in \cal X$; for more justifications on DRO methods, see \citet{kuhn2019wasserstein}.
Under some mild conditions, there exists a distribution $\P'$ such that $\min_{\bm x \in \cal X} \max_{\P \in \cal C} \E_{\P} h(\bm x, \rv \xi) = \min_{\bm x \in \cal X} \E_{\P'} h(\bm x, \rv \xi)$ \citep{esfahani2018data,yue2021linear,zhang2022simple,gao2022finite,gao2022distributionally}. Hence, in nature, DRO methods advise us of another way to find the \quotemark{best} candidate $\P'$ from the nominal distribution class $\cal C$. However, the size of $\cal C$, e.g., the radius $\epsilon$ of the associated distributional ball $\cal C \defeq B_{\epsilon}(\Pb)$, needs to be elegantly specified: neither too small nor too large. 
Suppose that $\bm x^*$ solves \eqref{eq:dro-method}. 
If the size is too small, $\Po$ may not be included in $\cal C$, and as a result, the true cost (i.e., generalization error) $\E_{\Po}h(\bm x^*, \rv \xi)$ cannot be upper bounded by the worst-case cost \eqref{eq:dro-method}. If, on the other hand, the size of $\cal C$ is extremely large, the DRO method may be overly conservative: The upper bound \eqref{eq:dro-method} may be too loose, far upward away from the true cost $\E_{\Po}h(\bm x^*, \rv \xi)$. Interested readers may refer to, e.g., \citet{esfahani2018data,shapiro2017distributionally,sun2016convergence,gao2022distributionally,gao2022finite} for more technical discussions on the DRO method. A comprehensive survey, \citet{rahimian2022frameworks}, is also recommended for the general introduction of the DRO method.

Yet another strategy to hedge against the distributional uncertainty in $\Pb$ is to use a regularizer $f(\bm x)$, that is, to solve the regularized problem \citep[Chap. A1.3]{vapnik1998statistical} of the nominal model \eqref{eq:nominal-method}:
\begin{equation}\label{eq:regularization-method}
\min_{\bm x \in \cal X} \E_{\Pb} h(\bm x, \rv \xi) + \lambda f(\bm x),
\end{equation}
where $\lambda \ge 0$ is a weight coefficient.
This is a trending method in data-driven machine learning and applied statistics to reduce \quotemark{overfitting}. A well-known instance is the regularized empirical risk minimization model $\min_{\bm x \in \cal X} \E_{\Pnh} h(\bm x, \rv \xi) + \lambda_n f(\bm x)$ where $\Pb \defeq \Pnh$ is the empirical distribution given $n$ i.i.d.~samples, $f(\bm x)$ is the regularization term (e.g., any norm $\|\bm x\|$ of $\bm x$; see, e.g., \citet[Chap. 7]{goodfellow2016deep}), and the weight coefficient $\lambda_n$ may depend on $n$. By introducing a bias term $f(\bm x)$, the generalization error of the regularized model \eqref{eq:regularization-method} is possible to be reduced; recall the \quotemark{bias-variance trade-off} in machine learning \citep[Chap. 2.9]{hastie2009elements}, \citep{domingos2012few}. A quantitative justification is that $\E_{\Pnh} h(\bm x, \rv \xi) + \lambda_n f(\bm x)$ can provide a (probabilistic) upper bound on $\E_{\Po} h(\bm x, \rv \xi)$, while $\E_{\Pnh} h(\bm x, \rv \xi)$ solely cannot; cf. \eqref{eq:gen-err-bound} where $\operatorname{UppBnd}(\bm x) \defeq \E_{\Pnh} h(\bm x, \rv \xi) + \lambda_n f(\bm x),~\bm x \in \cal X$; recall, e.g., concentration properties of empirical measures \citep{zhang2021concentration}. As a result, supposing that $\bm x^*$ solves \eqref{eq:regularization-method}, the true cost (i.e., generalization error) $\E_{\Po} h(\bm x^*, \rv \xi)$ can be upper bounded.

For a specific but supplementary literature review on the data-driven case where $\Pb = \Pnh$, interested readers can see Appendix \ref{append:literature-review-ERM}.

\subsection{Research Gaps and Motivations}
The terminology \quotemark{distributional robustness}, although widely used, is just a philosophical concept that has not been quantitatively defined. In other words, there does not exist a quantity to measure the distributional robustness of a \quotemark{robust solution}. Therefore, the first motivation of this work is to formalize the DRO theory and explain the rationale and limitation of the existing min-max DRO model \eqref{eq:dro-method}.

Although Bayesian probabilistic interpretation for some special regularized methods (specifically, LASSO and Ridge regression) exists, and recent results show that there is a close connection between distributional robustness and regularization under some conditions, NONE of the existing literature (profoundly) discusses the three methods together in a unified framework and collectively explains why they can generalize well in a unified sense. Hence, the second motivation of this work is to investigate the connections among the Bayesian method \eqref{eq:bayesian-method}, the DRO method \eqref{eq:dro-method}, and the regularization method \eqref{eq:regularization-method} in a unified framework, which can bring new insights to different research communities.

\subsection{Contributions}
Following the research gaps and motivations aforementioned, the contributions of this paper can be visualized in Figure \ref{fig:relation-qucik-preview}.
\begin{figure}[!htbp]
    \centering
    \includegraphics[height=6.5cm]{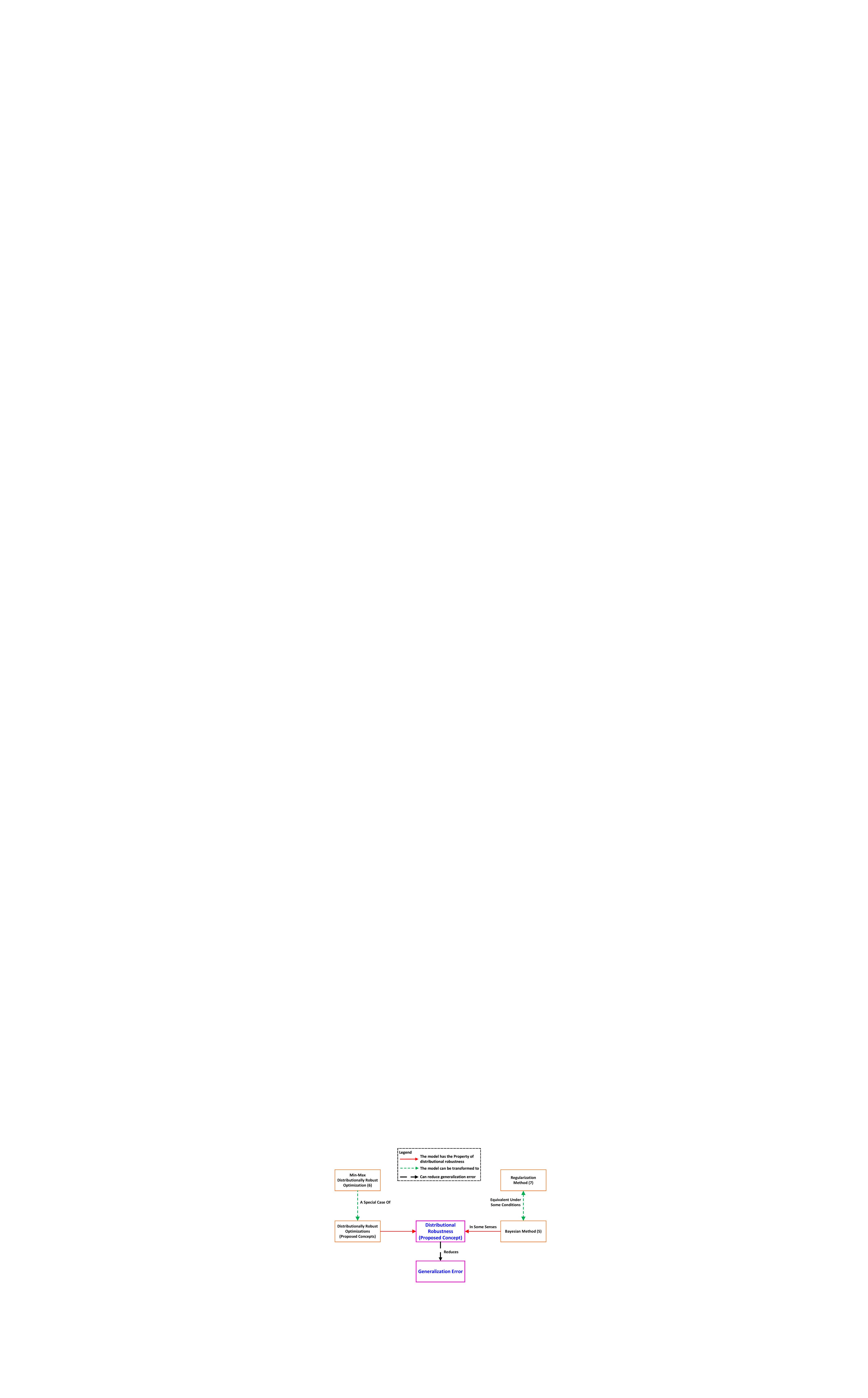}
    \caption{A visualization of the main results. (A more detailed discussion is available in Appendix \ref{append:summary-of-relations}.) In a nutshell, the Bayesian method \eqref{eq:bayesian-method}, the DRO method \eqref{eq:dro-method}, and the regularization method \eqref{eq:regularization-method} are distributionally robust. Since distributional robustness bounds generalization errors, the three methods can generalize well.}
    \label{fig:relation-qucik-preview}
\end{figure}

To be specific, the contributions are summarized as follows.
\begin{enumerate}[1)]
    \item Several existing concepts in distributionally robust optimization have been formalized; see Subsection \ref{subsec:DRO-concept}. For example, a formal and quantitative definition of the concept of \quotemark{distributional robustness} is suggested. Using the proposed concept system, we explain the rationale behind the popular min-max robust optimization model \eqref{eq:dro-method}.
    
    \item Bayesian methods \eqref{eq:bayesian-method} are proved to be probably approximately distributionally robust; see Subsection \ref{subsec:bayesian-method} and Figure \ref{fig:relation-qucik-preview}. 
    
    \item We show that any regularization method \eqref{eq:regularization-method} can be transformed to a Bayesian method \eqref{eq:bayesian-method}, and therefore, interpreted from the Bayesian probabilistic perspective \eqref{eq:bayesian-method}. The converse is also true under some conditions: Any Bayesian method \eqref{eq:bayesian-method} is adding a regularization term to the nominal model \eqref{eq:nominal-method}, and therefore, equivalent to a regularized method \eqref{eq:regularization-method}.\footnote{In the existing literature, the regularized problem \eqref{eq:regularization-method} is interpreted from another Bayesian perspective $\min_{\bb Q_{\bm x}} \E_{\bb Q_{\bm x}} \E_{\Pb} h(\bm x, \rv \xi) + \lambda \operatorname{KL}(\bb Q_{\bm x} \| \Pi_{\bm x})$, where a prior distribution $\Pi_{\bm x}$ on $\cal X$, which encodes our prior belief on the hypothesis class $\cal X$, is used; $\bb Q_{\bm x}$ is a distribution on $\cal X$; $\operatorname{KL}(\cdot \| \cdot)$ denotes the Kullback–Leibler divergence. Let $\bb Q_{\bm x}$ be a point-mass distribution. If $\Pi_{\bm x}$ is a Gaussian (resp. Laplacian) distribution, we have the $2$-norm (resp. $1$-norm) regularized model. In this paper, we alternatively wonder whether the regularized model \eqref{eq:regularization-method} can be transformed to, and hence interpreted from, the Bayesian method \eqref{eq:bayesian-method}. The difference lies in over which space the prior distribution is applied: the hypothesis space $\cal X$ or the data distribution space $\cal C$. For details, see Subsection \ref{subsec:prob-interpretations-regularization}.} For details, see Subsection \ref{subsec:regularized-ERM} and Figure \ref{fig:relation-qucik-preview}.
    
    \item We show that generalization errors can be characterized from the perspective of the distributional uncertainty of the nominal distribution $\Pb$. In addition, it is proved that generalization errors can also be bounded using the distributional robustness of machine learning models. Therefore, by conducting distributionally robust optimization, the upper bounds of generalization errors are reduced. It is in this sense that \quotemark{distributional robustness bounds generalization errors} is claimed, which is a new perspective for the area. For details, see Subsection \ref{subsec:dist-robustness-and-gen-error}.

    \item Comparisons between the proposed framework and the existing frameworks are conducted. For details, see Subsection \ref{subsec:comparison}.
\end{enumerate}

\subsection{Notations, Preliminaries, and Organization} 
The notations are listed in Table \ref{tab:notation}. Technical preliminaries (e.g., the theory of min-max DRO and the concepts of generalization errors) are put in Appendix \ref{append:prelimilary}. Section \ref{sec:main-results} presents the main results and the concrete illustrating examples. Discussions and conclusions in Section \ref{sec:conclusion} complete this paper.

\begin{table}[!htbp]
    \centering
    \caption{Notations}
    \label{tab:notation}
    \begin{tabular}{ll}
        \toprule
        \textbf{Notation} & \textbf{Description} \\
        \midrule
        $\cal M(\Xi)$
        &
        \tabincell{l}{
        all probability measures on the measurable space $(\Xi, \cal B_{\Xi})$ where $\cal B_{\Xi}$ is \\ the Borel $\sigma$-algebra on $\Xi$}
        \\
        \midrule
        $\cal B_{\cal M(\Xi)}$
        &
        \tabincell{l}{
        the Borel $\sigma$-algebra on $\cal M(\Xi)$
        } 
        \\
        \midrule
        $\Po$
        &
        \tabincell{l}{
        underlying true distribution
        }
        \\
        \midrule
        $\Pnh$
        &
        \tabincell{l}{
        empirical distribution constructed by $n$ i.i.d.~data from $\Po$
        }
        \\
        \midrule
        $\Ph$
        &
        \tabincell{l}{
        prior distribution which is estimated using prior knowledge of $\Po$
        }
        \\
        \midrule
        $\Pb$
        &
        \tabincell{l}{
        surrogate distribution (i.e., reference distribution) acting as a generic \\ estimate of $\Po$, which can be $\Pnh$ and $\Ph$, among others
        }
        \\
        \midrule
        $\P^n$
        &
        \tabincell{l}{
        $n$-fold product measure (i.e., $n$-fold joint distribution) induced by the \\ probability measure $\P$
        }
        \\
        \midrule
        $[n]$
        &
        \tabincell{l}{
        $\defeq \{1,2,\ldots,n\}$, the running index set
        }
        \\
        \midrule
        $\Delta(\P, \Pb)$
        &
        \tabincell{l}{
        statistical similarity measure between two distributions $\P$ and $\Pb$; \\ $\Delta$ can be any possible divergence or metric, among others, unless \\ specified in a context
        }
        \\
        \midrule
        $B_{\epsilon}(\Pb)$
        &
        \tabincell{l}{
        $\defeq \{\P \in \cal M(\Xi)| \Delta(\P, \Pb) \le \epsilon\}$, closed distributional $\epsilon$-ball centered at $\Pb$
        }
        \\
        \midrule
        $\E_{\P}[\cdot]$
        &
        \tabincell{l}{
        the expectation operator with respect to the distribution $\P$
        }
        \\
        \bottomrule
    \end{tabular}
\end{table}

\section{Main Results}\label{sec:main-results}
In line with Figure \ref{fig:relation-qucik-preview}, this section presents the main results, which are organized as follows.
\begin{enumerate}[1)]
    \item Subsection \ref{subsec:DRO-concept} formalizes the concept system of \quotemark{distributional robustness}. In highlights,
        \begin{enumerate}[a)]
            \item Subsection \ref{subsec:dro-axiom} defines the concepts related to distributionally robust optimizations;
            
            \item Subsection \ref{subsubsec:dro-implementations} discusses the practical implementations of the proposed distributionally robust optimization methods;
            
            \item Subsection \ref{subsubsec:min-max-DRO} reveals that the min-max distributionally robust optimization \eqref{eq:dro-method} is a special case of the proposed distributionally robust optimizations;
            
            \item Subsection \ref{subsec:robustness-sensitivity} discusses the trade-off between the robustness (to the distributional uncertainty, i.e., unseen data) and the sensitivity/specificity (to the seen training data) for a machine learning model.
        \end{enumerate}
        
    \item Subsection \ref{subsec:bayesian-method} shows that, under some mild conditions, there exists an element $\P' \in \cal C$ such that $\E_{\bb Q} \E_{\P} h(\bm x, \rv \xi) = \E_{\P'} h(\bm x, \rv \xi)$ for every $\bm x$; see Theorem \ref{thm:bayeisan-mean-dist}. In addition, we claim that in the PAC sense, the Bayesian model \eqref{eq:bayesian-method} is distributionally robust.
    
    \item Subsection \ref{subsec:regularized-ERM} proves that under some conditions, a regularization model \eqref{eq:regularization-method} is equivalent to a Bayesian model \eqref{eq:bayesian-method}.
    
    \item Subsection \ref{subsec:dist-robustness-and-gen-error} shows that the generalization errors of machine learning models can be bounded by the distributional robustness of these machine learning models, which is a new perspective to the area. Therefore, distributionally robust optimization reduces generalization errors.
    As a result, the solutions returned by the Bayesian method \eqref{eq:bayesian-method}, the (min-max) DRO method \eqref{eq:dro-method}, and the regularization method \eqref{eq:regularization-method} are distributionally robust, and therefore, can generalize well in a unified sense.

    \item Subsection \ref{subsec:comparison} highlights more differences, other than the inclusion relationship in Subsection \ref{subsubsec:min-max-DRO}, between the proposed \quotemark{distributionally robust optimization models} in Subsection \ref{subsec:DRO-concept} and the existing min-max distributionally robust optimization model \eqref{eq:dro-method}. In addition, the differences between the existing framework of stability analyses of algorithms \citep{bousquet2002stability} and the proposed framework of distributionally robust optimization are also highlighted.

    \item Subsection \ref{subsec:examples} provides concrete examples (i.e., covariance estimation, inverse covariance estimation, and linear regression) to intuitively explain the main claims.
\end{enumerate}

\subsection{Concept System of Distributional Robustness}\label{subsec:DRO-concept}
Although the concept of \quotemark{distributional robustness} is philosophically popular, it has not been mathematically defined yet in the literature and is claimed only by the min-max formulation \eqref{eq:dro-method}. This subsection tries to formalize the concept system of \quotemark{distributional robustness} and \quotemark{distributionally robust optimization}, and explain the rationale behind the min-max formulation \eqref{eq:dro-method}.\footnote{All formal definitions in this section are new: they cannot be found elsewhere.} As a result, distributionally robust optimization models, Bayesian models, and regularization models can be discussed in a common concept system. 

\begin{assumption}
For conceptual simplicity and without loss of generality, this subsection assumes that the true problem \eqref{eq:true-opt} has only one global optimizer $\bm x_0 \defeq \argmin_{\bm x} \E_{\Po} h(\bm x, \rv \xi)$. \stp
\end{assumption}

We begin with the following intuitive philosophy.
\begin{philosophy}\label{philosophy:robustness}
For a specified system, \quotemark{robustness} means that \quotemark{the small perturbations in parameters or inputs lead to the small changes in performances or outputs}. \stp
\end{philosophy}

This section tries to mathematically formalize the intuition above. Considering an \textbf{optimal value functional} (OVF)
\begin{equation}\label{eq:model-functional}
    T: \P \mapsto \min_{\bm x \in \cal X} \E_{\P} h(\bm x, \rv \xi),
\end{equation}
induced by the true optimization model \eqref{eq:true-opt}, every specific optimization model $\min_{\bm x} \E_{\P} h(\bm x, \rv \xi)$ is a particularization of the optimal value functional $T$ at $\P$. For example, \eqref{eq:true-opt} can be written as $T(\Po)$. Also, we define a two-argument cost functional
\begin{equation}\label{eq:model-functional-2}
M: (\bm x, \P) \mapsto \E_{\P} h(\bm x, \rv \xi).
\end{equation}
In this subsection, we investigate a set $B_{\epsilon}(\Po)$ of nominal distributions centered at the true distribution $\Po$, because we are interested in studying the consequence of the deviations of the nominal distributions $\P \in B_{\epsilon}(\Po)$ from the true distribution $\Po$. As a result, both $T$ and $M$ can induce a \textbf{model set} through $B_{\epsilon}(\Po)$, that is, $\{\P \mapsto T(\P) | \P \in B_{\epsilon}(\Po)\}$ and $\{\P \mapsto M(\bm x, \P) | \P \in B_{\epsilon}(\Po)\}$ (given a solution $\bm x$), respectively. 

\begin{definition}[Distributional Robustness of Solution]\label{def:sol-dist-robust}
Suppose $T(\Po)$ has only one solution. A point $\bm x \in \cal X$ is an $(\epsilon, L)$-distributionally robust solution to the cost functional $M$, or equivalently the model set $\{\P \mapsto M(\bm x, \bb P) | \P \in B_{\epsilon}(\Po)\}$, at $\Po$ if
for every $\P \in B_{\epsilon}(\Po)$, we have
\[
\Big|\E_{\P}h(\bm x, \rv \xi) - \E_{\Po}h(\bm x_0, \rv \xi)\Big| \le L,
\]
where $\bm x_0 \defeq \argmin_{\bm x} \E_{\Po}h(\bm x, \rv \xi)$ and the value of $L$, which is the smallest value satisfying the above display, depends on the values of $\epsilon$ and $\bm x$. Here, given $\epsilon$ and $\bm x$, $L$ is the robustness measure of the solution $\bm x$ for the cost functional $M$ at $\Po$: Given $\epsilon$ and $\bm x$, the smaller the value of $L$, the more robust the solution $\bm x$ for the cost functional $M$ at $\Po$. \stp
\end{definition}

Note that a robust solution is with respect to a model set rather than a single model. 
This is because, when we discuss the distributional robustness of a robust solution $\bm x$ to a model set, all distributions in $B_{\epsilon}(\Po)$ share the same solution $\bm x$. Definition \ref{def:sol-dist-robust} intuitively describes the robustness of the solution $\bm x$ when the nominal distribution deviates from the true distribution $\Po$: i.e., \textit{the objective value is as still as possible at the given solution when (reasonably small) model perturbations exist.}. The following example offers further motivations for Definition \ref{def:sol-dist-robust}.

\begin{example}
We expect a robust solution for a model set because, for example, when the testing performance of a deep learning model is acceptable in multiple testing data sets, we do not want to adjust the parameters of the network (i.e., trained solution) from one to another. For another example, in political policy-making, when the external and internal environments do not significantly change, we prefer a policy to be consistent: \quotemark{Governing a big country is like cooking small fish}; we should not stir the fish too much in cooking so that they will not fall apart into small pieces.
\stp
\end{example}

Interested readers are invited to see some additional motivations and discussions on the concept of \quotemark{distributional robustness} in Appendix \ref{append:extra-concepts}; one can ignore it without missing the main points of this paper.

An optimization model that finds a distributionally robust solution to an optimization model set is called the \textbf{distributionally robust counterpart} for this model set.
The example below exemplifies the concept of distributionally robust counterpart.
\begin{example}
Consider an optimization $\min_{\bm x} \E_{\Po} h(\bm x, \rv \xi)$ and a set of nominal distributions $B_{\epsilon}(\Po)$. 
If $\bm x'$ solves $\min_{\bm x} \E_{\P'} h(\bm x, \rv \xi)$ and $\bm x'$ is also a distributionally robust solution for the nominal model set induced by $\min_{\bm x} \E_{\Po} h(\bm x, \rv \xi)$ through $B_{\epsilon}(\Po)$. Then $\min_{\bm x} \E_{\P'} h(\bm x, \rv \xi)$ is a distributionally robust counterpart for this model set, or simply, for $\min_{\bm x} \E_{\Po} h(\bm x, \rv \xi)$.
\stp
\end{example}

\subsubsection{Formalization of Distributionally Robust Optimization}\label{subsec:dro-axiom}

For a real-world optimization problem, we aim to find a robust solution for the associated model set and its robustness measure. 
This motivates the definition of the distributionally robust counterpart for the model \eqref{eq:true-opt}.

\begin{definition}[Distributionally Robust Optimization]\label{def:dist-robust-opt-eps}
The distributionally robust counterpart for the model \eqref{eq:true-opt} is defined as
\begin{equation}\label{eq:dist-robust-opt-eps}
    \begin{array}{cll}
       \displaystyle \min_{\bm x, L}  &  L \\
       \text{s.t.} & |\E_\P h(\bm x, \rv \xi) - \E_{\Po} h(\bm x_0, \rv \xi)| \le L, & \forall \P: \Delta(\P, \Po) \le \epsilon, \\
       & \bm x \in \cal X,
    \end{array}
\end{equation}
where $\bm x_0 \defeq \argmin_{\bm x} \E_{\Po} h(\bm x, \rv \xi)$.
If \eqref{eq:dist-robust-opt-eps} has a finite solution $(\bm x^*, L^*)$, $\bm x^*$ is a distributionally robust solution with the \bfit{absolute robustness measure} $L^*$; cf. Definition \ref{def:sol-dist-robust}. \stp
\end{definition}

The distributionally robust optimization model in Definition \ref{def:dist-robust-opt-eps} is a two-sided version: It limits the cost deviation $\E_{\P}h(\bm x, \rv \xi) - \E_{\Po}h(\bm x_0, \rv \xi)$ at the solution $\bm x$ from both above and below. In practice, people may be only interested in the one-sided versions, that is, the upper bound of the cost difference $\E_{\P}h(\bm x, \rv \xi) - \E_{\Po}h(\bm x_0, \rv \xi)$ at the solution $\bm x$. This is reminiscent of the two-sided and one-sided generalization errors in machine learning.

\begin{definition}[One-Sided Distributionally Robust Optimization]\label{def:one-sided-dist-robust-opt-eps}
The one-sided distributionally robust counterpart for the model \eqref{eq:true-opt} is defined as
\begin{equation}\label{eq:one-sided-dist-robust-opt-eps}
    \begin{array}{cll}
       \displaystyle \min_{\bm x, L}  &  L \\
       \text{s.t.} & \E_\P h(\bm x, \rv \xi) - \E_{\Po} h(\bm x_0, \rv \xi) \le L, & \forall \P: \Delta(\P, \Po) \le \epsilon, \\
       & L \ge 0, \\
       & \bm x \in \cal X,
    \end{array}
\end{equation}
where $\bm x_0 \defeq \argmin_{\bm x} \E_{\Po} h(\bm x, \rv \xi)$.
If \eqref{eq:one-sided-dist-robust-opt-eps} has a finite solution $(\bm x^*, L^*)$, then $\bm x^*$ is a distributionally robust solution with the one-sided absolute robustness measure $L^*$. \stp
\end{definition}

\subsubsection{Practical Implementations of Distributionally Robust Optimization}\label{subsubsec:dro-implementations}

In practice, the true distribution $\Po$ might be unknown (e.g., in the data-driven setup), and therefore, the optimization problems \eqref{eq:dist-robust-opt-eps} and \eqref{eq:one-sided-dist-robust-opt-eps} cannot be explicitly solved. However, whenever we have a good estimate, e.g., $\Pb$, of $\Po$, we can alternatively resort to the surrogate distributionally robust optimization counterpart of \eqref{eq:true-opt}.
\begin{definition}[Surrogate Distributionally Robust Optimization]\label{def:surrogate-dist-robust-opt-eps}
The distributionally robust counterpart for the model \eqref{eq:true-opt} at the surrogate $\Pb$ is defined as
\begin{equation}\label{eq:surrogate-dist-robust-opt-eps}
    \begin{array}{cl}
       \displaystyle \min_{\bm x, L}  &  L \\
       \text{s.t.} & |\E_\P h(\bm x, \rv \xi) - \E_{\Pb} h(\bmb x, \rv \xi)| \le L,~~~\forall \P : \Delta(\P, \Pb) \le \epsilon, \\
       & \bm x \in \cal X,
    \end{array}
\end{equation}
where $\bmb x \defeq \argmin_{\bm x} \E_{\Pb} h(\bm x, \rv \xi)$. For a data-driven problem, $\Pb$ can be chosen as $\Pnh$. \stp
\end{definition}

Problem \eqref{eq:surrogate-dist-robust-opt-eps} is therefore the \bfit{distributionally robust counterpart} to the nominal problem \eqref{eq:nominal-method}. The one-sided version is given below.

\begin{definition}[One-Sided Surrogate Distributionally Robust Optimization]\label{def:one-sided-surrogate-dist-robust-opt-eps}
The one-sided distributionally robust counterpart for the model \eqref{eq:true-opt} at the surrogate $\Pb$ is defined as
\begin{equation}\label{eq:one-sided-surrogate-dist-robust-opt-eps}
    \begin{array}{cl}
       \displaystyle \min_{\bm x, L}  &  L \\
       \text{s.t.} & \E_\P h(\bm x, \rv \xi) - \E_{\Pb} h(\bmb x, \rv \xi) \le L,~~~\forall \P : \Delta(\P, \Pb) \le \epsilon, \\
       & L \ge 0, \\
       & \bm x \in \cal X,
    \end{array}
\end{equation}
where $\bmb x \defeq \argmin_{\bm x} \E_{\Pb} h(\bm x, \rv \xi)$. For a data-driven problem, $\Pb$ can be chosen as $\Pnh$. \stp
\end{definition}

In practice, for example, in a data-driven setting, $\epsilon$ is not easy to suitably specify to guarantee that $\Po$ is contained in the empirical $\epsilon$-ball $B_{\epsilon}(\Pnh) \defeq \{\P \in \cal M(\Xi)|\Delta(\P, \Pnh) \le \epsilon\}$. When $\Po$ is not contained in the empirical $\epsilon$-ball, the data-driven surrogate min-max distributionally robust optimizations \eqref{eq:surrogate-dist-robust-opt-eps} and \eqref{eq:one-sided-surrogate-dist-robust-opt-eps} at $\Pnh$ is meaningless because the true cost at $\Po$ cannot be explicitly upper bounded. To this end, Appendix \ref{append:general-case-whole-space} discusses a more general case where the nominal distributions are not limited to the subspace $B_{\epsilon}(\Pnh)$. The key idea is to study the alternative constraint 
\begin{equation}\label{eq:whole-space-constrj}
\E_\P h(\bm x, \rv \xi) - \E_{\Pb} h(\bmb x, \rv \xi) \le L \cdot \Delta(\P, \Pb)
\end{equation}
instead of 
\begin{equation}\label{eq:sub-space-constrj}
\E_\P h(\bm x, \rv \xi) - \E_{\Pb} h(\bmb x, \rv \xi) \le L, ~~~\forall \P: \Delta(\P, \Pb) \le \epsilon.
\end{equation}
The minimum value of $L$ satisfying \eqref{eq:whole-space-constrj} is termed the \textbf{relative robustness measure}; cf. the absolute robustness measure in \eqref{eq:sub-space-constrj}.
Interested readers can see Appendix \ref{append:general-case-whole-space} for details; however, one may ignore it without missing the main points of this paper.

The theorem below provides the rationale for the surrogate distributionally robust optimizations.

\begin{theorem}[Surrogate Distributionally Robust Optimization]\label{thm:surrogate-dist-robust-opt-eps}
Let $(\bm x_1, L_1)$ solve the surrogate distributionally robust counterpart \eqref{eq:surrogate-dist-robust-opt-eps} at $\Pb$ for model \eqref{eq:true-opt}. Then $\bm x_1$ is distributionally robust (in the sense of Definitions \ref{def:sol-dist-robust} and \ref{def:dist-robust-opt-eps}) with robustness measure $L_1 + L_2$ where $L_2 \defeq |\E_{\Pb} h(\bmb x, \rv \xi) - \E_{\Po} h(\bm x_0, \rv \xi)|$. Likewise, suppose $(\bm x_1, L_1)$ solves the one-sided surrogate distributionally robust counterpart \eqref{eq:one-sided-surrogate-dist-robust-opt-eps} at $\Pb$ for model \eqref{eq:true-opt}. Then $\bm x_1$ is distributionally robust with one-sided robustness measure $L_1 + L_2$.
\end{theorem}
\begin{proof}
See Appendix \ref{append:surrogate-dist-robust-opt-eps} for the proof based on telescoping and triangle inequalities.
\stp
\end{proof}

\subsubsection{Min-Max Distributionally Robust Optimization}\label{subsubsec:min-max-DRO}
In what follows, we explain the rationale behind the min-max distributionally robust optimization \eqref{eq:dro-method}. That is, we aim to find the relation between the min-max distributionally robust optimization \eqref{eq:dro-method} and the proposed concept system of \quotemark{distributional robustness}.

\textit{Rationale Behind The Min-Max Distributional Robustness}: In operations research and machine learning practice, the most popular distributionally robust optimization model is the min-max model, which is formally defined in Definition \ref{def:min-max-dist-robust-opt}.

\begin{definition}[Min-Max Distributionally Robust Optimization]\label{def:min-max-dist-robust-opt}
The min-max distributionally robust counterpart for the model \eqref{eq:true-opt} is defined as
\begin{equation}\label{eq:min-max-dist-robust-opt}
    \begin{array}{cl}
       \displaystyle \min_{\bm x} \max_{\P} &\E_\P h(\bm x, \rv \xi) \\
       \text{s.t.} & \Delta(\P, \Po) \le \epsilon, \\
       & \bm x \in \cal X.
    \end{array}
\end{equation}
If \eqref{eq:min-max-dist-robust-opt} has a finite solution $(\bm x^*, \P^*)$, then $\bm x^*$ is the min-max (i.e., worst-case) distributionally robust solution at the least-favorable (i.e., worst-case) distribution $\P^*$. \stp
\end{definition}

We can show that the min-max distributionally robust optimization model is a practical instance of the one-sided distributionally robust optimization model in Definition \ref{def:one-sided-dist-robust-opt-eps}.
\begin{theorem}[Min-Max Distributionally Robust Optimization]\label{thm:min-max-dist-robust-opt}
Problem \eqref{eq:one-sided-dist-robust-opt-eps} is equivalent to Problem \eqref{eq:min-max-dist-robust-opt} in the sense that if $(\bm x^*, L^*)$ solves \eqref{eq:one-sided-dist-robust-opt-eps}, then $(\bm x^*, \P^*)$ solves \eqref{eq:min-max-dist-robust-opt} where $\P^*$ satisfies 
\begin{equation*}
\E_{\P^*} h(\bm x^*, \rv \xi) - \E_{\Po} h(\bm x_0, \rv \xi) = L^*;
\end{equation*}
the converse is also true: if $(\bm x^*, \P^*)$ solves \eqref{eq:min-max-dist-robust-opt}, then $(\bm x^*, L^*)$ solves \eqref{eq:one-sided-dist-robust-opt-eps} where $L^*$ satisfies the above display.
\end{theorem}
\begin{proof}
See Appendix \ref{append:min-max-dist-robust-opt}. The key is to show that Problem \eqref{eq:one-sided-dist-robust-opt-eps} can be reformulated to Problem \eqref{eq:min-max-dist-robust-opt}, and vice versa.
\stp
\end{proof}

Theorem \ref{thm:min-max-dist-robust-opt} explains why min-max distributionally robust optimization models are valid. However, min-max distributionally robust optimization models are only able to provide one-sided distributional robustness, that is, the upper bound of $\E_{\P}h(\bm x, \rv \xi)$ at $\bm x$. For additional discussions on this point, see Appendix \ref{append:min-max-DRO}, if interested.

\textit{Rationale Behind The Surrogate Min-Max Distributional Robustness}: In what follows, we explore the situation where only the nominal distribution $\Pb$ is available, which is the practical case as in \eqref{eq:dro-method}.

\begin{definition}[Surrogate Min-Max Distributionally Robust Optimization]\label{def:surrogate-min-max-dist-robust-opt}
The surrogate min-max distributionally robust counterpart for the model \eqref{eq:true-opt} at the surrogate $\Pb$ is defined as
\begin{equation}\label{eq:surrogate-min-max-dist-robust-opt}
    \begin{array}{cl}
       \displaystyle \min_{\bm x} \max_{\P} &\E_\P h(\bm x, \rv \xi) \\
       \text{s.t.} & \Delta(\P, \Pb) \le \epsilon, \\
       & \bm x \in \cal X,
    \end{array}
\end{equation}
which is an instance of \eqref{eq:dro-method} when the distributional class $\cal C$ in \eqref{eq:dro-method} is specified by a distributional ball. For a data-driven problem, $\Pb$ can be chosen as $\Pnh$. \stp
\end{definition}

Problem \eqref{eq:surrogate-min-max-dist-robust-opt} is therefore termed the \bfit{min-max distributionally robust counterpart} of the nominal Problem \eqref{eq:nominal-method}.

\begin{corollary}[Surrogate Min-Max Distributionally Robust Optimization]\label{cor:surrogate-min-max-dist-robust-opt}
Problem \eqref{eq:one-sided-surrogate-dist-robust-opt-eps} is equivalent to \eqref{eq:surrogate-min-max-dist-robust-opt}. Therefore, the solutions returned by surrogate min-max distributionally robust optimization models are distributionally robust for the model set centered at the nominal model \eqref{eq:nominal-method}.
\end{corollary}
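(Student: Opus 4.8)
The plan is to derive Corollary~\ref{cor:surrogate-min-max-dist-robust-opt} directly from Theorem~\ref{thm:min-max-dist-robust-opt} by a substitution argument, exploiting that the proof of that theorem never uses any special property of $\Po$ beyond $\bm x_0$ being its minimizer. Observe that the surrogate pair \eqref{eq:one-sided-surrogate-dist-robust-opt-eps}--\eqref{eq:surrogate-min-max-dist-robust-opt} is obtained from the $\Po$-centered pair \eqref{eq:one-sided-dist-robust-opt-eps}--\eqref{eq:min-max-dist-robust-opt} by the formal replacements $\Po \mapsto \Pb$ and $\bm x_0 \mapsto \bmb x$, where $\bmb x \defeq \argmin_{\bm x} \E_{\Pb} h(\bm x, \rv \xi)$ plays exactly the role that $\bm x_0 \defeq \argmin_{\bm x} \E_{\Po} h(\bm x, \rv \xi)$ plays in the original. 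Hence it suffices to re-run the equivalence argument of Theorem~\ref{thm:min-max-dist-robust-opt} with $\Pb$ in place of $\Po$.

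Concretely, I would fix $\bm x \in \cal X$ and eliminate the variable $L$ in \eqref{eq:one-sided-surrogate-dist-robust-opt-eps}: the constraint $\E_\P h(\bm x, \rv \xi) - \E_{\Pb} h(\bmb x, \rv \xi) \le L$ for every $\P$ with $\Delta(\P, \Pb) \le \epsilon$, together with minimization of $L$, forces the optimal inner value to be
\[
L(\bm x) = \max\Big\{0,\ \max_{\P:\, \Delta(\P,\Pb)\le\epsilon} \E_\P h(\bm x, \rv \xi) - \E_{\Pb} h(\bmb x, \rv \xi)\Big\}.
\]
Next I would argue that the outer $\max\{0,\cdot\}$, i.e.\ the $L \ge 0$ constraint, is automatically non-binding. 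This uses two facts: $\Pb \in B_\epsilon(\Pb)$ since $\Delta(\Pb,\Pb)=0\le\epsilon$, and $\E_{\Pb} h(\bm x, \rv \xi) \ge \E_{\Pb} h(\bmb x, \rv \xi)$ for all $\bm x$ by the definition of $\bmb x$; chaining these gives $\max_{\P} \E_\P h(\bm x, \rv \xi) \ge \E_{\Pb} h(\bm x, \rv \xi) \ge \E_{\Pb} h(\bmb x, \rv \xi)$, so the bracketed quantity is non-negative and $L(\bm x) = \max_{\P:\Delta(\P,\Pb)\le\epsilon}\E_\P h(\bm x,\rv\xi) - \E_{\Pb} h(\bmb x, \rv \xi)$.

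Finally I would minimize over $\bm x$. Since $\E_{\Pb} h(\bmb x, \rv \xi)$ is a fixed constant, namely the optimal nominal cost independent of the decision $\bm x$, subtracting it does not change the minimizing argument, so
\[
\min_{\bm x} L(\bm x) = \Big[\min_{\bm x}\max_{\P:\Delta(\P,\Pb)\le\epsilon} \E_\P h(\bm x, \rv \xi)\Big] - \E_{\Pb} h(\bmb x, \rv \xi),
\]
where the bracketed term is precisely the optimal value of \eqref{eq:surrogate-min-max-dist-robust-opt}. This identifies the common minimizer $\bm x^*$ and yields $\E_{\P^*} h(\bm x^*, \rv \xi) - \E_{\Pb} h(\bmb x, \rv \xi) = L^*$, the surrogate analog of Theorem~\ref{thm:min-max-dist-robust-opt}, establishing the claimed equivalence. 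The second sentence of the corollary then follows immediately: since $(\bm x^*, L^*)$ solving \eqref{eq:one-sided-surrogate-dist-robust-opt-eps} makes $\bm x^*$ a one-sided distributionally robust solution for the model set centered at \eqref{eq:surrogate-method} by Definition~\ref{def:one-sided-surrogate-dist-robust-opt-eps}, the same $\bm x^*$ returned by the min-max model \eqref{eq:surrogate-min-max-dist-robust-opt} inherits that robustness. I do not anticipate a genuine obstacle here, as the entire content is bookkeeping around a constant shift; the only point requiring care is verifying the $L \ge 0$ constraint is non-binding, which is exactly where the minimality of $\bmb x$ and the membership $\Pb \in B_\epsilon(\Pb)$ enter.
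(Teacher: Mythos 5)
Your proposal is correct and follows essentially the same route as the paper: the paper's proof is simply a pointer to Theorems~\ref{thm:surrogate-dist-robust-opt} and~\ref{thm:min-max-dist-robust-opt}, and your substitution $(\Po,\bm x_0)\mapsto(\Pb,\bmb x)$ followed by elimination of $L$ and dropping the constant $\E_{\Pb}h(\bmb x,\rv\xi)$ is exactly the intended re-run of the proof in Appendix~\ref{append:min-max-dist-robust-opt}. Your explicit check that the $L\ge 0$ constraint is non-binding (via $\Pb\in B_\epsilon(\Pb)$ and the minimality of $\bmb x$) is a detail the paper silently omits, and is a worthwhile addition.
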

\begin{proof}
Compare with Theorem \ref{thm:surrogate-dist-robust-opt-eps} and Theorem \ref{thm:min-max-dist-robust-opt}.
\stp
\end{proof}

Corollary \ref{cor:surrogate-min-max-dist-robust-opt} explains why the popular surrogate min-max distributionally robust optimization models \eqref{eq:dro-method} are valid in real-world applications, in the sense of distributional robustness.

\subsubsection{Robustness and Sensitivity}\label{subsec:robustness-sensitivity}
In the data-driven setup, there is a trade-off between the distributional robustness against distributional model perturbations and the sensitivity/specificity to the (training) data (i.e., the nominal model). To be specific, if a model set has a large robustness measure, the model set has a weak ability to hedge against model perturbations because the objective value is not still in this model set; cf. Definition \ref{def:dist-robust-opt-eps}. However, this model set tends to distinguish two similar but different data distributions; i.e., the model set has large resolution in identifying different data distributions. In contrast, if the model set has a small robustness measure, the objective value of this model set is insensitive (i.e., still) to model perturbations, however, this model set possibly cannot identify two similar but different data distributions either. In practice, we must balance between the robustness against perturbations and the sensitivity/specificity to the collected training data. This motivates us that generalization error bounds might be specified using the distributional model perturbations and the distributional robustness measures, which will be revisited in detail later in Subsection \ref{subsec:dist-robustness-and-gen-error}.

\begin{example}
The robustness-sensitivity trade-off can be intuitively understood from a life example. The position of a big stone is robust (i.e., still) against a light wind while the position of a piece of cloth is not. However, only this piece of cloth is an indicator of the existence of the light wind; we cannot identify this kind of slight change in air currents through the movement of the big stone. \stp
\end{example}

The remark below summarizes a new insight for the statistical machine learning community.
\begin{remark}[Robustness and Sensitivity]\label{rem:robustness-sensitivity}
There is a trade-off between the robustness (to the distributional uncertainty, that is, unseen data) and the sensitivity/specificity (to the distributional information, that is, training data) for a machine learning model.
\stp
\end{remark}

\subsection{Bayesian Methods}\label{subsec:bayesian-method}
In this subsection, we consider the Bayesian setting \eqref{eq:bayesian-method} to address the modeling uncertainty in $\Pb$ for $\Po$:
\[
\min_{\bm x \in \cal X} \E_{\bb Q} \E_{\P} h(\bm x, \rv \xi),
\]
where $\bb Q$ is a probability measure on $(\cal M (\Xi), \cal B_{\cal M (\Xi)})$ and $\cal B_{\cal M (\Xi)}$ is the Borel $\sigma$-algebra on $\cal M (\Xi)$; $\P$ is a random measure which is distributed according to $\bb Q$. 
In other words, although we do not know the true distribution $\Po$, we know that it is a realization drawn from the distribution $\bb Q$ rather than simply lies in a distributional ball (i.e., an uncertainty set). 
This philosophy can be straightforwardly generated in light of the difference between the Frequentists and the Bayesians: Frequentist statistics only assumes that an unknown parameter lies in a space (e.g., a subset of $\R^n$) but Bayesian statistics assumes that there exists a (prior) distribution for the unknown parameter. 
In Bayesian statistics, $\P$ is called a first-order probability measure and $\bb Q$ is called a second-order probability measure \citep{gaudard1989sigma}. 
In this case, $\bb Q$ can also be seen as a stochastic process whose realizations are probability measures and $\P$ is a realization of $\bb Q$ \citep{ferguson1973bayesian}, \citep[Chap. 3]{ghosal2017fundamentals}. 
To clarify further, suppose $(\Xi_1, \Xi_2, \cdots, \Xi_k)$ is an arbitrary $k$-partition of $\Xi$.
The probability vector $(\P(\Xi_1), \P(\Xi_2), \cdots, \P(\Xi_k))$ is a random vector whose distribution is specified by $\bb Q$. Hence, $(\bb Q(\Xi_1), \bb Q(\Xi_2), \cdots, \bb Q(\Xi_k))$ is a stochastic process indexed by the set $\{1,2,\cdots,k\}$ and $(\P(\Xi_1), \P(\Xi_2), \cdots, \P(\Xi_k))$ is a realization.\footnote{More generally, the stochastic process can be indexed by the Borel $\sigma$-algebra on $\Xi$, i.e., $\{\bb Q{(E)}\}$, $\forall E \in \cal B(\Xi)$.}
Note that the true distribution $\Po$ can also be seen as a realization of $\bb Q$ if $\Po$ is in the support set of $\bb Q$. A good prior $\bb Q$ should concentrate around $\Po$: it is most ideal that $\bb Q$ is a Dirac measure at $\Po$ and it is ideal if $\Po$ is the mean distribution under $\bb Q$.\footnote{The mean distribution under $\bb Q$ is $\P^\prime$ such that 
$
\P^\prime(E) = \int_{\R} \P(E) \bb Q(\d \P(E)),~\forall E \in \cal B(\Xi)
$. Note that, for a given Borel set $E \in \cal B(\Xi)$, $\P(E)$ is a random variable on $\R$.} 

\begin{example}
A concrete example of the Bayesian setting \eqref{eq:bayesian-method} is the parametric Bayesian method:
\begin{equation}\label{eq:bayesian-method-parametric}
\min_{\bm x} \E_{\bb Q_{\bm \theta}} \E_{\P_{\rv \xi; \bm \theta}} h(\bm x, \rv \xi),
\end{equation}
where $\P_{\rv \xi; \bm \theta}$ is the distribution of $\rv \xi$, parameterized by $\bm \theta \in \R^q$, and $\bb Q_{\bm \theta}$ is the distribution of $\bm \theta$. In this parametric setting, the true parameter $\bm \theta_0$ of the true distribution $\P_{\rv \xi; \bm \theta_0}$ is a realization from $\bb Q_{\bm \theta}$. For a specific parametric example, see \citet{wu2018bayesian}. In this paper, we focus on the general non-parametric case as in \eqref{eq:bayesian-method}.
\stp
\end{example}

The theorem below gives a reformulation of the Bayesian model \eqref{eq:bayesian-method}, which is new to the area of Bayesian nonparametrics. 

\begin{theorem}\label{thm:bayeisan-mean-dist}
If $\P^\prime$ is the mean of $\bb P$ under $\bb Q$ and $\E_{\bb Q}\E_{\P} |h(\bm x, \rv \xi)| < \infty$, we have $\E_{\bb Q}\E_{\P}h(\bm x, \rv \xi) = \E_{\P^\prime} h(\bm x, \rv \xi)$, for every $\bm x$.
\end{theorem}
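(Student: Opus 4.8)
The plan is to fix $\bm x$, abbreviate $g(\rv \xi) \defeq h(\bm x, \rv \xi)$, and prove the scalar identity $\E_{\bb Q}\E_{\P} g(\rv \xi) = \E_{\Pb} g(\rv \xi)$ by the standard measure-theoretic induction, bootstrapping from the defining property of the mean distribution. The footnote defines $\Pb$ precisely so that $\Pb(E) = \E_{\bb Q}[\P(E)]$ for every $E \in \cal B(\Xi)$ (reading $\P(E)$ as the real-valued random variable induced by $\P \sim \bb Q$). This is exactly the assertion of the theorem in the special case $g = \mathbbm{1}_E$: there $\E_{\P} g = \P(E)$ and $\E_{\Pb} g = \Pb(E)$, so $\E_{\bb Q}\E_{\P} g = \E_{\bb Q}[\P(E)] = \Pb(E) = \E_{\Pb} g$, giving the base case for free.

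First I would extend from indicators to non-negative simple functions $g = \sum_{i} a_i \mathbbm{1}_{E_i}$ with $a_i \ge 0$, using linearity of the inner expectation $\E_{\P}$, of the outer expectation $\E_{\bb Q}$, and of $\E_{\Pb}$; each term reproduces $a_i \Pb(E_i)$ on both sides. Next, for a general non-negative measurable $g$, I would take simple functions $g_k \uparrow g$ and apply the monotone convergence theorem three times: to pass $\E_{\P} g_k \uparrow \E_{\P} g$ inside, to pass the outer $\E_{\bb Q}$ through the increasing sequence $\P \mapsto \E_{\P} g_k$, and on the right to get $\E_{\Pb} g_k \uparrow \E_{\Pb} g$. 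Since the identity holds for each $g_k$, taking limits establishes it for every non-negative $g$, in particular for $g^+ \defeq h^+(\bm x, \cdot)$ and $g^- \defeq h^-(\bm x, \cdot)$ separately.

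Finally I would invoke the integrability hypothesis $\E_{\bb Q}\E_{\P} |h(\bm x, \rv \xi)| < \infty$, which forces both $\E_{\bb Q}\E_{\P} g^+$ and $\E_{\bb Q}\E_{\P} g^-$ to be finite, so that subtracting the two non-negative identities is legitimate (no $\infty - \infty$) and yields $\E_{\bb Q}\E_{\P} g = \E_{\Pb} g$ for the signed integrand. Equivalently, the whole argument can be packaged as Tonelli/Fubini applied to the joint law $\Gamma$ on $\cal M(\Xi) \times \Xi$ determined by $\Gamma(A \times E) = \int_A \P(E)\, \bb Q(\d \P)$, whose $\Xi$-marginal is by construction $\Pb$; the claim is then just that integrating $g$ against $\Gamma$ agrees with integrating against that marginal.

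The step I expect to require the most care is measurability: the outer expectation $\E_{\bb Q}[\E_{\P} g]$ is only meaningful once the map $\P \mapsto \E_{\P} g = \int g\, \d \P$ is shown to be $\cal B_{\cal M(\Xi)}$-measurable. This is resolved by the same induction rather than a separate argument: for $g = \mathbbm{1}_E$ the map $\P \mapsto \P(E)$ is measurable because $\cal B_{\cal M(\Xi)}$ is generated by the evaluation maps $\P \mapsto \P(E)$, finite non-negative combinations of measurable functions stay measurable, and increasing pointwise limits of measurable functions are measurable. Hence measurability propagates in lockstep with each stage above, and no hypothesis beyond the stated integrability is needed.
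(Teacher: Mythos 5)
Your proof is correct. It reaches the same identity by the same essential mechanism as the paper — the interchange of $\E_{\bb Q}$ and $\E_{\P}$, anchored in the defining property $\Pb(E) = \E_{\bb Q}[\P(E)]$ of the mean distribution — but the execution differs: the paper's proof is a two-line appeal to the measure-theoretic Fubini theorem (writing $\E_{\bb Q}\E_{\P} h = \int_{\R}\int_{\Xi} h(\bm x, \rv \xi)\,\P(\d \rv \xi)\,\bb Q(\d \P(\d \rv \xi))$, swapping the integrals, and identifying $\int_{\R}\P(\d\rv\xi)\,\bb Q(\d\P(\d\rv\xi))$ with $\Pb(\d\rv\xi)$), whereas you unpack that appeal into the standard machine (indicators, simple functions, monotone convergence, positive/negative parts). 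What your route buys is rigor at the two points the paper glosses over: first, the paper's notation $\bb Q(\d \P(\d \rv \xi))$ does not literally present a product measure to which the textbook Fubini statement applies, and your explicit joint law $\Gamma$ on $\cal M(\Xi)\times\Xi$ (or, equivalently, the induction from indicators) is the honest way to justify the interchange; second, you address the measurability of $\P \mapsto \E_{\P} g$ with respect to $\cal B_{\cal M(\Xi)}$, which the paper never mentions but which is needed for $\E_{\bb Q}\E_{\P} h$ to be well defined. You also use the integrability hypothesis $\E_{\bb Q}\E_{\P}|h(\bm x,\rv\xi)| < \infty$ exactly where it is needed (to rule out $\infty - \infty$ when subtracting the $h^{+}$ and $h^{-}$ identities), which is where it implicitly enters the paper's Fubini citation as well. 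The paper's version is shorter and delegates the work to a cited theorem; yours is self-contained.
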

\begin{proof}
See Appendix \ref{append:bayeisan-mean-dist} for the proof based on measure-theoretic Funibi's theorem.
\stp
\end{proof}

Theorem \ref{thm:bayeisan-mean-dist} generalizes \citet[Thm. 3]{ferguson1973bayesian}, in which only the Dirichlet process is investigated. Using a Dirichlet process $\bb Q$ as a prior means that the probability vector $(\P(\Xi_1), \P(\Xi_2), \cdots, \P(\Xi_k))$ follows a Dirichlet distribution. However, Theorem \ref{thm:bayeisan-mean-dist} is more a theoretical justification than a useful instruction because, in practice, the most widely used priors on the space of non-parametric probability measures are Dirichlet-process priors and tail-free process priors, attributed to their conjugacy; see \citet[Chap. 3]{ghosal2017fundamentals}.

The theorem below shows that the solution returned by the Bayesian counterpart \eqref{eq:bayesian-method} for the surrogate model \eqref{eq:nominal-method} is Probably Approximately Distributionally Robust (PADR).
\begin{theorem}\label{thm:bayesian-method}
Suppose $h$ is a non-negative cost function.\footnote{This non-negativity condition is standard in machine learning; see, e.g., \citet{wang2022comprehensive}.} 
The solution $\bm x^*$ of the Bayesian counterpart \eqref{eq:bayesian-method} for the surrogate model \eqref{eq:nominal-method} is probably approximately distributionally robust with absolute robustness measure $L$ with probability at least 
\begin{equation*}
\max\left\{0,~~~
1 - \displaystyle \frac{\E_{\bb Q}\E_{\P} h(\bm x^*, \rv \xi) + \E_{\Po} h(\bm x_0, \rv \xi)}{L}\right\}.
\end{equation*}
\end{theorem}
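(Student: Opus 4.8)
The plan is to read the claim as a PAC analogue of Definition~\ref{def:dist-robust-opt-eps}: namely, that with probability at least the stated quantity over the random draw $\P \sim \bb Q$, the solution $\bmb x$ satisfies $|\E_\P h(\bmb x, \rv \xi) - \E_{\Po} h(\bm x_0, \rv \xi)| \le L$. Here the deterministic ball $B_\epsilon(\Po)$ of Definition~\ref{def:dist-robust-opt-eps} is replaced by the random first-order measure $\P$ governed by the second-order measure $\bb Q$, so the robustness guarantee becomes probabilistic rather than uniform. The entire argument then reduces to a single application of Markov's inequality, and the non-negativity hypothesis on $h$ is exactly what makes this go through.

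First I would introduce the random variable $Y \defeq \E_\P h(\bmb x, \rv \xi)$, viewed as a measurable function of $\P \sim \bb Q$, together with the constant $c \defeq \E_{\Po} h(\bm x_0, \rv \xi)$. Both are non-negative precisely because $h \ge 0$, so that $\E_\P h(\bmb x, \rv \xi) \ge 0$ for each realization of $\P$ and $\E_{\Po} h(\bm x_0, \rv \xi) \ge 0$. The target probability is then $\Pr_{\P \sim \bb Q}[|Y - c| \le L]$, and I would bound its complement $\Pr[|Y - c| > L]$ from above.

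The key step is the crude bound $|Y - c| \le Y + c$, valid because $Y$ and $c$ are both non-negative. This yields the event inclusion $\{|Y - c| > L\} \subseteq \{Y + c > L\}$, after which Markov's inequality applied to the non-negative random variable $Y + c$ gives
\[
\Pr[|Y - c| > L] \le \Pr[Y + c > L] \le \frac{\E_{\bb Q}[Y] + c}{L} = \frac{\E_{\bb Q}\E_\P h(\bmb x, \rv \xi) + \E_{\Po} h(\bm x_0, \rv \xi)}{L}.
\]
Taking complements produces the stated lower bound, and since any probability is non-negative I would finish by intersecting with the trivial bound $0$, which supplies the outer $\max\{0, \cdot\}$. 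Note that the cruder bound $|Y-c| \le Y+c$ (rather than a sharper one-sided estimate) is precisely what delivers the clean symmetric numerator in the claimed expression.

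Since the computation is short, the only thing demanding care — the ``main obstacle'' — is the modelling step rather than any algebra: one must verify that $Y = \E_\P h(\bmb x, \rv \xi)$ is a genuine $\bb Q$-measurable random variable, so that $\E_{\bb Q}[Y] = \E_{\bb Q}\E_\P h(\bmb x, \rv \xi)$ is well defined and Markov applies, and one must commit to the PAC reading of ``probably approximately distributionally robust'' given above. It is worth remarking that the minimality of the Bayesian solution $\bmb x$ is not needed for the inequality — it holds verbatim for every $\bm x \in \cal X$ — but choosing $\bmb x$ to minimize $\E_{\bb Q}\E_\P h(\bm x, \rv \xi)$ makes the numerator, and hence the bound, as tight as possible.
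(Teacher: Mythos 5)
Your proposal is correct and uses exactly the same ingredients as the paper's proof (Markov's inequality plus the triangle bound $|Y-c|\le Y+c$ from non-negativity of $h$); the only cosmetic difference is that you apply the pointwise bound before Markov, whereas the paper applies Markov to $|Y-c|$ first and then bounds $\E_{\bb Q}|Y-c|\le \E_{\bb Q}[Y]+c$. Your closing remarks on measurability and on the role of the minimizer $\bmb x$ are consistent with the paper's reading.
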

\begin{proof}
For every $\bm x$, since $\P$ is a random measure, $\E_{\P} h(\bm x, \rv \xi)$ is a random variable and $\E_{\bb Q} \E_{\P} h(\bm x, \rv \xi)$ is its mean. According to Markov's inequality, we have
\begin{equation}\label{eq:bayesian-method-property}
\begin{array}{cll}
\bb Q[|\E_{\P} h(\bm x, \rv \xi) - \E_{\Po} h(\bm x_0, \rv \xi)| \le L] 
&\ge 1 - \displaystyle \frac{\E_{\bb Q}|\E_{\P} h(\bm x, \rv \xi) - \E_{\Po} h(\bm x_0, \rv \xi)|}{L}, &\forall \bm x, \\
& \ge 1 - \displaystyle \frac{\E_{\bb Q}|\E_{\P} h(\bm x, \rv \xi)| + |\E_{\Po} h(\bm x_0, \rv \xi)|}{L}, &\forall \bm x, \\
& = 1 - \displaystyle \frac{\E_{\bb Q}\E_{\P} h(\bm x, \rv \xi) + \E_{\Po} h(\bm x_0, \rv \xi)}{L}, &\forall \bm x.
\end{array}
\end{equation}
Eq. \eqref{eq:bayesian-method-property} implies that whenever $\E_{\bb Q} \E_{\P} h(\bm x, \rv \xi)$ is minimized by $\bm x^*$ through \eqref{eq:bayesian-method}, the solution $\bm x^*$ is distributionally robust with absolute robustness measure $L$ with probability at least 
\[
\max\left\{0,~~~
1 - \displaystyle \frac{\E_{\bb Q}\E_{\P} h(\bm x^*, \rv \xi) + \E_{\Po} h(\bm x_0, \rv \xi)}{L}\right\}.
\]
This completes the proof. \stp
\end{proof}

Note that, in Theorem \ref{thm:bayesian-method}, $L$ needs to be specified to a sufficiently large value. Otherwise, the probability on the right side of \eqref{eq:bayesian-method-property} cannot be sufficiently large. However, for a fixed probability level, minimizing $\E_{\bb Q}\E_{\P} h(\bm x, \rv \xi)$ implies reducing $L$.

\begin{remark}\label{rem:bayesian-gap}
Theorem \ref{thm:bayesian-method} can be alternatively stated as follows. Suppose $h$ is non-negative. We have
\begin{equation}\label{eq:bayesian-gap}
\bb Q[\E_{\P} h(\bm x, \rv \xi) \le L] 
\ge 1 - \displaystyle \frac{\E_{\bb Q} \E_{\P} h(\bm x, \rv \xi)}{L}, ~~~ \forall \bm x,
\end{equation}
which also justifies the Bayesian model \eqref{eq:bayesian-method}, in the sense of bounding generalization error. However, \eqref{eq:bayesian-gap} does not specify how close $\E_{\P} h(\bm x, \rv \xi)$ is to $\E_{\Po} h(\bm x_0, \rv \xi)$. \stp
\end{remark}

\textit{Data-Driven Case:}
In the data-driven setting, the most popular non-parametric Bayesian prior for $\bb Q$ is the Dirichlet-process prior, whose posterior mean distribution, after observing $n$ i.i.d.~samples, is given by \citep{ferguson1973bayesian}, \citep[Chap. 3]{ghosal2017fundamentals}
\[
\frac{\alpha}{\alpha + n} \Ph + \frac{n}{\alpha + n} \Pnh,
\]
where $\Ph$ is a prior estimate of $\Po$ based on our prior belief and $\alpha$ is a non-negative scalar used to adjust our trust level of $\Ph$: the larger the value of $\alpha$, the more trust we have towards $\Ph$. This can be seen as a mixture distribution of $\Ph$ and $\Pnh$ with mixing weights $\frac{\alpha}{\alpha+n}$ and $\frac{n}{\alpha+n}$, respectively: i.e., a weighted combination of prior knowledge and data evidence. As $n \to \infty$, the weight of the prior belief $\Ph$ decays quickly, which is consistent with our intuition that as the sample size gets larger, we should trust more on the empirical distribution $\Pnh$ due to the concentration property of $\Pnh$, i.e., the weak convergence of $\Pnh$ to $\Po$ (e.g., recall the Portmanteau theorem). One may also reminisce about the concentration property of the empirical distribution $\Pnh$ defined by Wasserstein distance;\footnote{Compare with \eqref{eq:wasserstein-concentration} in Appendix \ref{append:measures-balls}.} note that the Wasserstein distance metrizes this weak convergence \citep{weed2019sharp}.

As a result of Theorem \ref{thm:bayeisan-mean-dist}, if we use the Dirichlet-process prior, the Bayesian model \eqref{eq:bayesian-method} is particularized into
\begin{equation}\label{eq:dro-saa-obj}
\min_{\bm x} \frac{\alpha}{\alpha + n} \E_{\Ph} h(\bm x, \rv \xi) + \frac{n}{\alpha + n} \E_{\Pnh} h(\bm x, \rv \xi).
\end{equation}
We can further generalize \eqref{eq:dro-saa-obj} into
\begin{equation}\label{eq:dro-saa-obj-general}
\min_{\bm x} \beta_n \E_{\Ph} h(\bm x, \rv \xi) + (1-\beta_n) \E_{\Pnh} h(\bm x, \rv \xi),
\end{equation}
where $\beta_n \in [0,1]$ is not necessarily equal to $\frac{\alpha}{\alpha + n}$ but we still require that $\beta_n \to 0$, as $n \to \infty$. The example below shows an application of the learning model \eqref{eq:dro-saa-obj-general} in data augmentation \citep{cui2015data,shorten2019survey,shorten2021text} and adversarial learning.
\begin{example}[Data Augmentation]\label{eg:data-agumentation}
    In data-driven machine learning, $\Ph$ can be seen as a perturbation distribution used to augment the training data set $\{\xi_i\}_{i \in [n]}$; for a specific example, recall the panda-gibbon example in the adversarial learning framework \citep{goodfellow2015explaining}. To clarify further, instead of directly using the empirical distribution $\Pnh$ to solve a learning problem, we can use data augment techniques (e.g., Gaussian perturbations for images, geometric transformations for images, word shuffling/insertion/deletion for texts, noise injection for audios) to construct a surrogate distribution $\beta_n \Ph + (1-\beta_n) \Pnh$ to solve the problem. The practical benefit of data augmentation has been widely reported in, e.g., \citet{cui2015data,shorten2019survey,shorten2021text}.
    \stp
\end{example}

\subsection{Regularized Sample-Average Approximation}\label{subsec:regularized-ERM}
We re-arrange \eqref{eq:dro-saa-obj-general} to see
\begin{equation}\label{eq:regularization-scaled}
(1 - \beta_n) \min_{\bm x} \E_{\Pnh} h(\bm x, \rv \xi) + \lambda_n \cdot f(\bm x),
\end{equation}
which is equivalent to solve
\begin{equation}\label{eq:regularization}
\min_{\bm x} \E_{\Pnh} h(\bm x, \rv \xi) + \lambda_n \cdot f(\bm x),
\end{equation}
where 
\[
\lambda_n \defeq \frac{\beta_n}{1 - \beta_n} \text{~~~~~~~~and~~~~~~~~} f(\bm x) \defeq \E_{\Ph} h(\bm x, \rv \xi).
\]
Obviously, \eqref{eq:regularization} is a regularized SAA (i.e., regularized ERM) model, which is popular in applied statistics and, especially, machine learning, where $f(\bm x)$ is a regularizer (e.g., any norm of $\bm x$). One may recall the Ridge regression where $f(\bm x) \defeq \|\bm x\|_2$ and LASSO (least absolute shrinkage and selection operator) regression where $f(\bm x) \defeq \|\bm x\|_1$. 
The example below reveals the connection between the data augmentation techniques and the regularization techniques.
\begin{example}\label{eg:data-agumentation-regularizer}
    (Continued from Example \ref{eg:data-agumentation}.) Any data augmentation technique \eqref{eq:dro-saa-obj-general} is associated with a data perturbation distribution $\Ph$, and therefore, any data augmentation technique is equivalent to a regularization method \eqref{eq:regularization} where the regularizer $f(\bm x) \defeq \E_{\Ph} h(\bm x, \rv \xi)$. Note that $\Ph$ can be dependent on $\Pnh$ (e.g., geometric transformations for images) and $\Ph$ can also be independent of $\Pnh$ (e.g., Gaussian perturbations for images).
    \stp
\end{example}

Example \ref{eg:data-agumentation-regularizer} serves as a theoretical justification of the equivalence between the data augmentation techniques and the regularization effect; for existing empirical justifications, see, e.g., the success of convolutional neural networks using perturbed training data sets \citet{lecun1998gradient,santos2022avoiding}.

The theorem below suggests the condition under which a regularization method \eqref{eq:regularization} can be transformed to a Bayesian method \eqref{eq:bayesian-method}.
\begin{theorem}\label{thm:regularization-method}
Let $\bb Q$ be a Dirichlet-process prior. For every specified regularizer $f(\bm x)$, if there exists a probability measure $\Ph$ such that
\begin{equation}\label{eq:RSAA-bayes-condition}
 f(\bm x) = \E_{\Ph}h(\bm x, \rv \xi),~~~\forall \bm x,
\end{equation}
then the regularized SAA model \eqref{eq:regularization} is equivalent to the Bayesian model \eqref{eq:bayesian-method}.
Therefore, any Bayesian model \eqref{eq:bayesian-method} is adding a regularizer $f(\bm x)$ (induced by $\Ph$) to the empirical model $\min_{\bm x} \E_{\Pnh} h(\bm x, \rv \xi)$. Also, any regularized SAA optimization method is a Bayesian method whose solution $\bm x^*$ is probably approximately
distributionally robust with absolute robustness measure $L$ with probability at least 
\[
\max\left\{0,~~~
1 - \displaystyle \frac{\E_{\bb Q}|\E_{\P} h(\bm x^*, \rv \xi) - \E_{\Po} h(\bm x_0, \rv \xi)|}{L}\right\},
\] 
where $\bb Q$ is a Dirichlet-process-like prior with posterior mean $\beta_n \Ph + (1 - \beta_n) \Pnh$. Further, if $h$ is non-negative, which is usually the case in supervised machine learning \citep{wang2022comprehensive}, $\bm x^*$ is probably approximately distributionally robust with absolute robustness measure $L$ with probability at least 
\[
\begin{array}{l}
1 - \displaystyle \frac{\E_{\Po} h(\bm x_0, \rv \xi)}{L} - \displaystyle \frac{\E_{\bb Q} \E_{\P} h(\bm x^*, \rv \xi)}{L}, \\
\quad \quad =
1 - \displaystyle \frac{\E_{\Po} h(\bm x_0, \rv \xi)}{L} - \beta_n \displaystyle \frac{\E_{\Ph} h(\bm x^*, \rv \xi)}{L} - (1 - \beta_n) \displaystyle \frac{\E_{\Pnh} h(\bm x^*, \rv \xi)}{L},
\end{array}
\]
if it is non-negative.
\end{theorem}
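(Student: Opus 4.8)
The plan is to obtain the equivalence as a two-way consequence of Theorem \ref{thm:bayeisan-mean-dist} combined with a purely algebraic rearrangement, and then to derive the two probability bounds by specializing the Markov-inequality argument already used in the proof of Theorem \ref{thm:bayesian-method}. The whole statement is therefore ``assembled'' from the two preceding theorems rather than requiring a genuinely new estimate.

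First, for the direction from Bayesian to regularized, I would invoke Theorem \ref{thm:bayeisan-mean-dist} with $\bb Q$ a Dirichlet-process-like prior whose posterior mean is the mixture $\beta_n \Ph + (1 - \beta_n) \Pnh$: the Bayesian objective then collapses to $\E_{\bb Q}\E_{\P} h(\bm x, \rv \xi) = \beta_n \E_{\Ph} h(\bm x, \rv \xi) + (1 - \beta_n) \E_{\Pnh} h(\bm x, \rv \xi)$ for every $\bm x$, which is precisely \eqref{eq:dro-saa-obj-general}. Dividing through by $1 - \beta_n > 0$ (the benign regime, since $\beta_n \to 0$) and discarding the resulting positive multiplicative constant, which leaves the minimizer unchanged, yields \eqref{eq:regularization} with $\lambda_n = \beta_n/(1-\beta_n)$ and $f(\bm x) = \E_{\Ph} h(\bm x, \rv \xi)$. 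For the converse I would take the hypothesis \eqref{eq:RSAA-bayes-condition} as given: for the target regularizer $f$, select $\Ph$ with $f(\bm x) = \E_{\Ph} h(\bm x, \rv \xi)$ and reverse the same chain, reconstructing the mixture posterior mean and hence the Bayesian model \eqref{eq:bayesian-method}.

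For the first PADR bound I would reproduce the opening step of the proof of Theorem \ref{thm:bayesian-method}: since $\E_{\P}h(\bmb x, \rv \xi)$ is a $\bb Q$-random variable with mean $\E_{\bb Q}\E_{\P}h(\bmb x, \rv \xi)$, Markov's inequality applied to $|\E_{\P}h(\bmb x, \rv \xi) - \E_{\Po} h(\bm x_0, \rv \xi)|$ delivers the probability at least $1 - \E_{\bb Q}|\E_{\P} h(\bmb x, \rv \xi) - \E_{\Po} h(\bm x_0, \rv \xi)|/L$, truncated at zero. For the sharper bound under non-negativity, I would apply the triangle inequality and then remove the absolute values using $h \ge 0$ to get $\E_{\bb Q}|\E_{\P} h(\bmb x, \rv \xi) - \E_{\Po} h(\bm x_0, \rv \xi)| \le \E_{\bb Q}\E_{\P} h(\bmb x, \rv \xi) + \E_{\Po} h(\bm x_0, \rv \xi)$, and finally expand the first summand via Theorem \ref{thm:bayeisan-mean-dist} into $\beta_n \E_{\Ph} h(\bmb x, \rv \xi) + (1 - \beta_n) \E_{\Pnh} h(\bmb x, \rv \xi)$, which produces the displayed equality.

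The main obstacle is conceptual rather than computational: it lies in justifying that the mixture $\beta_n \Ph + (1 - \beta_n) \Pnh$ genuinely arises as the posterior mean of a legitimate (Dirichlet-process-like) second-order prior $\bb Q$ for an arbitrary admissible $\beta_n$, not merely the canonical $\beta_n = \alpha/(\alpha + n)$ of the standard Dirichlet process, so that Theorem \ref{thm:bayeisan-mean-dist} is applicable. Alongside this one must verify the integrability hypothesis $\E_{\bb Q}\E_{\P}|h(\bm x, \rv \xi)| < \infty$ demanded by that theorem, and treat the degenerate case $\beta_n = 1$, where the rearrangement by the factor $1/(1-\beta_n)$ breaks down and $\lambda_n$ ceases to be finite.
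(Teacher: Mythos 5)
Your proposal is correct and follows essentially the same route as the paper's own (very terse) proof: the equivalence is obtained by combining Theorem \ref{thm:bayeisan-mean-dist} with the algebraic rearrangement leading from \eqref{eq:dro-saa-obj-general} to \eqref{eq:regularization} via $\lambda_n = \beta_n/(1-\beta_n)$ and $f(\bm x) = \E_{\Ph} h(\bm x,\rv\xi)$, and both probability bounds are instances of the Markov-inequality argument from the proof of Theorem \ref{thm:bayesian-method}. The obstacles you flag at the end (legitimacy of a second-order prior with posterior mean $\beta_n \Ph + (1-\beta_n)\Pnh$ for arbitrary $\beta_n$, the integrability hypothesis of Theorem \ref{thm:bayeisan-mean-dist}, and the degenerate case $\beta_n = 1$) are real and are left unaddressed by the paper as well.
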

\begin{proof}
See Appendix \ref{append:regularization-method}. Given a regularizer $f(\bm x)$, we can construct a Bayesian non-parametric prior $\Ph$, and vice versa. \stp
\end{proof}

\begin{remark}\label{rem:regularization-method}
In Theorem \ref{thm:regularization-method}, we require $\bb Q$ to be a Dirichlet-process prior so that the equivalence between the regularization model \eqref{eq:regularization-method} and the Bayesian model \eqref{eq:bayesian-method} holds. In general, the equivalence is no longer true but an inclusion relationship holds. To be specific, a regularization model \eqref{eq:regularization-method} can be transformed into a Bayesian model \eqref{eq:bayesian-method} by constructing $\Ph$ satisfying \eqref{eq:RSAA-bayes-condition}. But a Bayesian model \eqref{eq:bayesian-method} cannot be necessarily transformed into a regularization model \eqref{eq:regularization-method} if $\bb Q$ is not a Dirichlet process prior.
\stp
\end{remark}

The condition in \eqref{eq:RSAA-bayes-condition} is not restrictive, and $\Ph$ can be constructed using $f(\bm x)$ and $h(\bm x, \rv \xi)$. An example is as below.
\begin{example}
When $\Xi$ is bounded, $\Ph$ can be chosen as the $\bm x$-parametric uniform distribution on $\Xi$ with density function 
\[
\frac{\d\Ph}{\d \cal L} \defeq \frac{f(\bm x)}{\int_{\Xi} h(\bm x, \rv \xi) \d \xi},~~~\forall \bm x,
\]
where $\cal L$ is the Lebesgue measure on $(\Xi, \cal B_{\Xi})$ and the left side of the above display is the Radon--Nikodym derivative of $\Ph$ with respect to $\cal L$. 
\stp
\end{example}

For another example when $\Xi$ is unbounded, see Appendix \ref{append:unbounded-Xi}.

\subsubsection{Probabilistic Interpretations of Regularized SAA Models}\label{subsec:prob-interpretations-regularization}
Different from the deterministic learning problem \eqref{eq:true-opt}, the randomized learning counterpart [i.e.,~Gibbs algorithm \citep{germain2009pac}]
\begin{equation}\label{eq:true-opt-bayesian-learning}
    \min_{\bb Q_{\bm x}} \E_{\bb Q_{\bm x}} \E_{\Po} h(\bm x, \rv \xi),
\end{equation}
where $\bb Q_{\bm x}$ is a distribution on $\cal X$, is also standard in its own right. 


Usually, the regularized problem \eqref{eq:regularization} is interpreted from the randomized learning perspective \eqref{eq:true-opt-bayesian-learning} and a prior distribution $\Pi_{\bm x}$ on $\cal X$, which encodes our prior belief on the hypothesis class $\cal X$, is used. For example, one may reminisce about the PAC-Bayesian theory, that is, the information empirical risk minimization model \citep[Sec. 2]{germain2016pac}
\begin{equation}\label{eq:randomized-learning-R-ERM}
\min_{\bb Q_{\bm x}} \E_{\bb Q_{\bm x}} \E_{\Pnh} h(\bm x, \rv \xi) + \lambda \operatorname{KL}(\bb Q_{\bm x} || \Pi_{\bm x}),
\end{equation}
where $\operatorname{KL}(\bb Q_{\bm x} || \Pi_{\bm x})$ denotes the Kullback--Leibler divergence of $\bb Q_{\bm x}$ from $\Pi_{\bm x}$. Note that when $\bb Q_{\bm x}$ is a point-mass distribution, the information empirical risk minimization model reduces to \eqref{eq:regularization} and $f(\bm x) \defeq -\log \pi(\bm x)$, in which $\pi(\bm x)$ is the density function of $\Pi_{\bm x}$ with respect to the Lebesgue measure. In this case, if $\Pi_{\bm x}$ is a Gaussian (resp. Laplacian) distribution, we have the $2$-norm (resp. $1$-norm) regularized model.

In contrast to the PAC-Bayesian viewpoint \eqref{eq:randomized-learning-R-ERM}, this paper provides a new Bayesian probabilistic interpretation for regularized SAA methods if the condition \eqref{eq:RSAA-bayes-condition} holds; see Theorem \ref{thm:regularization-method}. The difference between the two Bayesian perspectives lies in over which space we assign a prior distribution: The prior distribution $\bb Q_{\bm x}$ on the hypothesis class $\cal X$ or the prior distribution $\bb Q$ of the data distribution $\P$ in the distribution class $\cal C$; cf. \eqref{eq:randomized-learning-R-ERM} and \eqref{eq:bayesian-method}. Hence, when we face distributional uncertainty in the nominal distribution $\Pb$ for the true distribution $\Po$, we have two possible \bfit{Bayesian} philosophies to hedge against it: The first one is to introduce a prior belief on the hypothesis to reduce the uncertainty (i.e., which hypotheses are relatively more important than others); the second one is to assign a prior belief on the nominal data distributions (i.e., which nominal data distributions are relatively more important than others). The interesting result is that the two ideas can lead to the same technical methodology---the regularized empirical risk minimization model \eqref{eq:regularization}.

The remark below summarizes a new insight for the statistical machine learning community.
\begin{remark}\label{rem:distributional-uncertainty}
When we hedge against the distributional uncertainty in the nominal data distribution, considering prior knowledge of the nominal data distribution class $\cal C$ has the same effect as considering prior knowledge of the hypothesis class $\cal X$.
\stp
\end{remark}

This philosophy can also be supported by \citet{bertsimas2018characterization} through examining the (conditional) equivalence between regularization and robustness in linear and matrix regression; see, e.g., Corollary 1 therein. The difference is that the equivalence between regularization and robustness in \citet{bertsimas2018characterization} is not in the distributional/probabilistic but in the deterministic sense: There is no probability distribution involved and only norm-based uncertainty sets for real-valued parameters are discussed.

\subsubsection{Understand Bias-Variance Trade-Off From Bayesian Perspective}\label{subsubsec:bias-variance}
Although regularized SAA methods are originally invented from the motivation of bias-variance trade-off (i.e., penalizing the complexity of the hypothesis class), we have shown that solving them is equivalent to solving Bayesian models, and therefore, regularized SAA methods are probably approximately distributionally robust optimization models. The bias-variance trade-off shows that by introducing a bias term $\lambda_n f(\bm x)$ for an estimator, it is possible to reduce the variance of the estimator. This can be explicitly understood from \eqref{eq:dro-saa-obj-general}: By introducing the bias term $\beta_n \E_{\Ph} h(\bm x, \rv \xi)$, the variance of $(1-\beta_n) \E_{\Pnh} h(\bm x, \rv \xi)$ reduces $(1-\beta_n)^2$ times compared with $\E_{\Pnh} h(\bm x, \rv \xi)$. To clarify further, for every $\bm x$, the mean of the objective of \eqref{eq:dro-saa-obj-general} is
    \[
    \begin{array}{cl}
    \E_{\Pon}[\beta_n \E_{\Ph} h(\bm x, \rv \xi) + (1-\beta_n) \E_{\Pnh} h(\bm x, \rv \xi)]
    
    &= \beta_n \E_{\Ph} h(\bm x, \rv \xi) + (1-\beta_n) \E_{\Pon}\E_{\Pnh} h(\bm x, \rv \xi) \\
    
    &= \beta_n \E_{\Ph} h(\bm x, \rv \xi) + (1-\beta_n) \E_{\Po} h(\bm x, \rv \xi),
    \end{array}
    \]
which is biased from the true mean $\E_{\Po} h(\bm x, \rv \xi)$. But the objective function of \eqref{eq:dro-saa-obj-general} has the variance that is $(1-\beta_n)^2$ times lower than that of $\E_{\Pnh} h(\bm x, \rv \xi)$. Note that if $\Ph$ could be elegantly given, the performance gap of the model \eqref{eq:dro-saa-obj-general} would be smaller than that of the SAA model $\min_{\bm x} \E_{\Pnh} h(\bm x, \rv \xi)$ because for every $\bm x$
\begin{equation}\label{eq:lower-gen-err}
\begin{array}{l}
|\beta_n \E_{\Ph} h(\bm x, \rv \xi) + (1-\beta_n) \E_{\Pnh} h(\bm x, \rv \xi)  - \E_{\Po} h(\bm x, \rv \xi)| \\
 \quad \quad \le \beta_n |\E_{\Ph} h(\bm x, \rv \xi) - \E_{\Po} h(\bm x, \rv \xi)| + (1-\beta_n) |\E_{\Pnh} h(\bm x, \rv \xi)  - \E_{\Po} h(\bm x, \rv \xi)| \\
 \quad \quad \le |\E_{\Pnh} h(\bm x, \rv \xi)  - \E_{\Po} h(\bm x, \rv \xi)|,
\end{array}
\end{equation}
if $|\E_{\Ph} h(\bm x, \rv \xi) - \E_{\Po} h(\bm x, \rv \xi)| \le |\E_{\Pnh} h(\bm x, \rv \xi)  - \E_{\Po} h(\bm x, \rv \xi)|$ (i.e., $\Ph$ is a good estimate of $\Po$; $\Ph$ is better than $\Pnh$). The remark below summarizes the main points above.

\begin{remark}\label{rem:variance-bias}
If $f(\bm x)$ is a good regularizer and $\lambda_n$ is a good regularization coefficient (in the sense that they can induce a better $(\beta_n,\Ph)$ through $\lambda_n \defeq \frac{\beta_n}{1 - \beta_n}$ and $f(\bm x) \defeq \E_{\Ph} h(\bm x, \rv \xi)$), the regularized SAA model \eqref{eq:regularization} has the potential to give lower generalization error than the standard SAA method because the former has a smaller robustness measure than that of the latter; cf. \eqref{eq:lower-gen-err}. Note that in the Bayesian setting \eqref{eq:bayesian-method}, the probably approximately distributional robustness measure $L$ in Theorem \ref{thm:bayesian-method} and \eqref{eq:bayesian-method-property} can be seen as a type of upper bounds of generalization error gaps. (For details, see Subsection \ref{subsec:dist-robustness-and-gen-error}.)
\stp
\end{remark}

\subsection{Distributional Robustness and Generalization Error}\label{subsec:dist-robustness-and-gen-error}

In this subsection, we suppose that the empirical distribution $\Pnh$ is involved. Let $\Pb$ be a surrogate of $\Po$ constructed from data. For example, $\Pb$ can be $\Pnh$ in an SAA model. For another example, $\Pb$ can be $\beta_n \Ph + (1-\beta_n) \Pnh$ in a regularized SAA (i.e., a Bayesian) model where $\Ph$ is a prior belief of $\Po$. Recall from Appendix \ref{append:gene-error} that in the machine learning literature, the probably approximately correct (PAC) upper bound $L_{\bm x, \eta}$ of the generalization error gap with probability at least $1-\eta$, at a solution $\bm x$, is defined as
\[
\Po^n[\E_{\Po} h(\bm x, \rv \xi) - \E_{\Pb} h(\bm x, \rv \xi) \le L_{\bm x, \eta}] 
\ge 1 - \eta
\]
for the one-sided case and as 
\[
\Po^n[|\E_{\Po} h(\bm x, \rv \xi) - \E_{\Pb} h(\bm x, \rv \xi)| \le L_{\bm x, \eta}] 
\ge 1 - \eta
\]
for the two-sided case. In addition, the upper bound $L_{\bm x}$ of the expected generalization error gap, at a solution $\bm x$, is defined as
\[
\E_{\Po^n} [\E_{\Po} h(\bm x, \rv \xi) - \E_{\Pb} h(\bm x, \rv \xi)] \le L_{\bm x}
\]
or
\[
\E_{\Po^n}[|\E_{\Po} h(\bm x, \rv \xi) - \E_{\Pb} h(\bm x, \rv \xi)|] \le L_{\bm x}. 
\]
In this subsection, we are concerned with the uniform generalization error gap $|\E_{\Po} h(\bm x, \rv \xi) - \E_{\Pb} h(\bm x, \rv \xi)|$, for every $\bm x$, and the ad-hoc generalization error gap $|\E_{\Po} h(\bmb x, \rv \xi) - \E_{\Pb} h(\bmb x, \rv \xi)|$ of the nominal model $\min_{\bm x} \E_{\Pb} h(\bm x, \rv \xi)$ at the solution $\bmb x$, where $\bmb x \in \argmin_{\bm x} \E_{\Pb} h(\bm x, \rv \xi)$.

Usually, the generalization error bounds of machine learning models are specified by
\begin{enumerate}[1)]
    \item Measure concentration inequalities such as Hoeffding's and Bernstein inequalities \citep[Chaps. 2-3]{wainwright2019high},  McDiarmid's inequality \citep{zhang2021concentration}, Variation-Based Concentration \citep[Thm. 1 and Cors. 1-2]{gao2022finite}, among many others \citep{zhang2021concentration}. In this case, the properties of the cost function $h(\bm x, \rv \xi)$ such as the boundedness and Lipschitz-norm are leveraged;
    \item Richness of hypothesis classes such as VC dimension and Rademacher complexity \citep[Chap. 4]{wainwright2019high};
    \item PAC-Bayesian arguments through the KL-Divergence of the posterior hypothesis distribution from the prior hypothesis distribution \citep[Sec. 2]{germain2016pac}, etc.;
    \item Information-theoretic methods through mutual information \citep{xu2017information} or Wasserstein distance \citep{wang2019information,rodriguez2021tighter} between the posterior hypothesis distribution and the (training) data distribution, etc.
\end{enumerate}
The PAC-Bayesian arguments describe the generalization error bounds of a machine learning model by the (statistical) difference between the prior distribution $\Pi_{\bm x}$ of the hypothesis class $\cal X$, as used in \eqref{eq:randomized-learning-R-ERM}, and the optimal randomized decision $\bb Q_{\bm x}$ that solves the randomized-learning model $\min_{\bb Q_{\bm x}} \E_{\bb Q_{\bm x}} \E_{\Po} h(\bm x, \rv \xi)$.
In contrast, information-theoretic methods depict the generalization error bounds of a machine learning model by the (statistical) difference between the \bfit{posterior} distribution $\bb Q_{\bm x}$ of hypothesis and the training data distribution $\Pnh$; i.e., how much does the posterior distribution $\bb Q_{\bm x}$ of hypothesis depend on the training data $\Pnh$?
In this subsection, we establish the generalization error bounds of a machine learning model using the (statistical) difference between the training data distribution $\Pnh$ (or the transformed training data distribution $\Pb$) and the population data distribution $\Po$, and using its distributional robustness measures. This is a new perspective to characterize and bound generalization errors, which justifies the practical benefits of distributional robust optimizations [i.e., \eqref{eq:dro-method}, \eqref{eq:surrogate-dist-robust-opt-eps} and \eqref{eq:one-sided-surrogate-dist-robust-opt-eps}] and further explains the generalization abilities of Bayesian methods \eqref{eq:bayesian-method} and regularized methods \eqref{eq:regularization-method}. The main motivation here is that the generalization errors of a machine learning model can be specified in several yet extremely different ways.

\subsubsection{Uniform Generalization Error Through Distributional Uncertainty}
The uniform generalization error bound is straightforward to establish due to the weak convergence of $\Pb$ to $\Po$, that is, $\Delta(\Po, \Pb) \to 0$, $\Pon$-almost surely \citep{weed2019sharp}, and the continuity of the linear functional $\P \mapsto \E_{\P} h(\bm x, \rv \xi)$, where $\Delta$ denotes the Wasserstein distance. (Note that the Wasserstein distance metrizes the weak topology on $\cal M(\Xi)$.) The fact below is standard and well-established in the mathematical statistics literature. We borrow it to describe and characterize generalization errors from a new perspective, that is, the perspective of distributional uncertainty.\footnote{Recall that the distributional uncertainty refers to the difference between $\Pb$ and $\Po$, i.e., the deviation of the nominal distribution from the underlying true distribution.}

\begin{fact}[Uniform Generalization Error Through Distributional Uncertainty]\label{prop:uniform-gen-bound}
For every $\bm x$, if the cost function $h(\bm x, \cdot)$ is $L_{\bm x}$-Lipschitz continuous in $\rv \xi$ on $\Xi$, we have
\[
|\E_{\Po} h(\bm x, \rv \xi) - \E_{\Pb} h(\bm x, \rv \xi)| \le L_{\bm x} \Delta(\Po, \Pb),~~~ \forall \bm x,
\]
$\Pon$-almost surely, where $\Delta$ is the order-$1$ Wasserstein distance.
\end{fact}
\begin{proof}
The statement is adapted from \citet[Thm.~3.1.1]{chen2020distributionally}; see Appendix \ref{append:uniform-gen-bound} for the detailed proof based on the definition of Wasserstein distance.
\stp
\end{proof}

The remark below reveals the usefulness of Fact \ref{prop:uniform-gen-bound} in machine learning.

\begin{remark}\label{rem:small-lipschitz-and-small-dist-undertainty}
The smaller the distributional uncertainty in $\Pb$ is, the smaller the generalization error is. In addition, the generalization error is also affected by the Lipschitz constant of the cost function, and therefore, a cost function that has a smaller Lipschitz constant is preferable; this explains why, for example, Huber's loss function is better than the square loss function because the former has a smaller Lipschitz constant. 
\stp
\end{remark}

Note that the variable $L$ (i.e., robustness measure) in Definition \ref{def:dist-robust-opt-eps} and Definition \ref{def:surrogate-dist-robust-opt-eps} depends on $\bm x$ because they are simultaneously involved in a common optimization problem \eqref{eq:dist-robust-opt-eps} and \eqref{eq:surrogate-dist-robust-opt-eps}, respectively. However, the Lipschitz constant $L_{\bm x}$ in Fact \ref{prop:uniform-gen-bound} depends on $\bm x$ just due to the innate property of the cost function $h(\bm x, \rv \xi)$.

\begin{remark}
The order-$1$ Wasserstein distance in Fact \ref{prop:uniform-gen-bound} can be replaced with any order-$p$ Wasserstein distance where $p \ge 2$ because for any two distributions $\P_1$ and $\P_2$, we have $W_{p}(\P_1, \P_2) \le W_{q}(\P_1, \P_2)$ where $1 \le p \le q \le \infty$. This fact is attributed to \citet[Prop. 3]{givens1984class}; see also \citet[Rem. 2.2]{romisch1991stability}.
\stp
\end{remark}

\begin{corollary}\label{cor:uniform-gen-bound}
For every $\bm x$, if the cost function $h(\bm x, \cdot)$ is $L_{\bm x}$-Lipschitz continuous in $\rv \xi$ on $\Xi$, we have the expected uniform generalization error bound $\E_{\Pon}|\E_{\Po} h(\bm x, \rv \xi) - \E_{\Pb} h(\bm x, \rv \xi)| \le L_{\bm x} \E_{\Pon}\Delta(\Po, \Pb)$, for every $\bm x$.
\stp
\end{corollary}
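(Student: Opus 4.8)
The plan is to obtain the corollary directly from Proposition \ref{prop:uniform-gen-bound} by integrating its almost-sure pointwise bound against the product measure $\Pon$. First I would fix an arbitrary $\bm x \in \cal X$ and invoke Proposition \ref{prop:uniform-gen-bound}, which guarantees that $|\E_{\Po} h(\bm x, \rv \xi) - \E_{\Pb} h(\bm x, \rv \xi)| \le L(\bm x) \Delta(\Po, \Pb)$ holds $\Pon$-almost surely. The key observation is that both sides of this inequality are measurable functions of the random sample that induces $\Pb$ (the left side through $\Pb$, the right side through $\Delta(\Po, \Pb)$), so the inequality is amenable to integration with respect to $\Pon$.

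Next I would apply the monotonicity of the Lebesgue integral: an inequality between random variables that holds almost surely is preserved by the expectation operator, whence $\E_{\Pon}|\E_{\Po} h(\bm x, \rv \xi) - \E_{\Pb} h(\bm x, \rv \xi)| \le \E_{\Pon}[L(\bm x)\Delta(\Po, \Pb)]$. Since $L(\bm x)$ is a deterministic scalar that depends only on $\bm x$ through the innate Lipschitz property of $h$ and carries no dependence on the random data, it factors out of the expectation to give $L(\bm x)\E_{\Pon}\Delta(\Po, \Pb)$. Because $\bm x$ was arbitrary, the bound then holds for every $\bm x$, which is exactly the claim.

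The only points that warrant a remark are measurability and integrability: I would note that $\Delta(\Po, \Pb)$ is a measurable function of the sample so that $\E_{\Pon}\Delta(\Po, \Pb)$ is well defined (possibly equal to $+\infty$, in which case the inequality is vacuous), and that the left-hand expectation inherits finiteness from the right-hand side whenever $\E_{\Pon}\Delta(\Po, \Pb) < \infty$. I do not anticipate any genuine obstacle here; the substantive content of the result resides entirely in Proposition \ref{prop:uniform-gen-bound}, and this corollary merely transports the almost-sure statement to an expected-value statement via monotone integration and the linearity that lets the constant $L(\bm x)$ pass through the integral.
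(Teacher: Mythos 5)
Your proposal is correct and matches what the paper intends: the corollary is stated without proof precisely because it follows from Proposition \ref{prop:uniform-gen-bound} by taking $\E_{\Pon}$ of the almost-sure inequality and pulling the deterministic constant $L(\bm x)$ out of the expectation. Your added remarks on measurability and possible non-finiteness of $\E_{\Pon}\Delta(\Po,\Pb)$ are sensible but do not change the argument.
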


The one-sided versions of Fact \ref{prop:uniform-gen-bound} and  Corollary \ref{cor:uniform-gen-bound} are straightforward to claim by requiring (the one-sided Lipschitz continuity)
$
h(\bm x, \rv \xi) - h(\bm x, \rvb \xi) \le L_{\bm x} \|\rv \xi - \rvb \xi\|
$ for every $\rv \xi, \rvb \xi \in \Xi$. We do not give details in this paper. 

A concrete example of Fact \ref{prop:uniform-gen-bound} is given as
\[
|\E_{\Po} h(\bm x, \rv \xi) - \E_{\Pnh} h(\bm x, \rv \xi)| \le L_{\bm x} \Delta(\Po, \Pnh),~~~ \forall \bm x,
\]
$\Pon$-almost surely.
The $L_{\bm x}$-Lipschitz continuity is not restrictive for a machine learning model because the cost function $h$ can be elegantly designed; cf. \citet{wang2022comprehensive}.

\subsubsection{Generalization Error Through Distributional Robustness}
In what follows, we focus on the generalization error of the nominal model $\min_{\bm x} \E_{\Pb} h(\bm x, \rv \xi)$ at its optimal solution $\bmb x$. Note that, in practical implementation of a machine learning model, we are more interested in the generalization capability of a specified hypothesis, for example, a regression model with already-trained coefficients. 

\begin{theorem}[Generalization Error Through Absolute Robustness Measure]\label{thm:gen-error-absolute}
Suppose $\Po$ is contained in $B_{\epsilon}(\Pb) \defeq \{\P \in \cal M(\Xi)| \Delta(\P, \Pb) \le \epsilon\}$ and $(\bm x^*, L^*)$ solves the surrogate distributionally robust optimization model \eqref{eq:surrogate-dist-robust-opt-eps}, i.e.,
\[
\begin{array}{cll}
   \displaystyle \min_{\bm x, L}  &  L \\
   \text{s.t.} & |\E_\P h(\bm x, \rv \xi) - \E_{\Pb} h(\bmb x, \rv \xi)| \le L, & \forall \P: \Delta(\P, \Pb) \le \epsilon, \\
   & \bm x \in \cal X.
\end{array}
\]
If $h(\bm x, \rv \xi)$ is $L(\rv \xi)$-Lipschitz continuous in $\bm x$ on $\cal X$, for every $\rv \xi \in \Xi$, then the generalization error gap $|\E_{\Po} h(\bmb x, \rv \xi) - \E_{\Pb} h(\bmb x, \rv \xi)|$ of the nominal model $\min_{\bm x} \E_{\Pb} h(\bm x, \rv \xi)$ is upper bounded by 
\[
\|\bmb x - \bm x^*\| \cdot \E_{\Po} L(\rv \xi) + L^*,
\]$\Pon$-almost surely. 
In addition, the generalization error gap $|\E_{\Po} h(\bm x^*, \rv \xi) - \E_{\P^*} h(\bm x^*, \rv \xi)|$ of the surrogate distributionally robust optimization model \eqref{eq:surrogate-dist-robust-opt-eps} is upper bounded by \[2L^*,\] $\Pon$-almost surely, where
\[
\P^* \in \argmax_{\P:~\Delta(\P, \Pb) \le \epsilon}|\E_{\P} h(\bm x^*, \rv \xi) - \E_{\Pb} h(\bmb x, \rv \xi)|.
\]
Note that the equality $|\E_{\P^*} h(\bm x^*, \rv \xi) - \E_{\Pb} h(\bmb x, \rv \xi)| = L^*$ holds.
\end{theorem}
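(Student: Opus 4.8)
The plan is to derive both bounds from one feasibility observation combined with repeated triangle inequalities. Since $\Po \in B_{\epsilon}(\Pb)$ by hypothesis, and $(\bm x^*, L^*)$ is optimal (hence feasible) for the surrogate model \eqref{eq:surrogate-dist-robust-opt-eps}, its defining constraint may be instantiated at $\P = \Po$, giving
\[
|\E_{\Po} h(\bm x^*, \rv \xi) - \E_{\Pb} h(\bmb x, \rv \xi)| \le L^*.
\]
This inequality holds $\Pon$-almost surely, because the inclusion $\Po \in B_{\epsilon}(\Pb)$ is the assumed $\Pon$-almost-sure event (recall $\Pb$, $\bmb x$, $\bm x^*$, $L^*$ are all random through the sample). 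It is the workhorse for both parts.

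For the first bound, I would insert $\E_{\Po} h(\bm x^*, \rv \xi)$ as a pivot between $\E_{\Po} h(\bmb x, \rv \xi)$ and $\E_{\Pb} h(\bmb x, \rv \xi)$ and split by the triangle inequality. The second summand is exactly the displayed feasibility inequality, contributing $L^*$. For the first summand I invoke the $L(\rv \xi)$-Lipschitz continuity of $h$ in $\bm x$: pointwise $|h(\bmb x, \rv \xi) - h(\bm x^*, \rv \xi)| \le L(\rv \xi)\,\|\bmb x - \bm x^*\|$; since $\|\bmb x - \bm x^*\|$ is independent of $\rv \xi$, integrating against $\Po$ and pulling this constant out of the expectation yields $|\E_{\Po} h(\bmb x, \rv \xi) - \E_{\Po} h(\bm x^*, \rv \xi)| \le \|\bmb x - \bm x^*\| \cdot \E_{\Po} L(\rv \xi)$. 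Adding the two contributions delivers the claimed bound $\|\bmb x - \bm x^*\| \cdot \E_{\Po} L(\rv \xi) + L^*$.

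For the second bound I apply the same constraint a second time, now at the worst-case distribution $\P^*$. Reading $\P^*$ as the least-favorable distribution of the equivalent min-max model \eqref{eq:surrogate-min-max-dist-robust-opt} (via Corollary \ref{cor:surrogate-min-max-dist-robust-opt}), it satisfies $\Delta(\P^*, \Pb) \le \epsilon$, so $\P^* \in B_{\epsilon}(\Pb)$ and the feasibility constraint gives $|\E_{\P^*} h(\bm x^*, \rv \xi) - \E_{\Pb} h(\bmb x, \rv \xi)| \le L^*$ as well. Combining this with the instantiation at $\Po$ through the triangle inequality, using $\E_{\Pb} h(\bmb x, \rv \xi)$ as the pivot, immediately yields $|\E_{\Po} h(\bm x^*, \rv \xi) - \E_{\P^*} h(\bm x^*, \rv \xi)| \le 2L^*$.

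The argument is essentially mechanical, so the only points demanding care are: first, verifying that both $\Po$ and $\P^*$ genuinely lie in $B_{\epsilon}(\Pb)$ so the feasibility constraint legitimately applies at each—for $\P^*$ this rests on the min-max reformulation; second, propagating the $\Pon$-almost-sure qualifier from the random inclusion $\Po \in B_{\epsilon}(\Pb)$ through to the final inequalities; and third, the implicit finiteness of $\E_{\Po} L(\rv \xi)$, needed for the Lipschitz step to produce a finite bound. I expect none of these to present a genuine difficulty, so the main content is simply the careful bookkeeping of which pivot distribution and which pivot decision is used in each triangle-inequality split.
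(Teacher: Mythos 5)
Your proposal is correct and follows essentially the same route as the paper's proof in Appendix \ref{append:gen-error-absolute}: instantiate the feasibility constraint at $\Po$ (and at $\P^*$), then telescope with the appropriate pivot and apply the triangle inequality together with the Lipschitz bound $\E_{\Po}|h(\bmb x, \rv \xi) - h(\bm x^*, \rv \xi)| \le \|\bmb x - \bm x^*\|\cdot\E_{\Po} L(\rv \xi)$. The only cosmetic difference is that the paper asserts $|\E_{\P^*} h(\bm x^*, \rv \xi) - \E_{\Pb} h(\bmb x, \rv \xi)| = L^*$ with equality, whereas you use only the inequality $\le L^*$, which suffices.
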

\begin{proof}
See Appendix \ref{append:gen-error-absolute} for the proof based on telescoping and triangle inequalities.
\stp
\end{proof}

Note that Theorem \ref{thm:gen-error-absolute} implies another generalization error bound through absolute robustness measure $L^*$: i.e.,  
$|\E_{\Po} h(\bm x^*, \rv \xi) - \E_{\Pb} h(\bmb x, \rv \xi)| \le L^*$. 
Note also that Theorem \ref{thm:gen-error-absolute} holds almost surely, not in the PAC sense, and therefore, generalization error bounds specified by absolute robustness measures are stronger than those specified in the PAC sense.

\begin{remark}
If $\Po$ is contained in $B_{\epsilon}(\Pb) \defeq \{\P \in \cal M(\Xi)| \Delta(\P, \Pb) \le \epsilon\}$ in $\Pon$-probability,\footnote{For example, see \eqref{eq:wasserstein-concentration} in Appendix \ref{append:measures-balls}.} then Theorem \ref{thm:gen-error-absolute} holds in $\Pon$-probability. \stp
\end{remark}

\begin{corollary}[Generalization Error Through Absolute Robustness Measure]\label{cor:gen-error-absolute}
Under the settings of Theorem \ref{thm:gen-error-absolute}, the expected generalization error gaps are given by
\[
\E_{\Po^{n+1}} L(\rv \xi) \|\bmb x - \bm x^*\| + \E_{\Pon}L^*,
\]
and $2\E_{\Pon}L^*$, respectively.
Note that $\bmb x$, $\bm x^*$, and $L^*$ depend on the specific choice of the training data.
\end{corollary}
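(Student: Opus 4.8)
The plan is to derive both expected-error statements by integrating, against the sampling distribution $\Pon$ of the training data, the two almost-sure pointwise bounds already furnished by Theorem \ref{thm:gen-error-absolute}. Since that theorem guarantees, $\Pon$-almost surely,
\[
|\E_{\Po} h(\bmb x, \rv \xi) - \E_{\Pb} h(\bmb x, \rv \xi)| \le \|\bmb x - \bm x^*\| \cdot \E_{\Po} L(\rv \xi) + L^*
\]
and
\[
|\E_{\Po} h(\bm x^*, \rv \xi) - \E_{\P^*} h(\bm x^*, \rv \xi)| \le 2 L^*,
\]
the entire argument reduces to applying $\E_{\Pon}[\cdot]$ to each inequality and invoking monotonicity and linearity of the expectation. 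No new inequality is needed beyond what Theorem \ref{thm:gen-error-absolute} supplies.

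First I would take $\E_{\Pon}$ of the first display. Monotonicity preserves the inequality, and linearity splits the right-hand side into $\E_{\Pon}[\|\bmb x - \bm x^*\| \cdot \E_{\Po} L(\rv \xi)] + \E_{\Pon}[L^*]$. The one point deserving care is the bookkeeping of the probability space: the quantities $\bmb x$, $\bm x^*$, and $L^*$ are measurable functions of the training sample $\{\rv \xi_i\}_{i\in[n]} \sim \Pon$, whereas the inner integral $\E_{\Po} L(\rv \xi)$ is taken over a \emph{fresh} draw $\rv \xi \sim \Po$ that is independent of the training data. I would therefore pass to the augmented $(n+1)$-fold product $\Po^{n+1}$, under which the training sample and the test point $\rv \xi$ are jointly independent. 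Because the integrand $L(\rv \xi)\,\|\bmb x - \bm x^*\|$ is non-negative, Tonelli's theorem lets me merge the inner $\Po$-integral over $\rv \xi$ with the outer $\Pon$-integral over the training data, and independence then gives
\[
\E_{\Pon}\!\left[\|\bmb x - \bm x^*\| \cdot \E_{\Po} L(\rv \xi)\right] = \E_{\Po}[L(\rv \xi)] \cdot \E_{\Pon}\|\bmb x - \bm x^*\| = \E_{\Po^{n+1}}\!\left[L(\rv \xi)\,\|\bmb x - \bm x^*\|\right].
\]
This is exactly the first claimed bound $\E_{\Po^{n+1}} L(\rv \xi) \|\bmb x - \bm x^*\| + \E_{\Pon} L^*$. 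The second bound is immediate: applying $\E_{\Pon}$ to $|\E_{\Po} h(\bm x^*, \rv \xi) - \E_{\P^*} h(\bm x^*, \rv \xi)| \le 2 L^*$ and pulling the constant through linearity yields $2\E_{\Pon}[L^*]$.

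The main (and essentially only) obstacle is this measure-theoretic interchange in the cross term: one must verify that $\|\bmb x - \bm x^*\|$ depends on the training sample alone while $L(\rv \xi)$ depends on the independent test point alone, so that the $\Po^{n+1}$-expectation genuinely factors through the iterated expectation. The non-negativity of $L(\rv \xi)$ and $\|\bmb x - \bm x^*\|$ makes Tonelli's theorem applicable without an a priori integrability hypothesis, so the interchange is licit even before assuming $\E_{\Po} L(\rv \xi) < \infty$; that latter finiteness (together with $\E_{\Pon}\|\bmb x - \bm x^*\| < \infty$), which holds under the standing Lipschitz assumption and a well-posed feasible region $\cal X$, is only needed to ensure the resulting bounds are finite and meaningful.
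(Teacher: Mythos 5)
Your proposal is correct and follows essentially the same route as the paper: the paper's proof of Corollary \ref{cor:gen-error-absolute} simply re-runs the telescoping/triangle-inequality argument of Theorem \ref{thm:gen-error-absolute} with $\E_{\Pon}$ applied at each step, arriving at the same merge of the iterated expectation $\E_{\Pon}\E_{\Po}$ into $\E_{\Po^{n+1}}$ that you justify via Tonelli and independence (a point the paper leaves implicit), and dispatches the $2\E_{\Pon}[L^*]$ bound as immediate.
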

\begin{proof}
See Appendix \ref{append:gen-error-absolute-cor}.
\stp
\end{proof}

The term $|\E_{\Po} h(\bmb x, \rv \xi) - \E_{\Po} h(\bm x^*, \rv \xi)|$ can be upper bounded by other possible ways rather than the Lipschitz continuity of $h$. We use Lipschitz continuity just as an example. For a machine learning model, if $h$ is continuous in $\bm x$ and bounded on a compact feasible region $\cal X$, the Lipschitz continuity is naturally guaranteed. The continuity and boundedness condition is not practically restrictive; see, e.g., \citet{wang2022comprehensive}.

Theorem \ref{thm:gen-error-absolute} and Corollary \ref{cor:gen-error-absolute} reveal that whenever the DRO model \eqref{eq:surrogate-dist-robust-opt-eps} is solved, the (expected) generalization error gaps of the nominal model $\min_{\bm x} \E_{\Pb} h(\bm x, \rv \xi)$ and the surrogate distributionally robust optimization model \eqref{eq:surrogate-dist-robust-opt-eps} are also controlled by the absolute distributional robustness measure. The one-sided versions of Theorem \ref{thm:gen-error-absolute} and Corollary \ref{cor:gen-error-absolute} are straightforward to be developed and therefore omitted in this paper. Just note that 
\[
\begin{array}{cl}
\E_{\Po} h(\bmb x, \rv \xi) - \E_{\Pb} h(\bmb x, \rv \xi) 
&= \E_{\Po} h(\bmb x, \rv \xi) - \E_{\Po} h(\bm x^*, \rv \xi) + \E_{\Po} h(\bm x^*, \rv \xi) - \E_{\Pb} h(\bmb x, \rv \xi) \\
&\le |\E_{\Po} h(\bmb x, \rv \xi) - \E_{\Po} h(\bm x^*, \rv \xi)| + \E_{\Po} h(\bm x^*, \rv \xi) - \E_{\Pb} h(\bmb x, \rv \xi) \\
&\le |\E_{\Po} h(\bmb x, \rv \xi) - \E_{\Po} h(\bm x^*, \rv \xi)| + L^*,
\end{array}
\]
where $L^*$ is the one-sided absolute distributional robustness measure returned by \eqref{eq:one-sided-surrogate-dist-robust-opt-eps}.

\subsubsection{Benefits of Bounding Generalization Errors By Robustness Measures}\label{subsec:benefits}
We summarize the power of the proposed surrogate DRO framework, i.e., \eqref{eq:surrogate-dist-robust-opt-eps} [and \eqref{eq:surrogate-dist-robust-opt} in Appendix \ref{append:general-case-whole-space}], in reducing the generalization error of a data-driven machine learning model: By conducting distributionally robust optimization, the robustness measures, which build the upper bounds of generalization errors (see Theorem \ref{thm:gen-error-absolute} in the main body and Theorem \ref{thm:gen-error-relative} in Appendix \ref{append:general-case-whole-space}), are reduced. To be short,
\begin{quote}
\begin{center}
\fbox{\bfit{distributionally robust optimization reduces generalization errors}.}    
\end{center}
\end{quote}

In addition, the remark below explains the rationale of the Bayesian method \eqref{eq:bayesian-method}, which is another benefit of the proposed distributionally optimization framework in \eqref{eq:surrogate-dist-robust-opt-eps}.
\begin{remark}\label{rem:benefit}
In the literature, the generalization errors and the rationale of the regularization method \eqref{eq:regularization-method} are well studied through, e.g., the measure concentration inequalities, the richness of hypothesis classes, the PAC-Bayesian arguments, and the information-theoretic methods; for details and references, see the beginning of Subsection \ref{subsec:dist-robustness-and-gen-error}. The \textbf{one-sided} generalization error of the (min-max) DRO method \eqref{eq:dro-method} is also well established in, e.g., \citet{kuhn2019wasserstein,shafieezadeh2019regularization}. However, the generalization errors of the Bayesian method \eqref{eq:bayesian-method} have not been systematically investigated and the rationale of the Bayesian method (i.e., why can it generalize well?) has not been rigorously explained. In this paper, we put the Bayesian method \eqref{eq:bayesian-method}, the (min-max) DRO method \eqref{eq:dro-method}, and the regularization method \eqref{eq:regularization-method} in a unified and generalized DRO framework [i.e, \eqref{eq:surrogate-dist-robust-opt-eps}] and show that the three methods are distributionally robust in the sense of Definition \ref{def:sol-dist-robust}. Since distributional robustness bounds generalization errors (cf. Theorem \ref{thm:gen-error-absolute}), the Bayesian method \eqref{eq:bayesian-method}, the (min-max) DRO method \eqref{eq:dro-method}, and the regularization method \eqref{eq:regularization-method} are shown to generalize well in the unified distributional robustness sense. This explains the power and the rationale of the Bayesian method \eqref{eq:bayesian-method}; cf. Theorem \ref{thm:bayesian-method} and Remark \ref{rem:bayesian-gap}.
\stp
\end{remark}

\subsection{Comparisons With Existing Literature}\label{subsec:comparison}
\subsubsection{Comparisons With Existing DRO Literature}
In the existing DRO literature, only the min-max distributionally robust optimization model \eqref{eq:dro-method} is studied. The main motivation of \eqref{eq:dro-method} is that by minimizing the worst-case cost, the true cost is also expected to be put down \citep{kuhn2019wasserstein}. For details, see Remark \ref{rem:dro-gen-err-bound} below, which is well-established in, e.g., \citet{esfahani2018data,kuhn2019wasserstein,shafieezadeh2019regularization,chen2020distributionally}; it characterizes the one-sided generalization error using Wasserstein min-max distributionally robust optimization.
\begin{remark}[cf. Appendix \ref{append:measures-balls}]\label{rem:dro-gen-err-bound}
Recall the measure concentration property in the Wasserstein sense: we have, for every $\bm x$,
\[
\Pon\left[\E_{\Po}h(\bm x, \rv \xi) \le \max_{\P \in B_{\epsilon_n, p}(\Pnh)}\E_{\P}h(\bm x, \rv \xi)\right] \ge 1 - \eta.
\]
This is because with probability at least $1-\eta$, $\Po$ is in $B_{\epsilon_n, p}(\Pnh)$, and therefore, $\E_{\Po}h(\bm x, \rv \xi) \le \max_{\P \in B_{\epsilon_n, p}(\Pnh)}\E_{\P}h(\bm x, \rv \xi)$, for every $\bm x$. Let $\bm x^* \in \argmin_{\bm x} \max_{\P \in B_{\epsilon_n, p}(\Pnh)}\E_{\P}h(\bm x, \rv \xi)$ denote the min-max distributionally robust solution. We have 
\[
\Pon\left[\E_{\Po}h(\bm x^*, \rv \xi) \le \min_{\bm x} \max_{\P \in B_{\epsilon_n, p}(\Pnh)}\E_{\P}h(\bm x, \rv \xi)\right] \ge 1 - \eta,
\]
which is the one-sided generalization error bound at $\bm x^*$. 
\stp
\end{remark}

According to Appendix \ref{append:connection-between-DRO-regularization}, there exists conditional equivalence between the min-max DRO model \eqref{eq:dro-method} and the regularization model \eqref{eq:regularization-method}; see \citet{esfahani2018data,shafieezadeh2019regularization,blanchet2019robust,gao2022finite} for more technical details. Therefore, by Remark \ref{rem:dro-gen-err-bound}, we have another uniform generalization error bound in the PAC sense:
\[
\E_{\Po}h(\bm x, \rv \xi) \le \E_{\Pnh}h(\bm x, \rv \xi) + \epsilon_n f(\bm x),~~~\forall \bm x
\]
where $f(\bm x)$ is a proper regularizer constructed by the cost function $h(\bm x, \rv \xi)$; see, e.g., \citet[Cors.~1-2]{gao2022finite}. 

In contrast, this paper handles the problems from a new perspective: We generalize the concept system of \quotemark{distributionally robust optimization} (from the existing \quotemark{min-max distributionally robust optimization}) and then propose to use \quotemark{robustness measures} to bound generalization errors (rather than employ the worst-case costs as in Remark \ref{rem:dro-gen-err-bound}). The benefits are three-fold:
\begin{enumerate}[1)]
\item The concept of \quotemark{robustness} (i.e., small model perturbations render small cost changes) is not explicitly conveyed by min-max distributionally robust optimization models \eqref{eq:dro-method}. However, the proposed concept system of \quotemark{robustness}, e.g., Definition \ref{def:sol-dist-robust}, addresses this issue.

\item The proposed concept system of \quotemark{distributional robustness} allows to build connections among Bayesian models \eqref{eq:bayesian-method}, (min-max) distributionally robust optimization models \eqref{eq:dro-method}, and regularization models \eqref{eq:regularization-method}. As a result, the rationales of Bayesian models \eqref{eq:bayesian-method} and regularization models \eqref{eq:regularization-method} can also be justified from the perspective of distributional robustness.

\item The min-max distributionally robust optimization models \eqref{eq:dro-method} can only provide one-sided robustness; similarly, min-max distributionally robust optimization models \eqref{eq:dro-method} can only provide one-sided generalization error bounds (cf. Remark \ref{rem:dro-gen-err-bound}). However, the proposed concept system of robustness is able to offer two-sided robustness, and generalization error bounds specified by robustness measures have both one-sided and two-sided versions.
\end{enumerate}

\subsubsection{Comparisons With Algorithmic-Stability Literature}
In statistical machine learning literature, the generalization errors can also be bounded by the stability of learning algorithms \citep{bousquet2002stability}, in addition to the measure concentration inequalities, the richness of hypothesis classes, the PAC-Bayesian arguments, and the information-theoretic methods. The notion of \quotemark{distributional deviation} is also employed in the definition of \quotemark{stability} of a learning algorithm. However, the following differences should be highlighted:
\begin{enumerate}[1)]
    \item In stability analyses of learning algorithms, distributional deviations (of the training data) are only limited to single sample deletion and single sample replacement \citep[Sec. 3]{bousquet2002stability}. However, in the proposed distributionally robust learning framework, the distributional deviations (of the training data) can be arbitrarily characterized using, e.g., Wasserstein balls and Kullback-Leibler divergence balls.

    \item In stability analyses of learning algorithms, distributional deviations (of the training data) lead to different costs and different hypotheses; cf. \citet[Eq. (5)]{bousquet2002stability} where different learned hypotheses are associated with different training data distributions. However, in the proposed distributionally robust learning framework, the distributional deviations (of the training data) result in different costs but different training data distributions share the same robust hypothesis; cf. Definition \ref{def:sol-dist-robust}.
\end{enumerate}

\subsection{Concrete Examples}\label{subsec:examples}

In this subsection, we give concrete examples to provide an intuitive understanding of the relations among the existing frameworks: the true model \eqref{eq:true-opt}, the nominal model \eqref{eq:nominal-method}, the empirical ERM method \eqref{eq:erm-method}, the Bayesian method \eqref{eq:bayesian-method}, the min-max DRO method \eqref{eq:dro-method}, and the regularization method \eqref{eq:regularization-method}. Specifically, covariance estimation, inverse covariance (i.e., precision) estimation, and linear regression are discussed in Subsections \ref{subsubsec:cov-esti}, Subsection \ref{subsubsec:inv-cov-esti}, and \ref{subsec:linear-reg}, respectively.

\subsubsection{Covariance Estimation}\label{subsubsec:cov-esti}
Suppose we have $n$ samples $\{\xi_i\}_{i \in [n]}$ from the true $m$-dimensional Gaussian distribution $\Po$ with zero mean and covariance $\bm \Sigma_0$. 
In many engineering problems, we need to estimate the covariance matrix $\bm \Sigma_0$ and the precision matrix $\bm \Sigma^{-1}_0$ \citep{mohan2012structured,chen2010shrinkage,nguyen2022distributionally}. The sample covariance matrix can be obtained as
\[
    \bmh \Sigma_n = \frac{1}{n} \sum^n_{i = 1} \xi_i \xi_i^\top,
\]
which can serve as an estimator of $\bm \Sigma_0$. However, when the dimension $m$ of $\rv \xi$ is larger than the sample size $n$, it is well believed that $\bmh \Sigma_n$ is not a good estimator \citep{Ledoit2012Nonlinear}. For the covariance estimation case, the true optimization formula, i.e., \eqref{eq:true-opt}, is particularized into
\begin{equation}\label{eq:cov-true}
    \min_{\bm X \succ \bm 0} -2 \E_{\Po} \ip{\bm X}{\rv \xi \rv \xi^\top} + \tr{\bm X^2},
\end{equation}
which leads to
\begin{equation}\label{eq:cov-true-explicit}
    \min_{\bm X \succ \bm 0} -2\ip{\bm X}{\bm \Sigma_0} + \tr{\bm X^2}.
\end{equation}

When $\Po$ is unknown but we have an estimate $\Pb$ of $\Po$, the nominal optimization formula, i.e., \eqref{eq:nominal-method}, is
\begin{equation}\label{eq:cov-nominal}
    \min_{\bm X \succ \bm 0} -2 \E_{\Pb} \ip{\bm X}{\rv \xi \rv \xi^\top} + \tr{\bm X^2},
\end{equation}
whose explicit form, after denoting $\bmb \Sigma \defeq \E_{\bm \Pb} [\rv \xi \rv \xi^\top]$, is
\begin{equation}\label{eq:cov-nominal-explicit}
    \min_{\bm X \succ \bm 0} -2\ip{\bm X}{\bmb \Sigma} + \tr{\bm X^2}.
\end{equation}
When $\Pb$ is specified by the empirical distribution $\Pnh$, we have the empirical risk minimization formula, i.e., \eqref{eq:erm-method},
\begin{equation}\label{eq:cov-erm}
    \min_{\bm X \succ \bm 0} -2 \E_{\Pnh} \ip{\bm X}{\rv \xi \rv \xi^\top} + \tr{\bm X^2},
\end{equation}
which leads to
\begin{equation}\label{eq:cov-erm-explicit}
    \min_{\bm X \succ \bm 0} -2\ip{\bm X}{\bmh \Sigma_n} + \tr{\bm X^2}.
\end{equation}

Directly employing the nominal distribution $\Pb$ and conducting the nominal optimization \eqref{eq:cov-nominal} may cause significant performance degradation. We assume that $\Po$ is included in the distributional ball $B_{\epsilon}(\Pb)$.

The Bayesian method \eqref{eq:bayesian-method} assumes that there exists a second-order probability measure $\bb Q$ on $B_{\epsilon}(\Pb)$. Therefore, the Bayesian counterpart for the nominal model \eqref{eq:cov-nominal} is
\begin{equation}\label{eq:cov-bayesian}
    \min_{\bm X \succ \bm 0} -2 \ip{\bm X}{\E_{\bb Q} \bm \Sigma} + \tr{\bm X^2},
\end{equation}
where $\bm \Sigma = \E_{\bm \P} [\rv \xi \rv \xi^\top]$ for every $\P$ in $B_{\epsilon}(\Pb)$. According to Theorem \ref{thm:bayeisan-mean-dist}, if $\P'$ is the mean of $\P$ under $\bb Q$, then \eqref{eq:cov-bayesian} becomes 
\begin{equation}\label{eq:cov-bayesian-explicit}
    \min_{\bm X \succ \bm 0} -2 \ip{\bm X}{\bm \Sigma'} + \tr{\bm X^2},
\end{equation}
where $\bm \Sigma' = \E_{\bm \P'} [\rv \xi \rv \xi^\top]$. Hence, if $\P'$ is closer to $\Po$ than $\Pb$ (i.e., $\bm \Sigma'$ is closer to $\bm \Sigma_0$ than $\bmb \Sigma$), then the Bayesian counterpart \eqref{eq:cov-bayesian-explicit} has smaller generalization errors than the nominal model \eqref{eq:cov-nominal}, due to Fact \ref{prop:uniform-gen-bound}. In addition, if $\bb Q$ is specified as a Dirichlet-process-like prior, then due to \eqref{eq:dro-saa-obj-general}, we have $\E_{\bb Q} \bm \Sigma = \beta_n \bmh \Sigma + (1-\beta_n) \bmh \Sigma_n$, where $\bmh \Sigma = \E_{\bm \Ph} [\rv \xi \rv \xi^\top]$ and $\Ph$ is a prior estimate of $\Po$ before seeing samples $\{\xi_i\}_{i \in [n]}$. As a result, the Bayesian counterpart \eqref{eq:cov-bayesian} becomes 
\begin{equation}\label{eq:cov-bayesian-dirichlet-explicit}
    \min_{\bm X \succ \bm 0} -2 \ip{\bm X}{\beta_n \bmh \Sigma + (1-\beta_n) \bmh \Sigma_n} + \tr{\bm X^2},
\end{equation}
which is known as a shrinkage covariance estimator if $\bmh \Sigma \defeq \bm I$ \citep{chen2010shrinkage}. Model \eqref{eq:cov-bayesian-dirichlet-explicit} is equivalent (in the sense of having the same minimizer) to
\begin{equation}\label{eq:cov-bayesian-dirichlet-explicit-regularization-like}
    \min_{\bm X \succ \bm 0} -2 \ip{\bm X}{\bmh \Sigma_n} + \tr{\bm X^2} + \frac{\beta_n}{1-\beta_n} \Bigg\{ -2 \ip{\bm X}{\bmh \Sigma} + \tr{\bm X^2} \Bigg\}.
\end{equation}
By letting $\lambda_n \defeq \frac{\beta_n}{1-\beta_n}$ and $f_1(\bm X) \defeq  -2 \ip{\bm X}{\bmh \Sigma} + \tr{\bm X^2}$, \eqref{eq:cov-bayesian-dirichlet-explicit-regularization-like} becomes 
\begin{equation}\label{eq:cov-bayesian-dirichlet-explicit-regularization}
    \min_{\bm X \succ \bm 0} -2 \ip{\bm X}{\bmh \Sigma_n} + \tr{\bm X^2} + \lambda_n f_1(\bm X),
\end{equation}
which is a regularized empirical risk minimization model; cf. \eqref{eq:regularization-method}. For theoretical details, recall Theorem \ref{thm:regularization-method} and Remark \ref{rem:regularization-method}.

The min-max distributionally robust optimization (DRO) method \eqref{eq:dro-method}, unlike the Bayesian method \eqref{eq:bayesian-method}, does not assume the existence of $\bb Q$ on $B_{\epsilon}(\Pb)$. Instead, the DRO method works against the worst case in $B_{\epsilon}(\Pb)$. Therefore, the DRO counterpart for the nominal model \eqref{eq:cov-nominal} is
\begin{equation}\label{eq:cov-dro}
    \min_{\bm X \succ \bm 0} \max_{\P \in B_{\epsilon}(\Pb)} -2 \E_{\P} \ip{\bm X}{\rv \xi \rv \xi^\top} + \tr{\bm X^2}.
\end{equation}
If $B_{\epsilon}(\Pb)$ is specified by the Wasserstein distance, then \eqref{eq:cov-dro} is particularized to
\begin{equation}\label{eq:cov-explicit}
    \begin{array}{cl}
            \displaystyle \min_{\bm X \succ \bm 0} \max_{\bm \Sigma \succ \bm 0} & -2 \ip{\bm X}{\bm \Sigma} + \tr{\bm X^2} \\
            \text{s.t.} & \Tr\left[\bm \Sigma + \bmb \Sigma - 2\left(\bm \Sigma^{\frac{1}{2}} \bmb \Sigma \bm \Sigma^{\frac{1}{2}} \right)^{\frac{1}{2}}\right] \le \epsilon^2.
    \end{array}
\end{equation}
Note that by replacing $\bmb \Sigma$ with $\bmh \Sigma_n$, \eqref{eq:cov-explicit} is a data-driven DRO model at the empirical distribution $\Pnh$. Supposing $\bm \Sigma^*$ solves \eqref{eq:cov-explicit}, we have the $\bm X$-point-wise upper bound for the true objective function as 
\[
-2 \ip{\bm X}{\bm \Sigma_0} + \tr{\bm X^2} \le -2 \ip{\bm X}{\bm \Sigma^*} + \tr{\bm X^2},~~~\forall \bm X.
\]
Hence, by conducting the min-max DRO method \eqref{eq:dro-method}, the true cost at $\bm \Sigma_0$ can also be reduced. Existing literature has established the relation between the min-max DRO method \eqref{eq:dro-method} and the regularized method \eqref{eq:regularization-method}. To be specific, according to Appendix \ref{append:connection-between-DRO-regularization}, the objective function of the min-max DRO formulation \eqref{eq:cov-explicit} is $\bm X$-point-wise upper bounded by a regularized objective function:
\[
    \displaystyle \max_{\bm \Sigma \succ \bm 0}  -2 \ip{\bm X}{\bm \Sigma} + \tr{\bm X^2} \le -2 \ip{\bm X}{\bmb \Sigma} + \tr{\bm X^2} + \epsilon \cdot f_2(\bm X),~~~\forall \bm X,
\]
where the regularizer $f_2(\bm X)$ is induced by the cost function $h(\bm X, \rv \xi) \defeq -2 \rv\xi^\top \bm X \rv\xi + \tr{\bm X^2}$; the strict equivalence, however, cannot be guaranteed to hold because the cost function $h(\bm X, \rv \xi)$ is not convex in $\rv \xi$.

The regularization counterpart \eqref{eq:regularization-method} for the nominal model \eqref{eq:cov-nominal} is
\begin{equation}\label{eq:cov-regularization}
    \min_{\bm X \succ \bm 0} -2 \ip{\bm X}{\bmb \Sigma} + \tr{\bm X^2} + \lambda f(\bm X).
\end{equation}
When we consider the nominal distribution to be $\Pnh$, we have
\begin{equation}\label{eq:cov-regularization-empirical}
    \min_{\bm X \succ \bm 0} -2 \ip{\bm X}{\bmh \Sigma_n} + \tr{\bm X^2} + \lambda_n f(\bm X).
\end{equation}
In the practice of machine learning, typical choices for $f(\bm X)$ include $\|\bm X\|_2$ (i.e., Ridge) and $\|\bm X\|_1$ (i.e., LASSO). On the other hand, the randomized learning counterpart for the empirically nominal model \eqref{eq:cov-erm}, i.e., PAC-Bayesian model \eqref{eq:randomized-learning-R-ERM},  is 
\begin{equation}\label{eq:cov-regularization-empirical-random}
    \min_{\bb Q_{\bm X}:\bm X \succ \bm 0} \E_{\bb Q_{\bm X}} \Big\{ -2 \ip{\bm X}{\bmh \Sigma_n} + \tr{\bm X^2} \Big\} + \lambda_n \cdot \operatorname{KL}(\bb Q_{\bm X} || \Pi_{\bm X}),
\end{equation}
where $\bb Q_{\bm X}$ is a distribution of $\bm X$.
When we let $\bb Q_{\bm X}$ be a point-mass distribution, \eqref{eq:cov-regularization-empirical-random} degenerates to
\begin{equation}\label{eq:cov-regularization-empirical-random-degenerate}
    \min_{\bm X \succ \bm 0}  -2 \ip{\bm X}{\bmh \Sigma_n} + \tr{\bm X^2}  + \lambda_n \cdot f_3(\bm X),
\end{equation}
where $f_3(\bm X) \defeq -\log \pi(\bm X)$ and $\pi(\bm X)$ is the density of $\Pi_{\bm X}$ with respect to the Lebesgue measure. 
This is the first Bayesian interpretation of the regularized empirical risk minimization model \eqref{eq:cov-regularization-empirical}, which is widely used in the existing literature. In this paper, however, we propose another new Bayesian interpretation \eqref{eq:bayesian-method} for \eqref{eq:cov-regularization-empirical}. Let $\beta_n$ be such that $\lambda_n = \frac{\beta_n}{1-\beta_n}$ and $\Ph$ be a uniform distribution on $B_{\epsilon}(\Pnh)$ with density function, with respect to the Lebesgue measure $\cal L$,
\[
    \frac{\d \Ph}{\d \cal L} \defeq \frac{f(\bm X)}{\displaystyle \int_{\Xi} -2 \rv \xi^\top {\bm X} \rv \xi + \tr{\bm X^2} \d \rv \xi},~~~\forall \bm X. 
\]
As a result, \eqref{eq:cov-regularization-empirical} becomes a Bayesian method \eqref{eq:cov-bayesian-dirichlet-explicit}, or \eqref{eq:cov-bayesian}, or \eqref{eq:bayesian-method}:
\[
    \min_{\bm X \succ \bm 0} \E_{\bb Q} \E_{\P} \Big\{ -2 \ip{\bm X}{\xi \xi^\top} + \tr{\bm X^2} \Big\} = \min_{\bm X \succ \bm 0} -2 \ip{\bm X}{\E_{\bb Q} \bm \Sigma} + \tr{\bm X^2};
\] 
for theoretical details on this point, recall Theorem \ref{thm:regularization-method} and Remark \ref{rem:regularization-method}. Note that the difference between the two Bayesian interpretations is over which space the prior distribution $\bb Q$ is used: the hypothesis space for $\bm X$ (i.e., the first interpretation) or the data space for $\P \in B_{\epsilon}(\Pnh)$ (i.e., the second interpretation); see also Remark \ref{rem:distributional-uncertainty}.

\subsubsection{Inverse Covariance Estimation}\label{subsubsec:inv-cov-esti}
We consider the estimation of inverse covariance matrices $\bm \Sigma^{-1}_0$ of a zero-mean Gaussian distribution $\cal N(\bm 0, \bm \Sigma_0)$, which can be interpreted as the estimation problem of Gaussian graphical models \citep{mohan2012structured}. Given the results for covariance estimation in Subsection \ref{subsubsec:cov-esti}, the discussion on the case of inverse covariance estimation is straightforward. Just note that the true optimization formula, i.e., \eqref{eq:true-opt}, according to \citet[Sec.~3.2.2]{kakade2010learning} and \citet{mohan2012structured}, is
\begin{equation}\label{eq:inv-cov-true}
    \min_{\bm X \succ \bm 0} \ip{\bm X}{\bm \Sigma_0} - \log \det (\bm X).
\end{equation}

The nominal optimization formula, i.e., \eqref{eq:nominal-method}, is
\begin{equation}\label{eq:inv-cov-nominal}
    \min_{\bm X \succ \bm 0} \ip{\bm X}{\bmb \Sigma} - \log \det (\bm X).
\end{equation}

The Bayesian counterpart for the nominal model \eqref{eq:inv-cov-nominal} is
\begin{equation}\label{eq:inv-cov-beyesian}
    \min_{\bm X \succ \bm 0} \ip{\bm X}{\E_{\bb Q} \bm \Sigma} - \log \det (\bm X),
\end{equation}
and under a Dirichlet-process-like prior $\bb Q$ with prior $\Ph$, \eqref{eq:inv-cov-beyesian} further becomes
\begin{equation}\label{eq:inv-cov-beyesian-dirichlet}
    \min_{\bm X \succ \bm 0} \ip{\bm X}{{\beta_n \bmh \Sigma + (1-\beta_n) \bmh \Sigma_n}} - \log \det (\bm X),
\end{equation}
which is equivalent (in the sense of having the same optimizer) to
\begin{equation}\label{eq:inv-cov-beyesian-dirichlet-regularied}
    \min_{\bm X \succ \bm 0} \ip{\bm X}{\bmh \Sigma_n} - \log \det (\bm X) + \frac{\beta_n}{1-\beta_n} \Big[ \ip{\bm X}{\bmh \Sigma} - \log \det (\bm X) \Big].
\end{equation}
By letting $\lambda_n \defeq \frac{\beta_n}{1-\beta_n}$ and $f(\bm X) \defeq  \ip{\bm X}{\bmh \Sigma} - \log \det (\bm X)$, \eqref{eq:inv-cov-beyesian-dirichlet-regularied} becomes a regularized empirical risk minimization model \eqref{eq:regularization-method}, i.e.,
\[
    \min_{\bm X \succ \bm 0} \ip{\bm X}{\bmh \Sigma_n} - \log \det (\bm X) + \lambda_n f(\bm X).
\]

In addition, the min-max DRO counterpart for the nominal model \eqref{eq:inv-cov-nominal} under the Wasserstein distance is
\begin{equation}\label{eq:inv-cov-dro}
    \begin{array}{cl}
            \displaystyle \min_{\bm X \succ \bm 0} \max_{\bm \Sigma \succ \bm 0} & \ip{\bm X}{\bm \Sigma} - \log \det (\bm X) \\
            \text{s.t.} & \Tr\left[\bm \Sigma + \bmb \Sigma - 2\left(\bm \Sigma^{\frac{1}{2}} \bmb \Sigma \bm \Sigma^{\frac{1}{2}} \right)^{\frac{1}{2}}\right] \le \epsilon^2.
    \end{array}
\end{equation}

\subsubsection{Linear Regression}\label{subsec:linear-reg}
Given the results for covariance estimation in Subsection \ref{subsubsec:cov-esti}, the discussion on the case of linear regression is technically straightforward. We just highlight some main points.

Consider the linear regression model $\rv \xi_{\text{out}} = \bm x^\top \rv \xi_{\text{in}} + e$ where $e \in \R$ denotes the regression error. Suppose that the data are sampled from a zero-mean Gaussian distribution, that is,
\[
\rv \xi \defeq 
\left[
\begin{array}{c}
     \rv \xi_{\text{in}}  \\
     \rv \xi_{\text{out}} 
\end{array}
\right] \sim \cal N(\bm 0, \bm \Sigma_0).
\]

The true optimization problem \eqref{eq:true-opt} is particularized into 
\begin{equation}\label{eq:exam-lin-reg-true}
     \min_{\bm x \in \cal X} 
    \left[
    \begin{array}{c}
         \bm x  \\
         -1 
    \end{array}
    \right]^\top
       \bm \Sigma_0    
     \left[
    \begin{array}{c}
         \bm x  \\
         -1 
    \end{array}
    \right].
\end{equation}

The nominal model \eqref{eq:nominal-method} becomes
\begin{equation}\label{eq:exam-lin-reg-nominal}
    \min_{\bm x \in \cal X} 
    \left[
    \begin{array}{c}
         \bm x  \\
         -1 
    \end{array}
    \right]^\top
       \bmb \Sigma    
     \left[
    \begin{array}{c}
         \bm x  \\
         -1 
    \end{array}
    \right].
\end{equation}

The Bayesian counterpart for the nominal model \eqref{eq:exam-lin-reg-nominal} is
\begin{equation}\label{eq:exam-lin-reg-beyesian}
    \min_{\bm x \in \cal X} 
    \left[
    \begin{array}{c}
         \bm x  \\
         -1 
    \end{array}
    \right]^\top
       [\E_{\bb Q} \bm \Sigma]    
     \left[
    \begin{array}{c}
         \bm x  \\
         -1 
    \end{array}
    \right].
\end{equation}
and under a Dirichlet-process-like prior $\bb Q$ with prior $\Ph$, \eqref{eq:exam-lin-reg-beyesian} further becomes
\begin{equation}\label{eq:exam-lin-reg-dirichlet}
    \min_{\bm x \in \cal X} 
    \left[
    \begin{array}{c}
         \bm x  \\
         -1 
    \end{array}
    \right]^\top
       \left[{\beta_n \bmh \Sigma + (1-\beta_n) \bmh \Sigma_n}\right]
     \left[
    \begin{array}{c}
         \bm x  \\
         -1 
    \end{array}
    \right],
\end{equation}
which is equivalent (in the sense of having the same optimizer) to
\begin{equation}\label{eq:exam-lin-reg-dirichlet-regularied}
    \min_{\bm x \in \cal X} 
    \left[
    \begin{array}{c}
         \bm x  \\
         -1 
    \end{array}
    \right]^\top
       \bmh \Sigma_n
     \left[
    \begin{array}{c}
         \bm x  \\
         -1 
    \end{array}
    \right]
    +
    \frac{\beta_n}{1-\beta_n}
    \left[
    \begin{array}{c}
         \bm x  \\
         -1 
    \end{array}
    \right]^\top
       \bmh \Sigma
     \left[
    \begin{array}{c}
         \bm x  \\
         -1 
    \end{array}
    \right].
\end{equation}
By letting $\lambda_n \defeq \frac{\beta_n}{1-\beta_n}$ and $f(\bm x) \defeq  \left[
    \begin{array}{c}
         \bm x  \\
         -1 
    \end{array}
    \right]^\top
       \bmh \Sigma
     \left[
    \begin{array}{c}
         \bm x  \\
         -1 
    \end{array}
    \right]$, 
\eqref{eq:exam-lin-reg-dirichlet-regularied} becomes a $\bmh \Sigma$-Tikhonov regularized ERM model \eqref{eq:regularization-method}, i.e.,
\begin{equation}\label{eq:exam-lin-reg-regularied}
    \min_{\bm x \in \cal X} \left[
    \begin{array}{c}
         \bm x  \\
         -1 
    \end{array}
    \right]^\top
       \bmh \Sigma_n
     \left[
    \begin{array}{c}
         \bm x  \\
         -1 
    \end{array}
    \right] + \lambda_n f(\bm x).
\end{equation}
When $\bmh \Sigma$ is an identity matrix, \eqref{eq:exam-lin-reg-regularied} further becomes a Ridge regression model.

In addition, the min-max DRO counterpart for the nominal model \eqref{eq:exam-lin-reg-nominal} under the Wasserstein distance is particularized into
\begin{equation}\label{eq:exam-lin-reg-dro}
    \begin{array}{cl}
            \displaystyle 
                \min_{\bm x \in \cal X} \max_{\bm \Sigma} &
                \left[
                \begin{array}{c}
                     \bm x  \\
                     -1 
                \end{array}
                \right]^\top
                   \bm \Sigma    
                 \left[
                \begin{array}{c}
                     \bm x  \\
                     -1 
                \end{array}
                \right] \\
            \text{s.t.} & \Tr\left[\bm \Sigma + \bmb \Sigma - 2\left(\bm \Sigma^{\frac{1}{2}} \bmb \Sigma \bm \Sigma^{\frac{1}{2}} \right)^{\frac{1}{2}}\right] \le \epsilon^2.
    \end{array}
\end{equation}

\section{Discussions and Conclusions}\label{sec:conclusion}
This paper formally studies the concept system of \quotemark{distributional robustness}, and the connections among Bayesian methods, distributionally robust optimization methods, and regularization methods are established; for a detailed and informative summary, see Appendix \ref{append:summary-of-relations}. In highlights,
\begin{enumerate}[1)]
    \item Under the selection of a Dirichlet-process prior in Bayesian nonparametrics \eqref{eq:bayesian-method}, any regularization method \eqref{eq:regularization-method} is equivalent to a Bayesian method \eqref{eq:bayesian-method} (cf. Theorem \ref{thm:regularization-method}); in a general setting where the non-parametric prior in \eqref{eq:bayesian-method} is not a Dirichlet process, a regularization method \eqref{eq:regularization-method} is a special case of a Bayesian method \eqref{eq:bayesian-method} (cf. Remark \ref{rem:regularization-method});

    \item Bayesian models \eqref{eq:bayesian-method} are shown to be probably approximately distributionally robust (cf. Theorem \ref{thm:bayesian-method});
\end{enumerate}
As a result, several practically useful insights in machine learning can be obtained, including the robustness-sensitivity trade-off in dealing with distributional uncertainty, the power of bounding generalization error through Bayesian methods, the data augmentation effect through Bayesian nonparametrics, the effects of priors on data distribution class and hypothesis class, the bias-variance trade-off in regularization, the benefits of small distributional uncertainty and the Lipschitz continuity of cost functions, and the unified interpretation (through robustness measures) for the Bayesian method, the DRO method, and the regularized method; see Remarks \ref{rem:robustness-sensitivity}, \ref{rem:bayesian-gap}, \ref{rem:distributional-uncertainty}, \ref{rem:variance-bias}, \ref{rem:small-lipschitz-and-small-dist-undertainty}, and \ref{rem:benefit} and Examples \ref{eg:data-agumentation} and \ref{eg:data-agumentation-regularizer}.

In addition, a new perspective to characterize generalization errors of machine learning models is shown: i.e., generalization errors can be characterized using the distributional uncertainty of the nominal model (cf. Fact \ref{prop:uniform-gen-bound}) or using the robustness measures of robust solutions (cf. Theorem \ref{thm:gen-error-absolute} and Corollary \ref{cor:gen-error-absolute} in the main body, and Theorem \ref{thm:gen-error-relative} and Corollary \ref{cor:gen-error-relative} in Appendix \ref{append:general-case-whole-space}). Generalization error bounds specified by robustness measures justify the rationale of distributionally robust optimization models \eqref{eq:dro-method}, \eqref{eq:surrogate-dist-robust-opt-eps} and \eqref{eq:one-sided-surrogate-dist-robust-opt-eps}: by conducting distributionally robust optimizations that minimize robustness measures, generalization errors are also reduced; see Theorems \ref{thm:gen-error-absolute} and \ref{thm:gen-error-relative}. Specifically, the main points regarding generalization error bounds can be recapped as follows, by considering the nominal distribution $\Pb$ and its induced distributional ball $B_{\epsilon}(\Pb)$ such that $\Po \in B_{\epsilon}(\Pb)$. 
\begin{enumerate}[1)]
    \item When the Bayesian method \eqref{eq:bayesian-method} is solved by $\bm x^*$, according to Theorem \ref{thm:bayesian-method} and \eqref{eq:bayesian-method-property}, we have,  with probability at least $1-\eta$,
    \[
        |\E_{\P} h(\bm x^*, \rv \xi) - \E_{\Po} h(\bm x_0, \rv \xi)| \le L^*,~~~\forall \P \in B_{\epsilon}(\Pb)
    \]
    where the robustness measure $L^*$ for the robust solution $\bm x^*$ is 
    \[
        L^* = \displaystyle \frac{\E_{\bb Q}\E_{\P} h(\bm x^*, \rv \xi) + \E_{\Po} h(\bm x_0, \rv \xi)}{\eta},
    \]
    provided that $h$ is non-negative.
    In addition, due to the arbitrariness of $\P$, with probability at least $1-\eta$, the absolute excess risk satisfies 
    \[
        |\E_{\Po} h(\bm x^*, \rv \xi) - \E_{\Po} h(\bm x_0, \rv \xi)| \le L^*.
    \]
    This explains why Bayesian methods have the potential to generalize well; a similar statement can be obtained using Remark \ref{rem:bayesian-gap}.\footnote{With probability at least $1-\eta$, $ \E_{\P} h(\bm x^*, \rv \xi) \le L^*$ for every $\P$ (hence the generalization error at $\bm x^*$ satisfies $\E_{\Po} h(\bm x^*, \rv \xi) \le L^*$), where $L^* = \displaystyle {\E_{\bb Q}\E_{\P} h(\bm x^*, \rv \xi)}/{\eta}$. Note that $L^*$ here is not a robustness measure.} Note that minimizing $L$ is equivalent to minimizing $\E_{\bb Q}\E_{\P} h(\bm x, \rv \xi)$ over $\bm x$, i.e.,  conducting \eqref{eq:bayesian-method}; note also that the better the $\bb Q$, the smaller the $L^*$. (The better the $\bb Q$, the more $\P$ concentrate around $\Po$; the best $\bb Q$'s are such that $\E_{\bb Q} \E_{\P} h(\bm x, \xi) = \E_{\Po} h(\bm x, \xi)$; cf. Theorem \ref{thm:bayeisan-mean-dist}.)

    \item When the surrogate min-max DRO \eqref{eq:dro-method} or \eqref{eq:surrogate-min-max-dist-robust-opt} is solved by $(\bm x^*, \P^*)$, according to Definition \ref{def:one-sided-surrogate-dist-robust-opt-eps} and Corollary \ref{cor:surrogate-min-max-dist-robust-opt}, we have the robustness measure $L^* = \E_{\P^*} h(\bm x^*, \rv \xi) - \E_{\Pb} h(\bmb x, \rv \xi)$ for the solution $\bm x^*$; note that $(\bm x^*, L^*)$ solves \eqref{eq:one-sided-surrogate-dist-robust-opt-eps}. Therefore, the true cost (i.e., generalization error) $\E_{\Po}h(\bm x^*, \xi)$ is upper bounded by
    \[
        \E_{\Po}h(\bm x^*, \xi) \le \E_{\P^*} h(\bm x^*, \rv \xi) = \E_{\Pb} h(\bmb x, \rv \xi) + L^*.
    \]
    In addition, the (one-sided) excess risk satisfies
    \[
          \E_{\Po}h(\bm x^*, \xi) - \E_{\Po}h(\bm x_0, \xi) = \E_{\Po}[h(\bm x^*, \xi) - h(\bm x_0, \xi)]  \le \| \bm x^* - \bm x_0 \| \cdot \E_{\Po} L(\rv \xi),
    \]
    where $L(\rv \xi)$ is the Lipschitz constant of $h(\cdot, \rv \xi)$.
    This explains why min-max DRO methods have the potential to generalize well; note that minimizing $L$ is equivalent to minimizing the upper bound of the true cost. Therefore, min-max DRO methods serve as computational solution methods for \eqref{eq:one-sided-surrogate-dist-robust-opt-eps} and find one-sided robustness measures $L$. (For computational methods to find two-sided robustness measures, see Appendix \ref{append:min-max-DRO}.)
    
    \item When the regularized method \eqref{eq:regularization-method} at $\Pb \defeq \Pnh$ is solved by $\bm x^*$, due to Theorem \ref{thm:regularization-method}, we have,  with probability at least $1-\eta$,
    \[
        |\E_{\P} h(\bm x^*, \rv \xi) - \E_{\Po} h(\bm x_0, \rv \xi)| \le L^*,~~~\forall \P \in B_{\epsilon}(\Pnh)
    \]
    where the robustness measure $L^*$ for the robust solution $\bm x^*$ is 
    \[
        L^* = \displaystyle \frac{(1-\beta_n)\left[\E_{\Pnh} h(\bm x^*, \rv \xi) + \lambda_n f(\bm x^*)\right] + \E_{\Po} h(\bm x_0, \rv \xi)}{\eta},
    \]
    provided that $h$ is non-negative and $\beta_n$ is defined through $\lambda_n = \frac{\beta_n}{1-\beta_n}$. Note that 
    \[
        \begin{array}{cl}
        \E_{\Pnh} h(\bm x^*, \rv \xi) + \lambda_n f(\bm x^*) &\defeq \E_{\Pnh} h(\bm x^*, \rv \xi) + \frac{\beta_n}{1-\beta_n} \E_{\Ph} h(\bm x^*, \rv \xi) \\
         &= \frac{1}{1 - \beta_n} \E_{ \beta_n \Ph + (1-\beta_n) \Pnh } h(\bm x^*, \rv \xi) \\
         &= \frac{1}{1 - \beta_n}  \E_{\bb Q} \E_\P h(\bm x^*, \rv \xi) .
        \end{array}
    \]
    
    In addition, due to the arbitrariness of $\P$, with probability at least $1-\eta$, the absolute excess risk satisfies 
    \[
        |\E_{\Po} h(\bm x^*, \rv \xi) - \E_{\Po} h(\bm x_0, \rv \xi)| \le L^*.
    \]
    This explains why regularized methods have the potential to generalize well. Note that minimizing $L$ is equivalent to minimizing $\E_{\Pnh} h(\bm x, \rv \xi) + \lambda_n f(\bm x)$ over $\bm x$, i.e.,  solving \eqref{eq:regularization-method}; note also that the better the $(\lambda_n, f(\bm x))$, the smaller the $L^*$. (The best $(\lambda_n, f(\bm x))$'s are such that $\E_{\Pnh} h(\bm x, \rv \xi) + \lambda_n f(\bm x) = \E_{\Po}h(\bm x, \xi)$; see also Remark \ref{rem:variance-bias}.)
\end{enumerate}
To conclude, the benefits of bounding generalization errors by robustness measures are summarized in Figure \ref{fig:relation-qucik-preview},  Subsection \ref{subsec:benefits}, and Remark \ref{rem:benefit}.
\vspace{-0.5em}
\section{Acknowledgements}
The authors sincerely thank Dr. Jean Honorio for his discussions in preparing the first version of this paper, as well as for his advice on enhancing the article's presentation quality.

\bibliographystyle{IEEEtran}
\bibliography{References}

\begin{thebibliography}{10}
\providecommand{\url}[1]{#1}
\csname url@samestyle\endcsname
\providecommand{\newblock}{\relax}
\providecommand{\bibinfo}[2]{#2}
\providecommand{\BIBentrySTDinterwordspacing}{\spaceskip=0pt\relax}
\providecommand{\BIBentryALTinterwordstretchfactor}{4}
\providecommand{\BIBentryALTinterwordspacing}{\spaceskip=\fontdimen2\font plus
\BIBentryALTinterwordstretchfactor\fontdimen3\font minus
  \fontdimen4\font\relax}
\providecommand{\BIBforeignlanguage}[2]{{%
\expandafter\ifx\csname l@#1\endcsname\relax
\typeout{** WARNING: IEEEtran.bst: No hyphenation pattern has been}%
\typeout{** loaded for the language `#1'. Using the pattern for}%
\typeout{** the default language instead.}%
\else
\language=\csname l@#1\endcsname
\fi
#2}}
\providecommand{\BIBdecl}{\relax}
\BIBdecl

\bibitem{vapnik2000nature}
V.~Vapnik, \emph{The Nature of Statistical Learning Theory}, 2nd~ed.\hskip 1em
  plus 0.5em minus 0.4em\relax Springer, 2000.

\bibitem{bishop2006pattern}
C.~M. Bishop and N.~M. Nasrabadi, \emph{Pattern Recognition and Machine
  Learning}.\hskip 1em plus 0.5em minus 0.4em\relax Springer, 2006.

\bibitem{james2021introduction}
G.~James, D.~Witten, T.~Hastie, and R.~Tibshirani, \emph{An Introduction to
  Statistical Learning}, 2nd~ed.\hskip 1em plus 0.5em minus 0.4em\relax
  Springer, 2021.

\bibitem{kuhn2019wasserstein}
D.~Kuhn, P.~M. Esfahani, V.~A. Nguyen, and S.~Shafieezadeh-Abadeh,
  ``{Wasserstein} distributionally robust optimization: Theory and applications
  in machine learning,'' in \emph{Operations Research \& Management Science in
  the Age of Analytics}.\hskip 1em plus 0.5em minus 0.4em\relax Informs, 2019,
  pp. 130--166.

\bibitem{anderson2020can}
E.~Anderson and H.~Nguyen, ``When can we improve on sample average
  approximation for stochastic optimization?'' \emph{Operations Research
  Letters}, vol.~48, no.~5, pp. 566--572, 2020.

\bibitem{murphy2012machine}
K.~P. Murphy, \emph{Machine Learning: A Probabilistic Perspective}.\hskip 1em
  plus 0.5em minus 0.4em\relax MIT Press, 2012.

\bibitem{mohri2018foundations}
M.~Mohri, A.~Rostamizadeh, and A.~Talwalkar, \emph{Foundations of Machine
  Learning}.\hskip 1em plus 0.5em minus 0.4em\relax MIT Press, 2018.

\bibitem{ferguson1973bayesian}
T.~S. Ferguson, ``A {Bayesian} analysis of some nonparametric problems,''
  \emph{The Annals of Statistics}, pp. 209--230, 1973.

\bibitem{ghosal2017fundamentals}
S.~Ghosal and A.~Van~der Vaart, \emph{Fundamentals of Nonparametric {Bayesian}
  Inference}.\hskip 1em plus 0.5em minus 0.4em\relax Cambridge University
  Press, 2017, vol.~44.

\bibitem{wu2018bayesian}
D.~Wu, H.~Zhu, and E.~Zhou, ``A {Bayesian} risk approach to data-driven
  stochastic optimization: Formulations and asymptotics,'' \emph{SIAM Journal
  on Optimization}, vol.~28, no.~2, pp. 1588--1612, 2018.

\bibitem{rahimian2022frameworks}
H.~Rahimian and S.~Mehrotra, ``Frameworks and results in distributionally
  robust optimization,'' \emph{Open Journal of Mathematical Optimization},
  vol.~3, pp. 1--85, 2022.

\bibitem{esfahani2018data}
P.~Mohajerin~Esfahani and D.~Kuhn, ``Data-driven distributionally robust
  optimization using the {Wasserstein} metric: Performance guarantees and
  tractable reformulations,'' \emph{Mathematical Programming}, vol. 171, no.~1,
  pp. 115--166, 2018.

\bibitem{yue2021linear}
M.-C. Yue, D.~Kuhn, and W.~Wiesemann, ``On linear optimization over
  {Wasserstein} balls,'' \emph{Mathematical Programming}, pp. 1--16, 2021.

\bibitem{zhang2022simple}
L.~Zhang, J.~Yang, and R.~Gao, ``A simple duality proof for {Wasserstein}
  distributionally robust optimization,'' \emph{arXiv preprint
  arXiv:2205.00362}, 2022.

\bibitem{gao2022finite}
R.~Gao, ``Finite-sample guarantees for {Wasserstein} distributionally robust
  optimization: Breaking the curse of dimensionality,'' \emph{Operations
  Research}, 2022.

\bibitem{gao2022distributionally}
R.~Gao and A.~Kleywegt, ``Distributionally robust stochastic optimization with
  {Wasserstein} distance,'' \emph{Mathematics of Operations Research}, 2022.

\bibitem{shapiro2017distributionally}
A.~Shapiro, ``Distributionally robust stochastic programming,'' \emph{SIAM
  Journal on Optimization}, vol.~27, no.~4, pp. 2258--2275, 2017.

\bibitem{sun2016convergence}
H.~Sun and H.~Xu, ``Convergence analysis for distributionally robust
  optimization and equilibrium problems,'' \emph{Mathematics of Operations
  Research}, vol.~41, no.~2, pp. 377--401, 2016.

\bibitem{vapnik1998statistical}
V.~Vapnik, \emph{Statistical Learning Theory}.\hskip 1em plus 0.5em minus
  0.4em\relax Wiley-Interscience, 1998.

\bibitem{goodfellow2016deep}
I.~Goodfellow, Y.~Bengio, and A.~Courville, \emph{Deep Learning}.\hskip 1em
  plus 0.5em minus 0.4em\relax MIT Press, 2016.

\bibitem{hastie2009elements}
T.~Hastie, R.~Tibshirani, and J.~Friedman, ``The elements of statistical
  learning: Data mining, inference, and prediction,'' \emph{Springer New York},
  2009.

\bibitem{domingos2012few}
P.~Domingos, ``A few useful things to know about machine learning,''
  \emph{Communications of the ACM}, vol.~55, no.~10, pp. 78--87, 2012.

\bibitem{zhang2021concentration}
H.~Zhang and S.~Chen, ``Concentration inequalities for statistical inference,''
  \emph{Communications in Mathematical Research}, vol.~37, no.~1, pp. 1--85,
  2021.

\bibitem{bousquet2002stability}
O.~Bousquet and A.~Elisseeff, ``Stability and generalization,'' \emph{The
  Journal of Machine Learning Research}, vol.~2, pp. 499--526, 2002.

\bibitem{gaudard1989sigma}
M.~Gaudard and D.~Hadwin, ``Sigma-algebras on spaces of probability measures,''
  \emph{Scandinavian Journal of Statistics}, pp. 169--175, 1989.

\bibitem{wang2022comprehensive}
Q.~Wang, Y.~Ma, K.~Zhao, and Y.~Tian, ``A comprehensive survey of loss
  functions in machine learning,'' \emph{Annals of Data Science}, vol.~9,
  no.~2, pp. 187--212, 2022.

\bibitem{weed2019sharp}
J.~Weed and F.~Bach, ``Sharp asymptotic and finite-sample rates of convergence
  of empirical measures in {Wasserstein} distance,'' \emph{Bernoulli}, vol.~25,
  no.~4A, pp. 2620--2648, 2019.

\bibitem{cui2015data}
X.~Cui, V.~Goel, and B.~Kingsbury, ``Data augmentation for deep neural network
  acoustic modeling,'' \emph{IEEE/ACM Transactions on Audio, Speech, and
  Language Processing}, vol.~23, no.~9, pp. 1469--1477, 2015.

\bibitem{shorten2019survey}
C.~Shorten and T.~M. Khoshgoftaar, ``A survey on image data augmentation for
  deep learning,'' \emph{Journal of Big Data}, vol.~6, no.~1, pp. 1--48, 2019.

\bibitem{shorten2021text}
C.~Shorten, T.~M. Khoshgoftaar, and B.~Furht, ``Text data augmentation for deep
  learning,'' \emph{Journal of Big Data}, vol.~8, pp. 1--34, 2021.

\bibitem{goodfellow2015explaining}
I.~Goodfellow, J.~Shlens, and C.~Szegedy, ``Explaining and harnessing
  adversarial examples,'' in \emph{International Conference on Learning
  Representations}, 2015.

\bibitem{lecun1998gradient}
Y.~LeCun, L.~Bottou, Y.~Bengio, and P.~Haffner, ``Gradient-based learning
  applied to document recognition,'' \emph{Proceedings of the IEEE}, vol.~86,
  no.~11, pp. 2278--2324, 1998.

\bibitem{santos2022avoiding}
C.~F. G.~D. Santos and J.~P. Papa, ``Avoiding overfitting: A survey on
  regularization methods for convolutional neural networks,'' \emph{ACM
  Computing Surveys (CSUR)}, vol.~54, no. 10s, pp. 1--25, 2022.

\bibitem{germain2009pac}
P.~Germain, A.~Lacasse, F.~Laviolette, and M.~Marchand, ``{PAC-Bayesian}
  learning of linear classifiers,'' in \emph{Proceedings of the 26th Annual
  International Conference on Machine Learning}, 2009, pp. 353--360.

\bibitem{germain2016pac}
P.~Germain, F.~Bach, A.~Lacoste, and S.~Lacoste-Julien, ``{PAC}-{Bayesian}
  theory meets {Bayesian} inference,'' \emph{Advances in Neural Information
  Processing Systems}, vol.~29, 2016.

\bibitem{bertsimas2018characterization}
D.~Bertsimas and M.~S. Copenhaver, ``Characterization of the equivalence of
  robustification and regularization in linear and matrix regression,''
  \emph{European Journal of Operational Research}, vol. 270, no.~3, pp.
  931--942, 2018.

\bibitem{wainwright2019high}
M.~J. Wainwright, \emph{High-Dimensional Statistics: A Non-Asymptotic
  Viewpoint}.\hskip 1em plus 0.5em minus 0.4em\relax Cambridge University
  Press, 2019, vol.~48.

\bibitem{xu2017information}
A.~Xu and M.~Raginsky, ``Information-theoretic analysis of generalization
  capability of learning algorithms,'' \emph{Advances in Neural Information
  Processing Systems}, vol.~30, 2017.

\bibitem{wang2019information}
H.~Wang, M.~Diaz, J.~C.~S. Santos~Filho, and F.~P. Calmon, ``An
  information-theoretic view of generalization via {Wasserstein} distance,'' in
  \emph{2019 IEEE International Symposium on Information Theory (ISIT)}.\hskip
  1em plus 0.5em minus 0.4em\relax IEEE, 2019, pp. 577--581.

\bibitem{rodriguez2021tighter}
B.~Rodr{\'\i}guez~G{\'a}lvez, G.~Bassi, R.~Thobaben, and M.~Skoglund, ``Tighter
  expected generalization error bounds via {Wasserstein} distance,''
  \emph{Advances in Neural Information Processing Systems}, vol.~34, pp.
  19\,109--19\,121, 2021.

\bibitem{chen2020distributionally}
R.~Chen, I.~C. Paschalidis \emph{et~al.}, ``Distributionally robust learning,''
  \emph{Foundations and Trends{\textregistered} in Optimization}, vol.~4, no.
  1-2, pp. 1--243, 2020.

\bibitem{givens1984class}
C.~R. Givens and R.~M. Shortt, ``A class of {Wasserstein} metrics for
  probability distributions,'' \emph{Michigan Mathematical Journal}, vol.~31,
  no.~2, pp. 231--240, 1984.

\bibitem{romisch1991stability}
W.~R{\"o}misch and R.~Schultz, ``Stability analysis for stochastic programs,''
  \emph{Annals of Operations Research}, vol.~30, no.~1, pp. 241--266, 1991.

\bibitem{shafieezadeh2019regularization}
S.~Shafieezadeh-Abadeh, D.~Kuhn, and P.~M. Esfahani, ``Regularization via mass
  transportation,'' \emph{Journal of Machine Learning Research}, vol.~20, no.
  103, pp. 1--68, 2019.

\bibitem{blanchet2019robust}
J.~Blanchet, Y.~Kang, and K.~Murthy, ``Robust {Wasserstein} profile inference
  and applications to machine learning,'' \emph{Journal of Applied
  Probability}, vol.~56, no.~3, pp. 830--857, 2019.

\bibitem{mohan2012structured}
K.~Mohan, M.~Chung, S.~Han, D.~Witten, S.-I. Lee, and M.~Fazel, ``Structured
  learning of gaussian graphical models,'' \emph{Advances in Neural Information
  Processing Systems}, vol.~25, 2012.

\bibitem{chen2010shrinkage}
Y.~Chen, A.~Wiesel, Y.~C. Eldar, and A.~O. Hero, ``Shrinkage algorithms for
  mmse covariance estimation,'' \emph{IEEE Transactions on Signal Processing},
  vol.~58, no.~10, pp. 5016--5029, 2010.

\bibitem{nguyen2022distributionally}
V.~A. Nguyen, D.~Kuhn, and P.~Mohajerin~Esfahani, ``Distributionally robust
  inverse covariance estimation: The wasserstein shrinkage estimator,''
  \emph{Operations Research}, vol.~70, no.~1, pp. 490--515, 2022.

\bibitem{Ledoit2012Nonlinear}
O.~Ledoit and M.~Wolf, ``{Nonlinear shrinkage estimation of large-dimensional
  covariance matrices},'' \emph{The Annals of Statistics}, vol.~40, no.~2, pp.
  1024 -- 1060, 2012.

\bibitem{kakade2010learning}
S.~Kakade, O.~Shamir, K.~Sindharan, and A.~Tewari, ``Learning exponential
  families in high-dimensions: Strong convexity and sparsity,'' in
  \emph{Proceedings of the Thirteenth International Conference on Artificial
  Intelligence and Statistics}, 2010, pp. 381--388.

\bibitem{huber1964robust}
P.~J. Huber, ``Robust estimation of a location parameter,'' \emph{The Annals of
  Mathematical Statistics}, vol.~35, no.~1, pp. 73--101, 1964.

\bibitem{huber2009robust}
P.~J. Huber and e.~Ronchetti, Elvezio~M., \emph{Robust Statistics},
  2nd~ed.\hskip 1em plus 0.5em minus 0.4em\relax Wiley, 2009.

\bibitem{vdv2000asymptotic}
A.~W. Van~der Vaart, \emph{Asymptotic Statistics}.\hskip 1em plus 0.5em minus
  0.4em\relax Cambridge University Press, 2000.

\bibitem{shapiro2009lectures}
A.~Shapiro, D.~Dentcheva, and A.~Ruszczynski, \emph{Lectures on Stochastic
  Programming: Modeling and Theory}.\hskip 1em plus 0.5em minus 0.4em\relax
  SIAM, 2009.

\bibitem{blanchet2021distributionally}
J.~Blanchet, L.~Chen, and X.~Y. Zhou, ``Distributionally robust mean-variance
  portfolio selection with {Wasserstein} distances,'' \emph{Management
  Science}, 2021.

\bibitem{gotoh2021calibration}
J.-Y. Gotoh, M.~J. Kim, and A.~E. Lim, ``Calibration of distributionally robust
  empirical optimization models,'' \emph{Operations Research}, vol.~69, no.~5,
  pp. 1630--1650, 2021.

\bibitem{anderson1979optimal}
B.~D. Anderson and J.~B. Moore, \emph{Optimal Filtering}.\hskip 1em plus 0.5em
  minus 0.4em\relax Courier Corporation, 1979.

\bibitem{hassibi1996linear}
B.~Hassibi, A.~H. Sayed, and T.~Kailath, ``Linear estimation in {Krein} spaces.
  i. theory,'' \emph{IEEE Transactions on Automatic Control}, vol.~41, no.~1,
  pp. 18--33, 1996.

\bibitem{wang2022thesis}
S.~Wang, ``Distributionally robust state estimation,'' Ph.D. dissertation,
  National University of Singapore, 1 2022.

\bibitem{van2015distributionally}
B.~P. Van~Parys, D.~Kuhn, P.~J. Goulart, and M.~Morari, ``Distributionally
  robust control of constrained stochastic systems,'' \emph{IEEE Transactions
  on Automatic Control}, vol.~61, no.~2, pp. 430--442, 2015.

\bibitem{wang2022distributionally}
S.~Wang, ``Distributionally robust state estimation for nonlinear systems,''
  \emph{IEEE Transactions on Signal Processing}, vol.~70, pp. 4408--4423, 2022.

\bibitem{gao2022wasserstein}
R.~Gao, X.~Chen, and A.~J. Kleywegt, ``Wasserstein distributionally robust
  optimization and variation regularization,'' \emph{Operations Research},
  2022.

\bibitem{ben2013robust}
A.~Ben-Tal, D.~Den~Hertog, A.~De~Waegenaere, B.~Melenberg, and G.~Rennen,
  ``Robust solutions of optimization problems affected by uncertain
  probabilities,'' \emph{Management Science}, vol.~59, no.~2, pp. 341--357,
  2013.

\bibitem{van2021data}
B.~P. Van~Parys, P.~M. Esfahani, and D.~Kuhn, ``From data to decisions:
  Distributionally robust optimization is optimal,'' \emph{Management Science},
  vol.~67, no.~6, pp. 3387--3402, 2021.

\bibitem{vaart1996weak}
A.~W. Van~der Vaart and J.~A. Wellner, \emph{Weak Convergence and Empirical
  Processes}.\hskip 1em plus 0.5em minus 0.4em\relax Springer, 1996.

\bibitem{duchi2021statistics}
J.~C. Duchi, P.~W. Glynn, and H.~Namkoong, ``Statistics of robust optimization:
  A generalized empirical likelihood approach,'' \emph{Mathematics of
  Operations Research}, vol.~46, no.~3, pp. 946--969, 2021.

\bibitem{long2022robust}
D.~Z. Long, M.~Sim, and M.~Zhou, ``Robust satisficing,'' \emph{Operations
  Research}, 2022.

\bibitem{billingsley1999convergence}
P.~Billingsley, \emph{Convergence of Probability Measures}, 2nd~ed.\hskip 1em
  plus 0.5em minus 0.4em\relax John Wiley \& Sons, 1999.

\bibitem{fournier2015rate}
N.~Fournier and A.~Guillin, ``On the rate of convergence in {Wasserstein}
  distance of the empirical measure,'' \emph{Probability Theory and Related
  Fields}, vol. 162, no.~3, pp. 707--738, 2015.

\bibitem{gotoh2018robust}
J.-Y. Gotoh, M.~J. Kim, and A.~E. Lim, ``Robust empirical optimization is
  almost the same as mean--variance optimization,'' \emph{Operations Research
  Letters}, vol.~46, no.~4, pp. 448--452, 2018.

\bibitem{hampel1974influence}
F.~R. Hampel, ``The influence curve and its role in robust estimation,''
  \emph{Journal of the American Statistical Association}, vol.~69, no. 346, pp.
  383--393, 1974.

\bibitem{cohn2013measure}
D.~L. Cohn, \emph{Measure Theory}, 2nd~ed.\hskip 1em plus 0.5em minus
  0.4em\relax Springer, 2013, vol.~1.

\end{thebibliography}




\newpage
\appendices

\section{Interpretations of \captext{\eqref{eq:true-opt}} in Other Areas}\label{append:extensive-reading}
The optimization problem \eqref{eq:true-opt} is also popular in several other areas than machine learning where the specific meanings that it conveys vary from one to another. Non-exhaustive examples are as follows.
\begin{enumerate}
    \item In applied statistics, \eqref{eq:true-opt} can be an M-estimation model where $h$ is termed a random criterion function and $\bm x$ is usually the parameter of the distribution $\Po$ such as the location parameter \citep{huber1964robust,huber2009robust}, \citep[Chap. 5]{vdv2000asymptotic}. The parameter $\bm x$ is unknown and to be estimated from $n$ i.i.d.~observations $\{\xi_i\}_{i\in[n]}$.
    
    \item In operations research and management science, \eqref{eq:true-opt} is a stochastic programming model \citep{shapiro2009lectures,anderson2020can}, where $h$ is the cost function such as mean(-variance) objective \citep{blanchet2021distributionally,gotoh2021calibration} and value-at-risk (VaR) objective \citep[p. 16]{shapiro2009lectures}; VaR can be reformulated to the form of \eqref{eq:true-opt}. Typical examples include the portfolio selection problem and the inventory control problem \citep{shapiro2009lectures}. Specifically, taking the one-stage inventory control problem as an example, $\rv \xi$ represents the random demand and $\bm x$ is the optimal ordering quantity.

    \item In statistical signal processing, \eqref{eq:true-opt} can be a state estimation model \citep[Chap. 2]{anderson1979optimal}, \citep[Eq.~(6)]{hassibi1996linear}, \citep[pp.~111]{wang2022thesis}, in which the unobservable state is to be estimated from the observable measurements. Specifically, $\rv \xi \defeq (\rvu i, \rvu o)$ where $\rvu i$ is the state vector and $\rvu o$ is the measurement vector; $\bm x$ is the parameter of a state estimator which is a function from $\rvu o$ to $\rvu i$. State estimation problems are also popular in the machine learning community, for example, inference problems for a hidden Markov process (from observable variables $\rvu o$ to hidden variables $\rvu i$); see, e.g., \citet[Chap. 13]{bishop2006pattern}.
    
\end{enumerate}

When the true underlying distribution $\Po$ is unknown, it can be estimated from observations $\{\xi_i\}_{i\in[n]}$ (usually i.i.d.) and this class of problems is termed data-driven problems in the literature \citep{kuhn2019wasserstein}. Most applied statistics problems, operations research problems, and supervised machine learning problems belong to this category. 
The distribution $\Po$ may alternatively be obtained from physics (e.g., from a hidden Morkov process model) and we term this type of problem as \bfit{model-driven} problems in this paper. Most signal processing problems (e.g., state estimation problems) and engineering automatic control problems \citep{van2015distributionally}, among many others, are members of this class. 

In this paper, we collectively refer to the two cases as machine learning problems and distinguish them as data-driven machine learning problems and model-driven machine learning problems. This is because the two kinds of problems share the same philosophy that a proportion of data is used to predict the rest by leveraging the distribution $\Po$ (or an estimate of $\Po$). For example, in the data-driven setting, the feature data $\rvu i$ can be used to produce the predicted response $\hat{\rvu o}$ such that the predicted response $\hat{\rvu o}$ is close to the true response $\rvu o$. For another example, in the model-driven setting, the measurement $\rvu o$ can be used to produce the estimated state $\hat{\rvu i}$ such that the estimated state $\hat{\rvu i}$ is close to the true state $\rvu i$. However, a model-driven problem can be transformed into a data-driven counterpart if the integral $\E_{\Po}h(\bm x, \rv \xi)$ is hard to be analytically evaluated, and therefore, a Monte--Carlo sampling technique (e.g., importance sampling) is used to simulate data from $\Po$ and then approximate the integral by a $n$-sample weighted sum
$
\sum^n_{i=1} \mu_i h(\bm x, \rv \xi_i)
$ \citep[Chap. 11]{bishop2006pattern},
where $\Po$ is approximated by a discrete distribution $\sum^n_{i=1} \mu_i \delta_{\rv \xi_i}$ and $\delta_{\rv \xi_i}$ is the Dirac measure at $\rv \xi_i$; $\mu_i$ is the weight of the sample $\rv \xi_i$. An excellent example of using Monte--Carlo sampling to transform a model-driven problem to a data-driven counterpart is the particle filter for state estimation of a hidden Markov model \citep[Chap. 13.3.4]{bishop2006pattern}, \citep{wang2022distributionally}. Hence, without loss of practical generality, it is sufficient to investigate only the data-driven case, which is the technical focus of this paper.

\section{Supplementary Literature Review on Data-Driven ERM Model}\label{append:literature-review-ERM}
The generalization performance of ERM is unsatisfactory due to the over-fitting issue on limited data samples.\footnote{The phenomenon of \quotemark{over-fitting} in applied statistics and machine learning is also known as \quotemark{optimizer's curse} in operations research.} Bayesian methods are potential in reducing the generalization errors \citep{wu2018bayesian,anderson2020can}. Regularized SAA methods are also popular to combat over-fitting and reduce the generalization errors \citep{goodfellow2016deep,shafieezadeh2019regularization,germain2016pac}. The DRO methods can provide a generalization error bound that is independent of the complexity of the hypothesis class (e.g., Vapnik--Chervonenkis dimension, Rademacher complexity) and even applicable for hypothesis classes that have infinite Vapnik--Chervonenkis (VC) dimensions \citep{shafieezadeh2019regularization}. An exciting property of the DRO method is that, under some conditions, it is equivalent to a regularized empirical risk minimization method \citep{shafieezadeh2019regularization,gao2022wasserstein}, \citep[Thm. 6.3]{esfahani2018data}, \citep[Thm. 10]{kuhn2019wasserstein}, \citep[Chap. 4]{chen2020distributionally}, which explains why the DRO method can generalize well.

\section{Technical Preliminaries}\label{append:prelimilary}
This section summarizes some existing results about ERM and DRO models that are important to motivate and justify the new results in this paper. For those that are less motivational or not frequently referred to, we just provide citations in proper positions. 

\subsection{Statistical Similarity Measures and Distributional Balls}\label{append:measures-balls}
\subsubsection{\captext{$\phi$}-Divergence Distributional Ball}
If for every $\P \in \cal M(\Xi)$, $\P$ is absolutely continuous with respect to $\Pb$,\footnote{Absolute continuity implies that the support set of $\P$ is no larger than that of $\Pb$.} we can define the $\phi$-divergence (i.e., $f$-divergence) of $\P$ from $\Pb$:
\begin{equation}\label{eq:phi-divergence}
F_\phi(\P\|\Pb) = \int_{\Xi} \phi\left(\frac{\d \P}{\d \Pb}\right) \Pb(\d \rv \xi),
\end{equation}
where $\phi:[0,\infty) \to (-\infty, \infty]$ is a convex function such that $\phi(1) = 0$ and $0\phi(0/0) = 0$; ${\d \P}/{\d \Pb}$ is a Radon-Nikodym derivative. When $\phi(t) \defeq t\ln t, ~\forall t >0$, the $\phi$-divergence specifies the Kullback--Leibler (KL) divergence. Other choices for $\phi(\cdot)$ may be found in, e.g., \citet[Table 3]{rahimian2022frameworks}, \citet[Table 2]{ben2013robust}.

A $\phi$-divergence distributional ball, induced by the function $\phi$, is a set of distributions on $(\Xi, \cal B_{\Xi})$ that are close to the reference distribution $\Pb$ and defined as
\[
B_{\epsilon, \phi} (\Pb) \defeq \{\P \in \cal M(\Xi) | F_\phi(\P \| \Pb) \le \epsilon\},
\]
where $\epsilon \ge 0$ is the radius of the ball. When $\epsilon = 0$, the distributional ball $\cal B_{\epsilon,\phi}(\Pb)$ only contains the singleton $\Pb$.
Some authors may define a $\phi$-divergence ball as
\[
B_{\epsilon, \phi} (\Pb) \defeq \{\P \in \cal M(\Xi) | F_\phi( \Pb \| \P) \le \epsilon\}
\]
which exchanges the order of $\P$ and $\Pb$; see, e.g., \citet{van2021data}. Since a $\phi$-divergence is not necessarily a metric, the two definitions are not equivalent.

\subsubsection{Wasserstein Distributional Ball and Its Concentration Property}
Let $(\Xi, d)$ be a metric space. Suppose $\P$ and $\Pb$ are two probability measures on $(\Xi, \cal B_{\Xi})$. The order-$p$ Wasserstein distance between $\P$ and $\Pb$, induced by the metric $d$, is defined by
\begin{equation}\label{eq:wasserstein-distance}
    W_p(\P, \Pb) = {\left[\inf_{\pi \in \cal M (\Xi \times \Xi)} \E_{\pi} d^p(\rv \xi_1, \rv \xi_2)\right]}^{\frac{1}{p}} = {\left[\inf_{\pi \in \cal M (\Xi \times \Xi)} \int_{\Xi \times \Xi} d^p(\rv \xi_1, \rv \xi_2) \pi(\d \rv \xi_1, \d \rv \xi_2) \right]}^{\frac{1}{p}},
\end{equation}
where $p \ge 1$ and $\pi$ is a coupling of $\P$ and $\Pb$. Usually, the metric $d$ is induced by a norm $\|\cdot\|$ on $\Xi$.

An order-$p$ Wasserstein distributional ball is a set of distributions on $(\Xi, \cal B_{\Xi})$ that are close to the reference distribution $\Pb$ and defined as
\[
B_{\epsilon,p} (\Pb) \defeq \{\P \in \cal M(\Xi) | W_{p}(\P, \Pb) \le \epsilon\},
\]
where $\epsilon \ge 0$ is the radius of the ball. When $\epsilon = 0$, the distributional ball $ B_{\epsilon,p}(\Pb)$ only contains the singleton $\Pb$.

Wasserstein distributional balls have concentration properties in the data-driven setting. Suppose the true underlying distribution $\Po$ is light-tailed: i.e., there exist $\alpha > p \ge 1$ (where $p \ne m/2$ and $m$ is the dimension of $\rv \xi$) and $0<A<\infty$ such that $\E_{\Po}\left[\exp \left(\|\rv \xi\|^{\alpha}\right)\right] \leq A$. Then, there exist constants $c_{1}, c_{2}>0$ that depend on $\Po$ only through $\alpha$, $A$, and $m$ such that for any $\eta \in(0,1]$ the concentration inequality 
\begin{equation}\label{eq:wasserstein-concentration}
\Po^n \left[\Po \in {B}_{\epsilon_n, p}(\Pnh)\right] \geq 1-\eta
\end{equation}
holds if
\[
\epsilon_n \ge 
\begin{cases}
    \left(\frac{\log \left(c_{1} / \eta\right)}{c_{2} n}\right)^{\min \{p / m, 1 / 2\}} & \text { if } n \geq \frac{\log \left(c_{1} / \eta\right)}{c_{2}}, \\
    \left(\frac{\log \left(c_{1} / \eta\right)}{c_{2} n}\right)^{p / \alpha} & \text { if } n<\frac{\log \left(c_{1} / \eta\right)}{c_{2}}.
\end{cases}
\]
This result is reported in \citet[Thm. 18]{kuhn2019wasserstein}.
However, this concentration bound is more a theoretical than a practical result because for an unknown distribution $\Po$, we do not know the associated constants $\alpha$ and $A$ in the light-tail assumption (so that $c_1$ and $c_2$ are unknown).

When the support set $\Xi$ is finite and bounded (i.e., $\Po$ is discrete), there exist concentration properties of $\Pnh$ with respect to the Wasserstein distance that do not depend on unknown constants; see, e.g., \citet[pp. 42]{chen2020distributionally}.

\subsection{Overfitting and Generalization Error}\label{append:gene-error}
Consider the Sample-Average Approximation (SAA) model with $n$ i.i.d.~samples:
\[
\min_{\bm x} \E_{\Pnh} h(\bm x, \rv \xi) = \min_{\bm x} \frac{1}{n} \sum^n_{i=1}  h(\bm x, \rv \xi_i).
\]
To avoid notational clutter, unless stated otherwise in the following contexts, we implicitly mean that the feasible region of $\bm x$ is $\cal X$. That is, the minimization is conducted over $\bm x \in \cal X$.
We have 
\[
\begin{array}{cl}
\E_{\Pon} \left[\displaystyle \min_{\bm x} \E_{\Pnh} h(\bm x, \rv \xi) \right] &= \E_{\Pon} \Big[\displaystyle \min_{\bm x} \frac{1}{n} \sum^n_{i=1} h(\bm x, \rv \xi_i)\Big] \\
&\le  \Big[\displaystyle \min_{\bm x} \frac{1}{n} \sum^n_{i=1}  \E_{\Po} h(\bm x, \rv \xi_i)\Big] 
= \displaystyle \min_{\bm x} \E_{\Po} h(\bm x, \rv \xi).
\end{array}
\]
Suppose $\bmh x_n \in \argmin_{\bm x} \E_{\Pnh} h(\bm x, \rv \xi)$ and $\bm x_0 \in \argmin_{\bm x} \E_{\Po} h(\bm x, \rv \xi)$. We have
\[
\E_{\Pon} \E_{\Pnh} h(\bmh x_n, \rv \xi) \le \E_{\Po} h(\bm x_0, \rv \xi) \le \E_{\Po} h(\bmh x_n, \rv \xi).
\]
Hence, at the decision $\bmh x_n$, there is always a performance gap 
\[
\E_{\Pon} \left[\E_{\Po} h(\bmh x_n, \rv \xi) - \E_{\Pnh} h(\bmh x_n, \rv \xi)\right] \ge 0
\]
between the SAA optimization model $\min_{\bm x} \E_{\Pnh} h(\bm x, \rv \xi)$ and the true optimization model $\min_{\bm x} \E_{\Po} h(\bm x, \rv \xi)$. In machine learning, this gap is termed \quotemark{expected (or average) generalization error gap} from the training error of the SAA model,\footnote{The random variable $\E_{\Po} h(\bmh x_n, \rv \xi) - \E_{\Pnh} h(\bmh x_n, \rv \xi)$ is called the generalization error gap of the model aligned to the given training data set $\{\rv \xi_i\}_{i \in [n]}$.} which is a quantitative measure of \quotemark{overfitting} of the SAA model that is aligned to the given training data set $\{\rv \xi_i\}_{i \in [n]}$. In operations research, this gap is termed \quotemark{optimizer's curse}\footnote{In the operations research literature, some authors may also refer to the performance gap $\E_{\Po} h(\bm x_0, \rv \xi) - \E_{\Pon} \E_{\Pnh} h(\bmh x_n, \rv \xi) \ge 0$ as the \quotemark{optimizer's curse}.} because the \quotemark{optimal} cost estimated by the SAA model is not reachable in practice; the expected true cost is always larger than the expected SAA-estimated cost. Note that, for example, in business decision making, we allow the predicted budget to be larger than the true overhead. But the situation of the budget crisis is dangerous; this is where the \quotemark{curse} happens.

This gap monotonically decreases as $n$ gets larger \citep[Prop. 5.6]{shapiro2009lectures}:
\[
\E_{\Pon} \E_{\Pnh} h(\bmh x_n, \rv \xi) \le \E_{\Po^{n+1}} \E_{\hat{\P}_{n+1}} h(\bmh x_{n+1}, \rv \xi),~~~\forall n.
\]
When $n$ tends to infinity, the gap disappears. Therefore, the SAA model is asymptotically optimal and is the best choice if $n$ is sufficiently large \citep{anderson2020can}.

The SAA method has the following type of $(\epsilon, \eta)$-PAC concentration property:
\[
\Po^n[\E_{\Po} h(\bmh x_n, \rv \xi) - \E_{\Pnh} h(\bmh x_n, \rv \xi) \le \epsilon] \ge 1 - \eta,
\]
which is termed PAC generalization error bound and might be specified by, e.g., Hoeffding's inequality and Bernstein inequality \citep[Chaps. 2-3]{wainwright2019high}, McDiarmid's inequality \citep{zhang2021concentration}, PAC-Bayesian bounds \citep[Sec. 2]{germain2016pac}, information-theoretical bounds \citep{xu2017information,wang2019information,rodriguez2021tighter}, among many others \citep{zhang2021concentration}. This property is attributed to the weak convergence of the empirical probability measure $\Pnh$ to the true underlying measure $\Po$ \citep{weed2019sharp,wainwright2019high,vaart1996weak}.

In machine learning, one might be interested in uniform generalization error bound:
\[
\Po^n[\E_{\Po} h(\bm x, \rv \xi) - \E_{\Pnh} h(\bm x, \rv \xi) \le \epsilon] \ge 1 - \eta,~~~\forall \bm x, 
\]
where $\eta$ is independent of $\bm x$ but $\epsilon$ may depend on $\bm x$. The uniform generalization error bound is usually useful in model selection (i.e., model comparison): The hypothesis $\bm x$ at which the value of $\E_{\Pnh} h(\bm x, \rv \xi) + \epsilon_{\bm x}$ is smaller is better. 
In applied statistics and operations research, the generalization error gap may also be defined as the difference between the true cost $\E_{\Po} h(\bmb x, \rv \xi)$ and the estimated cost $\E_{\Pb} h(\bmb x, \rv \xi)$, which is an estimate of $\E_{\Po} h(\bmb x, \rv \xi)$, at the given optimal decision $\bmb x$:
\[
\operatorname{Pr}[\E_{\Po} h(\bmb x, \rv \xi) - \E_{\Pb} h(\bmb x, \rv \xi) \le \epsilon] \ge 1 - \eta,
\]
where $\Pb$ is any estimate of $\Po$, not limited to $\Pnh$, and $\bmb x \in \argmin_{\bm x} \E_{\Pb} h(\bm x, \rv \xi)$; cf. \eqref{eq:nominal-method}, \eqref{eq:gen-error}, \eqref{eq:bayesian-method}, \eqref{eq:dro-method}, and \eqref{eq:regularization-method}.

Two-sided versions of generalization error gaps, for example,
\[
\Po^n[|\E_{\Po} h(\bmh x_n, \rv \xi) - \E_{\Pnh} h(\bmh x_n, \rv \xi)| \le \epsilon] \ge 1 - \eta,
\]
are straightforward to be stated and therefore omitted in this subsection.

Since all these definitions for generalization errors are meaningful and practical in their own rights [see, e.g., \citet{bousquet2002stability}], readers should be careful about the specific meaning of the term \quotemark{generalization error} whenever it appears in this paper (and other literature in different areas).

\subsection{Connection Between Wasserstein DRO Models and Regularized SAA Models}\label{append:connection-between-DRO-regularization}
There exists a close connection between the Wasserstein DRO model \eqref{eq:dro-method} and the regularized SAA model \eqref{eq:regularization-method} under some conditions.
For every given $\bm x$, consider the inner maximization sub-problem of the Wasserstein DRO problem \eqref{eq:dro-method}:
\begin{equation}\label{eq:wasserstein-DRO}
    \begin{array}{cl}
       \displaystyle \max_{\P}  & \int_{\Xi} h(\bm x, \rv \xi) \P(\d \xi) \\
       \stt  & W_p(\P, \Pb) \le \epsilon.
    \end{array}
\end{equation}
The following result is attributed to \citet[Thm. 6.3]{esfahani2018data}. If $h$ is convex in $\rv \xi \in \Xi \subseteq \R^m$, $\Xi$ is closed and convex, $p = 1$, $d$ is induced by a norm $\|\cdot\|$, and $\Pb \defeq \Pnh$ (i.e., the data-driven setting), then the DRO objective function in \eqref{eq:wasserstein-DRO} is point-wisely upper-bounded by a regularized SAA model:
\begin{equation}\label{eq:wasserstein-dro-equal-rsaa}
\max_{\P: W_p(\P, \Pb) \le \epsilon} \E_{\P} h(\bm x, \rv \xi) \le \epsilon \cdot f(\bm x)  + \frac{1}{n}\sum^n_{i = 1} h(\bm x, \rv \xi_i),~~~\forall \bm x,
\end{equation}
where
\[
f(\bm x) \defeq \max_{\bm \theta \in \Xi}\{\|\bm \theta\|_*: h^*(\bm x, \bm \theta) < \infty\},
\]
is a regularizer, $\|\cdot\|_*$ is the dual norm of the norm $\|\cdot\|$, and $h^*(\bm x, \bm \theta)$ is the point-wise convex conjugate function of the function $h(\bm x, \rv \xi)$ for every fixed $\bm x$. If $h(\bm x, \rv \xi)$ is further $L_{\bm x}$-Lipschitz continuous in $\rv \xi$ for every $\bm x$, then $f(\bm x) \le L_{\bm x}$ \citep[Prop. 6.5]{esfahani2018data}. If $\Xi = \R^m$, the equality in \eqref{eq:wasserstein-dro-equal-rsaa} holds. Note that this equality connection is valid only if $h$ is convex in $\rv \xi \in \Xi$ and $\Xi = \R^m$, which is restrictive.

Extended discussions can be found in, e.g., \citet{shafieezadeh2019regularization,blanchet2019robust,gao2022finite,gao2022wasserstein}.

\begin{remark}
There also exist similar results between robustness and regularization when the distributional ball is defined using $\phi$-divergence; see, e.g., \citet{duchi2021statistics}.
\stp
\end{remark}

\section{Additional Discussions on The Concept System of Distributional Robustness}

\subsection{Extra Concepts of Distributional Robustness}\label{append:extra-concepts}
In the main body of the paper, we defined the \quotemark{distributional robustness} of a given solution; cf. Definition \ref{def:sol-dist-robust}. This appendix defines the supplementary concepts of \quotemark{distributional robustness} from other perspectives. One may skip over this appendix if not interested.

\begin{philosophy}[Robust Optimization in Terms of Solution]\label{phi:robustness-sol}
An optimization model is robust if small perturbations in the model do not lead to large changes in the solution.\footnote{If the model is parameterized by some parameters, then the perturbations of the model are reflected in the perturbations of the parameters of this model.} In other words, the solution is insensitive to (small) model perturbations. \stp
\end{philosophy}

Considering the true model \eqref{eq:true-opt} and its induced optimal value functional \eqref{eq:model-functional}, Philosophy \ref{phi:robustness-sol} can be mathematically specified in Definition \ref{def:dist-robust-sol}.

\begin{definition}[Distributional Robustness in Solution]\label{def:dist-robust-sol}
Suppose $T(\P)$ has a unique solution for every $\P \in B_{\epsilon}(\Po)$. The optimal value functional $T$, or equivalently the model set $\{\P \mapsto T(\P)| \P \in B_{\epsilon}(\Po)\}$, is $(\epsilon, L)$-distributionally robust at $\Po$, in terms of solution, if for every $\P \in B_{\epsilon}(\Po)$, we have
\[
\Big\|\argmin_{\bm x \in \cal X} \E_{\P}h(\bm x, \rv \xi) - \argmin_{\bm x \in \cal X} \E_{\Po}h(\bm x, \rv \xi)\Big\| \le L,
\]
where $\|\cdot\|$ denotes any proper norm on $\cal X$ and the value of $L$, which is the smallest value satisfying the above display, depends on the value of $\epsilon$. Here, given $\epsilon$, $L$ is an \bfit{absolute robustness measure} of the optimal value functional $T$ in terms of solution at $\Po$: Given $\epsilon$, the smaller the value of $L$, the more robust the optimal value functional $T$ at $\Po$ in terms of solution. \stp
\end{definition}

As we can see, mathematically, distributional robustness is a property of a model set (or equivalently a property of a model functional), rather than a property of a single model. However, in practice, a single model is said to be distributionally robust if its induced model functional is distributionally robust. This is the subtle difference between the intuitive (resp. philosophical) and formal (resp. mathematical) concepts of distributional robustness because model perturbations of a single model essentially induce a set of models.

\begin{philosophy}[Robust Optimization in Terms of Objective]\label{phi:robustness-obj}
An optimization model is robust if small perturbations in the model do not lead to large changes in the objective value. In other words, the objective value is insensitive to (small) model perturbations. \stp
\end{philosophy}

Considering the true model \eqref{eq:true-opt} and its induced optimal value functional \eqref{eq:model-functional}, Philosophy \ref{phi:robustness-obj} can be mathematically specified in Definition \ref{def:dist-robust-obj}.

\begin{definition}[Distributional Robustness in Objective]\label{def:dist-robust-obj}
The optimal value functional $T$, or equivalently the model set $\{\P \mapsto T(\P)| \P \in B_{\epsilon}(\Po)\}$, is $(\epsilon, L)$-distributionally robust at $\Po$, in terms of objective value, if for every $\P \in B_{\epsilon}(\Po)$, we have
\[
\Big|\min_{\bm x \in \cal X} \E_{\P}h(\bm x, \rv \xi) - \min_{\bm x \in \cal X} \E_{\Po}h(\bm x, \rv \xi)\Big| \le L,
\]
where the value of $L$, which is the smallest value satisfying the above display, depends on the value of $\epsilon$. Here, given $\epsilon$, $L$ is an absolute robustness measure of the optimal value functional $T$ in terms of objective value at $\Po$: Given $\epsilon$, the smaller the value of $L$, the more robust the optimal value functional $T$ at $\Po$ in terms of objective value. \stp
\end{definition}

Definition \ref{def:dist-robust-sol} and Definition \ref{def:dist-robust-obj} depict the distributional robustness of the optimal value functional $T$ at $\Po$, or equivalently, the robustness of the model set $\{\P \mapsto T(\P)| \P \in B_{\epsilon}(\Po)\}$ from two different perspectives. 

Two intuitive examples are given below. Example \ref{exm:robustness-in-sol} exhibits the large robustness in terms of solution, whereas Example \ref{exm:robustness-in-obj} demonstrates the large robustness in terms of objective value.

\begin{example}\label{exm:robustness-in-sol}
Suppose $x \in \cal X \defeq \R$, $\rv \xi \in \Xi \defeq \{-\delta, \delta\}$, and $h:\cal X \times \Xi \to \R$ where $\delta$ is an arbitrarily small positive real number. Consider an one-dimensional optimization problem $\min_{x} h(x, \xi)$ where
\[
h(x, \xi) \defeq 
\begin{cases}
    x^2  & \text{if }\xi = -\delta, \\
    x^2 + 1 & \text{if } \xi = \delta.
\end{cases}
\]
In this example, we assume that $\xi$ follows a degenerate distribution at $-\delta$ in the true case and at $\delta$ in the perturbed case. The optimal solution is $x = 0$ for both two cases, but the optimal objective values are different. Hence, in this example, the robustness measure of the associated model set in terms of solution is $0$, while that in terms of objective value is $1$. \stp
\end{example}

\begin{example}\label{exm:robustness-in-obj}
Suppose $x \in \cal X \defeq \R$, $\rv \xi \in \Xi \defeq \{-\delta, \delta\}$, and $h:\cal X \times \Xi \to \R$ where $\delta$ is an arbitrarily small positive real number. Consider an one-dimensional optimization problem $\min_{x} h(x, \xi)$ where
\[
h(x, \xi) \defeq
\begin{cases}
    x^2,  & \text{if } \xi = -\delta \\
    (x+1)^2, & \text{if } \xi = \delta.
\end{cases}
\]
In this example, we assume that $\xi$ follows a degenerate distribution at $-\delta$ in the true case and at $\delta$ in the perturbed case. The optimal objective value is zero for both two cases, but the optimal solutions are different. Hence, in this example, the robustness measure of the associated model set in terms of solution is $1$, while that in terms of objective value is $0$. \stp
\end{example}

In applied statistics, e.g., M-estimation, we focus on the robustness in terms of solution. This is because, for example, in the robust estimation of the location parameter of a distribution, we need to guarantee that the robust location estimate is sufficiently close to the true location (N.B. the location of a distribution is usually its mean). However, in many real-life applications in, e.g., operations research, management science, machine learning, and signal processing, we might be concerned with the robustness in terms of objective value because we only need to guarantee that the objective value does not significantly change when the model perturbations exist. 

The concept of \bfit{robust model functional}, or robust model set, in Definitions \ref{def:dist-robust-sol} and \ref{def:dist-robust-obj} are more ideological than practical because, in real-world problems, we are interested in finding a \quotemark{robust solution} that is workable for every model in a nominal model set, which gives birth to the concept of \bfit{robust solution} in Definition \ref{def:sol-dist-robust}, in contrast to the concept of robust model set in Definitions \ref{def:dist-robust-sol} and \ref{def:dist-robust-obj}.

\subsection{Distributionally Robust Optimization On the Whole Space \captext{$\cal M(\Xi)$}}\label{append:general-case-whole-space}
In this appendix, we discuss the general case where the nominal distributions are not limited to the subspace $\{\P|\Delta(\P, \Po) \le \epsilon\}$ for a specified $\epsilon$.

\subsubsection{Formalization of Distributionally Robust Optimization}
\begin{definition}[Distributionally Robust Optimization]\label{def:dist-robust-opt}
The distributionally robust counterpart for the model \eqref{eq:true-opt} is defined as
\begin{equation}\label{eq:dist-robust-opt}
    \begin{array}{cl}
       \displaystyle \min_{\bm x, L}  &  L \\
       \text{s.t.} & |\E_\P h(\bm x, \rv \xi) - \E_{\Po} h(\bm x_0, \rv \xi)| \le L \cdot \Delta(\P, \Po),~~~\forall \P \in \cal M (\Xi), \\
       & \bm x \in \cal X,
    \end{array}
\end{equation}
where $\bm x_0 \defeq \argmin_{\bm x} \E_{\Po} h(\bm x, \rv \xi)$.
If \eqref{eq:dist-robust-opt} has a finite solution $(\bm x^*, L^*)$, then $\bm x^*$ is distributionally robust with the \bfit{relative robustness measure} $L^*$. \stp
\end{definition}

Compared to Definition \ref{def:dist-robust-opt}, Definition \ref{def:dist-robust-opt-eps} is more specific because in practice, sometimes we are only required to consider the smaller distributional space $\{\P \in \cal M(\Xi)|\Delta(\P, \Po) \le \epsilon\}$ rather than the whole space $\cal M(\Xi)$. This is because additional distributional information $\Delta(\P, \Po) \le \epsilon$ is helpful in reducing the conservativeness of a distributionally robust solution. 

Interested readers are invited to see more motivations and discussions on Definition \ref{def:dist-robust-opt} in Appendix \ref{append:fractional-DRO}; one can ignore it without missing the main points of this paper.

\begin{definition}[One-Sided Distributionally Robust Optimization]\label{def:one-sided-dist-robust-opt}
The one-sided distributionally robust counterpart for the model \eqref{eq:true-opt} is defined as
\begin{equation}\label{eq:one-sided-dist-robust-opt}
    \begin{array}{cl}
       \displaystyle \min_{\bm x, L}  &  L \\
       \text{s.t.} & \E_\P h(\bm x, \rv \xi) - \E_{\Po} h(\bm x_0, \rv \xi) \le L \cdot \Delta(\P, \Po),~~~\forall \P \in \cal M (\Xi), \\
       & L \ge 0, \\
       & \bm x \in \cal X,
    \end{array}
\end{equation}
where $\bm x_0 \defeq \argmin_{\bm x} \E_{\Po} h(\bm x, \rv \xi)$.
If \eqref{eq:one-sided-dist-robust-opt} has a finite solution $(\bm x^*, L^*)$, then $\bm x^*$ is distributionally robust with the one-sided relative robustness measure $L^*$. \stp
\end{definition}

Other than the concepts of \quotemark{relative robustness measure} and \quotemark{absolute robustness measure}, we can also define the concept of \quotemark{local robustness measure}, which is placed in Appendix \ref{append:local-measure}. Interested readers are invited to see it for additional motivation and deeper understanding. However, one can ignore it without missing the main points of this paper.

\subsubsection{Practical Implementations of Distributionally Robust Optimization}
We discuss the practical case where $\Po$ is unknown but an estimate $\Pb$ is available.

\begin{definition}[Surrogate Distributionally Robust Optimization]\label{def:surrogate-dist-robust-opt}
The distributionally robust counterpart for the model \eqref{eq:true-opt} at the surrogate $\Pb$ is defined as
\begin{equation}\label{eq:surrogate-dist-robust-opt}
    \begin{array}{cl}
       \displaystyle \min_{\bm x, L}  &  L \\
       \text{s.t.} & |\E_\P h(\bm x, \rv \xi) - \E_{\Pb} h(\bmb x, \rv \xi)| \le L \cdot \Delta(\P, \Pb),~~~\forall \P \in \cal M (\Xi), \\
       & \bm x \in \cal X,
    \end{array}
\end{equation}
where $\bmb x \defeq \argmin_{\bm x} \E_{\Pb} h(\bm x, \rv \xi)$. For a data-driven problem, $\Pb$ can be chosen as $\Pnh$. \stp
\end{definition}

Problem \eqref{eq:surrogate-dist-robust-opt} is therefore termed the \bfit{distributionally robust counterpart} to the nominal Problem \eqref{eq:nominal-method}.

\begin{definition}[One-Sided Surrogate Distributionally Robust Optimization]\label{def:one-sided-surrogate-dist-robust-opt}
The one-sided distributionally robust counterpart for the model \eqref{eq:true-opt} at the surrogate $\Pb$ is defined as
\begin{equation}\label{eq:one-sided-surrogate-dist-robust-opt}
    \begin{array}{cl}
       \displaystyle \min_{\bm x, L}  &  L \\
       \text{s.t.} & \E_\P h(\bm x, \rv \xi) - \E_{\Pb} h(\bmb x, \rv \xi) \le L \cdot \Delta(\P, \Pb),~~~\forall \P \in \cal M (\Xi), \\
       & L \ge 0, \\
       & \bm x \in \cal X,
    \end{array}
\end{equation}
where $\bmb x \defeq \argmin_{\bm x} \E_{\Pb} h(\bm x, \rv \xi)$. \stp
\end{definition}

Problem \eqref{eq:one-sided-surrogate-dist-robust-opt} is termed the \bfit{one-sided distributionally robust counterpart} to the nominal Problem \eqref{eq:nominal-method}.

In operations research, an one-sided surrogate distributionally robust optimization model in Definition \ref{def:one-sided-surrogate-dist-robust-opt} is termed a {\it Robust Satisficing} model \citep{long2022robust}. Although the robust satisficing model in \citet{long2022robust} is built from other motivations rather than Definition \ref{def:one-sided-dist-robust-opt} and Definition \ref{def:one-sided-surrogate-dist-robust-opt}, it works as a specified instance of the proposed distributionally robust optimization framework in Definition \ref{def:one-sided-dist-robust-opt}.

The theorem below provides the rationale for the surrogate distributionally robust optimizations.

\begin{theorem}[Surrogate Distributionally Robust Optimization]\label{thm:surrogate-dist-robust-opt}
Let $(\bm x_1, L_1)$ solve the surrogate distributionally robust counterpart \eqref{eq:surrogate-dist-robust-opt} at $\Pb$ for the model \eqref{eq:true-opt}. Then $\bm x_1$ is distributionally robust (in the sense of Definition \ref{def:dist-robust-opt}) with robustness measure $\max\{L_1, L_2\}$, if $\Delta$ is a statistical distance on $\cal M(\Xi)$ and there exists a non-negative scalar $L_2 < \infty$ such that
\[
|\E_{\Pb} h(\bmb x, \rv \xi) - \E_{\Po} h(\bm x_0, \rv \xi)| \le L_2 \cdot \Delta(\Pb, \Po).
\]
Likewise, suppose $(\bm x_1, L_1)$ solves the one-sided surrogate distributionally robust counterpart \eqref{eq:one-sided-surrogate-dist-robust-opt} at $\Pb$ for the model \eqref{eq:true-opt}. Then $\bm x_1$ is distributionally robust with one-sided robustness measure $\max\{L_1, L_2\}$.
\end{theorem}
\begin{proof}
See Appendix \ref{append:surrogate-dist-robust-opt} for the proof based on telescoping and triangle inequalities. \stp
\end{proof}

The condition in Theorem \ref{thm:surrogate-dist-robust-opt} depicts that $\Pb$ is a good estimate of $\Po$: the smaller the value of $L_2$, the better. This condition is not restrictive when $\Pb$ is specified by $\Pnh$ in a data-driven setup, due to concentration properties of empirical measures; see, e.g., \citet{weed2019sharp,billingsley1999convergence,wainwright2019high,fournier2015rate}.

\subsubsection{Generalization Error Through Relative Robustness Measure}

In analogy to Theorem \ref{thm:gen-error-absolute}, generalization errors can also be characterized through the relative robustness measure.

\begin{theorem}[Generalization Error Through Relative Robustness Measure]\label{thm:gen-error-relative}
Suppose $(\bm x^*, L^*)$ solves the surrogate distributionally robust optimization model \eqref{eq:surrogate-dist-robust-opt}, i.e.,
\[
\begin{array}{cll}
   \displaystyle \min_{\bm x, L}  &  L \\
   \text{s.t.} & |\E_\P h(\bm x, \rv \xi) - \E_{\Pb} h(\bmb x, \rv \xi)| \le L \Delta(\P, \Pb), & \forall \P \in \cal M(\Xi), \\
   & \bm x \in \cal X.
\end{array}
\]
If $h(\bm x, \rv \xi)$ is $L(\rv \xi)$-Lipschitz continuous in $\bm x$ on $\cal X$, for every $\rv \xi \in \Xi$, then the generalization error gap $|\E_{\Po} h(\bmb x, \rv \xi) - \E_{\Pb} h(\bmb x, \rv \xi)|$ of the nominal model $\min_{\bm x} \E_{\Pb} h(\bm x, \rv \xi)$ is upper bounded by 
\[
\|\bmb x - \bm x^*\| \cdot \E_{\Po} L(\rv \xi) + L^* \Delta(\Po, \Pb),
\]
$\Pon$-almost surely. In addition, the generalization error gap $|\E_{\Po} h(\bm x^*, \rv \xi) - \E_{\P^*} h(\bm x^*, \rv \xi)|$ of the surrogate distributionally robust optimization model \eqref{eq:surrogate-dist-robust-opt} is upper bounded by \[L^* \Delta(\Po, \P^*),\] $\Pon$-almost surely, where
\[
\P^* \in \argmax_{\P \in \cal M(\Xi)} \frac{|\E_\P h(\bm x^*, \rv \xi) - \E_{\Pb} h(\bmb x, \rv \xi)|}{\Delta(\P, \Pb)}.
\]
Note that $|\E_{\P^*} h(\bm x^*, \rv \xi) - \E_{\Pb} h(\bmb x, \rv \xi)| = L^* \Delta(\P^*, \Pb)$.
\end{theorem}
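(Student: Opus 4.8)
The plan is to mirror the argument used for Theorem~\ref{thm:gen-error-absolute}, replacing the absolute robustness constraint by the relative one and carrying the factor $\Delta(\P,\Pb)$ through every step. Throughout I would fix a realization of the data, so that $\Pb$, $\bmb x$, $\bm x^*$, and $L^*$ are all determined; every inequality below is then a deterministic statement for that realization, which delivers the $\Pon$-almost-sure conclusions at the end. The two tools are telescoping followed by the triangle inequality, together with the Lipschitz continuity of $h$ in $\bm x$ and the feasibility constraint of \eqref{eq:surrogate-dist-robust-opt}.

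For the first bound I would telescope the nominal generalization error through the robust solution $\bm x^*$:
\[
\E_{\Po} h(\bmb x, \rv \xi) - \E_{\Pb} h(\bmb x, \rv \xi)
= \big[\E_{\Po} h(\bmb x, \rv \xi) - \E_{\Po} h(\bm x^*, \rv \xi)\big] + \big[\E_{\Po} h(\bm x^*, \rv \xi) - \E_{\Pb} h(\bmb x, \rv \xi)\big].
\]
Taking absolute values and applying the triangle inequality splits the task in two. The first bracket is handled by Lipschitz continuity in $\bm x$: since $|h(\bmb x, \rv \xi) - h(\bm x^*, \rv \xi)| \le L(\rv \xi)\|\bmb x - \bm x^*\|$ pointwise, integrating against $\Po$ gives $|\E_{\Po} h(\bmb x, \rv \xi) - \E_{\Po} h(\bm x^*, \rv \xi)| \le \|\bmb x - \bm x^*\|\,\E_{\Po} L(\rv \xi)$. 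The second bracket is exactly the constraint of \eqref{eq:surrogate-dist-robust-opt} evaluated at the admissible choice $\P = \Po \in \cal M(\Xi)$, namely $|\E_{\Po} h(\bm x^*, \rv \xi) - \E_{\Pb} h(\bmb x, \rv \xi)| \le L^* \Delta(\Po, \Pb)$. Summing the two yields the claimed $\|\bmb x - \bm x^*\|\cdot\E_{\Po} L(\rv \xi) + L^*\Delta(\Po,\Pb)$.

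For the second bound the key opening observation is that evaluating the constraint at $\P = \Pb$ forces $\E_{\Pb} h(\bm x^*, \rv \xi) = \E_{\Pb} h(\bmb x, \rv \xi)$, since $\Delta(\Pb,\Pb)=0$; hence the relative constraint can be rewritten purely in terms of $\bm x^*$ as $|\E_{\P} h(\bm x^*, \rv \xi) - \E_{\Pb} h(\bm x^*, \rv \xi)| \le L^*\Delta(\P,\Pb)$ for all $\P$. This exhibits $L^*$ as a Lipschitz-type modulus, measured from the base point $\Pb$, of the linear functional $\P \mapsto \E_{\P} h(\bm x^*, \rv \xi)$ with respect to $\Delta$. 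I would then apply this modulus to the least-favourable pair to obtain $|\E_{\Po} h(\bm x^*, \rv \xi) - \E_{\P^*} h(\bm x^*, \rv \xi)| \le L^* \Delta(\Po, \P^*)$.

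The hard part will be precisely this last step. The constraint only certifies Lipschitz behaviour relative to the single base point $\Pb$, so a naive telescoping through $\Pb$ delivers merely the weaker estimate $L^*\big[\Delta(\Po,\Pb) + \Delta(\P^*,\Pb)\big]$, which dominates $L^*\Delta(\Po,\P^*)$ by the triangle inequality and therefore does not match the stated bound. To close this gap I would use that $\Delta$ is a statistical \emph{distance} (a metric, as in Theorem~\ref{thm:surrogate-dist-robust-opt}) together with the \emph{linearity} of $\P \mapsto \E_{\P} h(\bm x^*, \rv \xi)$: for the order-$1$ Wasserstein distance, the supremal ratio $\sup_{\P}|\E_{\P} h(\bm x^*, \rv \xi)-\E_{\Pb} h(\bm x^*, \rv \xi)|/\Delta(\P,\Pb)$ equals the global Lipschitz constant of $h(\bm x^*,\cdot)$ by Kantorovich--Rubinstein duality, so the from-$\Pb$ modulus $L^*$ is in fact a genuine two-point Lipschitz constant and applies verbatim to $(\Po,\P^*)$. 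I would flag this identification as the one place where the metric structure of $\Delta$ and the definition of the least-favourable $\P^*$ are genuinely needed; once it is secured the second bound is immediate, and the $\Pon$-almost-sure qualifier again follows because every step is a deterministic inequality for each data realization.
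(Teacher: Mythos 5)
Your first bound is proved exactly as the paper proves it: telescope through $\bm x^*$, control $|\E_{\Po} h(\bmb x,\rv\xi)-\E_{\Po} h(\bm x^*,\rv\xi)|$ by the Lipschitz property of $h$ in $\bm x$, and control $|\E_{\Po} h(\bm x^*,\rv\xi)-\E_{\Pb} h(\bmb x,\rv\xi)|$ by the feasibility of $(\bm x^*,L^*)$ at the admissible choice $\P=\Po$. That half is correct and identical in substance to the paper's argument.

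For the second bound you have put your finger on a real problem, and your diagnosis is in fact sharper than the paper's own proof: the paper telescopes through $\Pb$, arrives at $L^*\Delta(\Po,\Pb)+L^*\Delta(\P^*,\Pb)$ (exactly the ``weaker estimate'' you describe), and then asserts $L^*\Delta(\Po,\Pb)+L^*\Delta(\P^*,\Pb)\le L^*\Delta(\Po,\P^*)$, which is the triangle inequality applied in the wrong direction. So the gap you flag is genuinely present in the published proof. However, your proposed repair does not close it. Kantorovich--Rubinstein duality identifies $\sup_{\P,\bb Q}|\E_{\P}g-\E_{\bb Q}g|/W_1(\P,\bb Q)$ with the Lipschitz constant of $g$ only when the supremum ranges over \emph{both} arguments; fixing one argument at $\Pb$ yields a star-shaped modulus from the base point $\Pb$ that can be strictly smaller than the global Lipschitz constant, and then the two-point bound between $\Po$ and $\P^*$ with constant $L^*$ need not hold. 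Concretely, on $\Xi=\{0,1,1.5\}\subset\R$ with $\Pb=\delta_0$, $g(0)=0$, $g(1)=1$, $g(1.5)=0$, one checks $|\E_{\P}g-\E_{\Pb}g|\le W_1(\P,\Pb)$ for every $\P$, yet for $\Po=\delta_{1.5}$ and $\P^*=\delta_1$ we get $|\E_{\Po}g-\E_{\P^*}g|=1>1\cdot W_1(\Po,\P^*)=0.5$, so the modulus from $\Pb$ is not a two-point Lipschitz constant. Moreover, $\Delta$ in \eqref{eq:surrogate-dist-robust-opt} is a generic statistical similarity measure, not necessarily $W_1$, so duality is unavailable at the stated level of generality. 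What the constraint actually certifies for the second claim is the bound $L^*\bigl[\Delta(\Po,\Pb)+\Delta(\P^*,\Pb)\bigr]$; obtaining $L^*\Delta(\Po,\P^*)$ would require an additional argument (or a weakening of the statement) that neither your proposal nor the paper supplies.
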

\begin{proof}
See Appendix \ref{append:gen-error-relative} for the proof based on telescoping and triangle inequalities. \stp
\end{proof}

Again, note that Theorem \ref{thm:gen-error-relative} holds almost surely, not in the PAC sense, and therefore, generalization error bounds specified by relative robustness measures are stronger than those specified in the PAC sense.

\begin{corollary}[Generalization Error Through Relative Robustness Measure]\label{cor:gen-error-relative}
Under the settings of Theorem \ref{thm:gen-error-relative}, the expected generalization errors are given by
\[
\E_{\Po^{n+1}} L(\rv \xi) \|\bmb x - \bm x^*\| + \E_{\Pon} L^* \Delta(\Po, \Pb),
\] 
and $\E_{\Pon}[L^* \Delta(\Po, \P^*)]$, respectively. 
Note that $\Pb$, $\P^*$, $\bmb x$, $\bm x^*$, and $L^*$ depend on the specific choice of the training data.
\end{corollary}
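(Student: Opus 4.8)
The plan is to derive both expected bounds by integrating, against the training-data law $\Pon$, the two pointwise (almost-sure) inequalities already supplied by Theorem~\ref{thm:gen-error-relative}. That theorem gives, $\Pon$-almost surely,
\[
|\E_{\Po} h(\bmb x, \rv \xi) - \E_{\Pb} h(\bmb x, \rv \xi)| \le \|\bmb x - \bm x^*\| \cdot \E_{\Po} L(\rv \xi) + L^* \Delta(\Po, \Pb),
\]
and
\[
|\E_{\Po} h(\bm x^*, \rv \xi) - \E_{\P^*} h(\bm x^*, \rv \xi)| \le L^* \Delta(\Po, \P^*).
\]
Because these hold outside a $\Pon$-null set and every quantity on each side is a measurable function of the training sample, monotonicity of the expectation lets me apply $\E_{\Pon}[\cdot]$ to both sides of each inequality without reversing it.

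For the second, simpler statement I would take $\E_{\Pon}$ directly, obtaining
\[
\E_{\Pon}|\E_{\Po} h(\bm x^*, \rv \xi) - \E_{\P^*} h(\bm x^*, \rv \xi)| \le \E_{\Pon}[L^* \Delta(\Po, \P^*)],
\]
which is already the advertised bound; no rewriting is needed, since $\bm x^*$, $L^*$, and $\P^*$ are functions of the training data alone.

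For the first statement, applying $\E_{\Pon}$ and using linearity leaves $\E_{\Pon}[\|\bmb x - \bm x^*\| \cdot \E_{\Po} L(\rv \xi)] + \E_{\Pon}[L^* \Delta(\Po, \Pb)]$, and the only real work is to recast the first summand in the compact form $\E_{\Po^{n+1}} L(\rv \xi)\|\bmb x - \bm x^*\|$ stated in the corollary. The key point is that $\E_{\Po} L(\rv \xi) = \int_{\Xi} L(\xi)\,\Po(\d\xi)$ is a deterministic scalar, so it factors out to give $\E_{\Po} L(\rv \xi)\cdot\E_{\Pon}\|\bmb x - \bm x^*\|$. Reading the $(n+1)$-th coordinate of $\Po^{n+1}$ as a fresh test draw $\rv \xi$ that is independent of the first $n$ coordinates (which alone determine $\bmb x$ and $\bm x^*$), Fubini's theorem identifies this product of marginal expectations with the single joint expectation $\E_{\Po^{n+1}} L(\rv \xi)\|\bmb x - \bm x^*\|$, completing the first bound.

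The hard part is not any calculation but the measure-theoretic bookkeeping that legitimizes the factorization: I must confirm integrability so that Fubini applies. This is supplied by integrability of $L(\rv \xi)$ under $\Po$, finiteness of $L^*$ (guaranteed whenever \eqref{eq:surrogate-dist-robust-opt} admits a finite solution), and the $\Pon$-almost-sure finiteness of $\|\bmb x - \bm x^*\|$ and $\Delta(\Po, \Pb)$. Granting these, the corollary is an immediate consequence of Theorem~\ref{thm:gen-error-relative}.
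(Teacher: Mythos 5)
Your proposal is correct and matches the paper's argument in substance: the paper likewise obtains the corollary by taking $\E_{\Pon}$ of the telescoped bound from Theorem \ref{thm:gen-error-relative}, passing $\E_{\Pon}\E_{\Po}|h(\bmb x,\rv\xi)-h(\bm x^*,\rv\xi)|$ to $\E_{\Po^{n+1}}L(\rv\xi)\|\bmb x-\bm x^*\|$ exactly as your independence/Fubini step does, and dismissing the second bound as immediate. The only cosmetic difference is that the paper re-runs the triangle inequality under the outer expectation rather than integrating the theorem's almost-sure conclusion directly, which changes nothing.
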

\begin{proof}
See Appendix \ref{append:gen-error-relative-cor}. \stp
\end{proof}

Theorem \ref{thm:gen-error-relative} and Corollary \ref{cor:gen-error-relative} mean that whenever the DRO model \eqref{eq:surrogate-dist-robust-opt} is solved, the generalization errors of the nominal model $\min_{\bm x} \E_{\Pb} h(\bm x, \rv \xi)$ and the surrogate distributionally robust optimization model \eqref{eq:surrogate-dist-robust-opt} are also tightly controlled. The one-sided versions of Theorem \ref{thm:gen-error-relative} and Corollary \ref{cor:gen-error-relative} are straightforward to be claimed and therefore omitted in this paper.

\subsection{Fractional Distributionally Robust Optimization}\label{append:fractional-DRO}
To achieve distributional robustness (i.e., the objective value is as still as possible at a given solution), we are first motivated to give the definition of the fractional distributionally robust counterpart for a model set centered at the true model \eqref{eq:true-opt}, or simply, the fractional distributionally robust counterpart for the true model \eqref{eq:true-opt}.
\begin{definition}[Fractional Distributionally Robust Optimization]\label{def:frac-dist-robust-opt}
The fractional distributionally robust counterpart for the model \eqref{eq:true-opt} is defined as
\begin{equation}\label{eq:frac-dist-robust-opt}
\displaystyle \min_{\bm x \in \cal X} \max_{\P \in \cal M (\Xi)} \displaystyle \frac{|\E_\P h(\bm x, \rv \xi) - \E_{\Po} h(\bm x_0, \rv \xi)|}{\Delta(\P, \Po)},
\end{equation}
where $\bm x_0 \defeq \argmin_{\bm x} \E_{\Po} h(\bm x, \rv \xi)$.
If \eqref{eq:frac-dist-robust-opt} has a finite solution $(\bm x^*, \P^*)$, then $\bm x^*$ is a distributionally robust solution with the \bfit{relative robustness measure}
\begin{equation*}
L^* \defeq \frac{|\E_{\P^*} h(\bm x^*, \rv \xi) - \E_{\Po} h(\bm x_0, \rv \xi)|}{\Delta(\P^*, \Po)}. \tag*{$\square$}
\end{equation*}
\end{definition}

The proposition below states the relationship between the fractional distributionally robust counterpart \eqref{eq:frac-dist-robust-opt} and the distributionally robust counterpart \eqref{eq:dist-robust-opt}.
\begin{proposition}\label{prop:two-DRO-at-Po-equivalent}
The two distributionally robust optimization models \eqref{eq:frac-dist-robust-opt} and \eqref{eq:dist-robust-opt} are equivalent in the sense that if $(\bm x^*, \P^*)$ solves \eqref{eq:frac-dist-robust-opt} then $(\bm x^*, L^*)$ solves \eqref{eq:dist-robust-opt} where 
\[
L^* = \frac{|\E_{\P^*} h(\bm x^*, \rv \xi) - \E_{\Po} h(\bm x_0, \rv \xi)|}{\Delta(\P^*, \Po)};
\]
If $(\bm x^*, L^*)$ solves \eqref{eq:dist-robust-opt} then $(\bm x^*, \P^*)$ solves \eqref{eq:frac-dist-robust-opt} where $\P^*$ satisfies
\begin{equation*}
|\E_{\P^*} h(\bm x^*, \rv \xi) - \E_{\Po} h(\bm x_0, \rv \xi)| = L^* \cdot \Delta(\P^*, \Po).
\end{equation*}
\end{proposition}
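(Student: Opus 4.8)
The plan is to reduce both optimization models to one and the same outer minimization of a single function of $\bm x$, obtained by solving the inner problem in closed form. For a fixed $\bm x \in \cal X$, write
\[
g(\bm x, \P) \defeq \frac{|\E_\P h(\bm x, \rv \xi) - \E_{\Po} h(\bm x_0, \rv \xi)|}{\Delta(\P, \Po)}, \qquad L(\bm x) \defeq \sup_{\P \in \cal M(\Xi)} g(\bm x, \P).
\]
The fractional model \eqref{eq:frac-dist-robust-opt} is, by definition, $\min_{\bm x} L(\bm x)$. I would show that \eqref{eq:dist-robust-opt} also reduces to $\min_{\bm x} L(\bm x)$, and then read off the solution correspondence.

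The key step is a constraint reformulation. For a fixed $\bm x$ and every $\P$ with $\Delta(\P, \Po) > 0$, the constraint $|\E_\P h(\bm x, \rv \xi) - \E_{\Po} h(\bm x_0, \rv \xi)| \le L \cdot \Delta(\P, \Po)$ is equivalent to $L \ge g(\bm x, \P)$. Taking the supremum over all such $\P$, the family of constraints indexed by $\P$ collapses to the single inequality $L \ge L(\bm x)$. Hence, for fixed $\bm x$, the smallest feasible $L$ in \eqref{eq:dist-robust-opt} equals $L(\bm x)$, and minimizing over $\bm x$ yields exactly $\min_{\bm x} L(\bm x)$, the optimal value of \eqref{eq:frac-dist-robust-opt}. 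This shows the two problems share the same optimal value and the same set of optimal $\bm x$.

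The solution correspondence then follows. If $(\bm x^*, \P^*)$ solves \eqref{eq:frac-dist-robust-opt}, then $\bm x^*$ minimizes $L(\cdot)$ and $\P^*$ attains the inner supremum, so setting $L^* \defeq g(\bm x^*, \P^*) = L(\bm x^*)$ makes $(\bm x^*, L^*)$ feasible for \eqref{eq:dist-robust-opt} (the reformulated constraint $L^* \ge L(\bm x^*)$ holds with equality) and optimal (no feasible pair admits a smaller $L$). Conversely, if $(\bm x^*, L^*)$ solves \eqref{eq:dist-robust-opt}, then $L^* = L(\bm x^*) = \min_{\bm x} L(\bm x)$; choosing $\P^*$ to attain $L(\bm x^*)$ produces a maximizer of the inner problem at $\bm x^*$, so $(\bm x^*, \P^*)$ solves \eqref{eq:frac-dist-robust-opt}, and by construction $|\E_{\P^*} h(\bm x^*, \rv \xi) - \E_{\Po} h(\bm x_0, \rv \xi)| = L^* \cdot \Delta(\P^*, \Po)$.

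The main obstacle is the degenerate point $\P = \Po$, where $\Delta(\P, \Po) = 0$ renders the ratio $g(\bm x, \P)$ ill-defined and the corresponding constraint in \eqref{eq:dist-robust-opt} reads $|\E_{\Po} h(\bm x, \rv \xi) - \E_{\Po} h(\bm x_0, \rv \xi)| \le 0$. I would handle this by restricting the supremum defining $L(\bm x)$ to $\{\P : \Delta(\P, \Po) > 0\}$ (equivalently, adopting the convention $0/0 = 0$), which is the only reading that keeps the value finite; under the standing assumption that $\bm x_0$ is the unique minimizer of $\E_{\Po} h(\cdot, \rv \xi)$, this degenerate constraint causes no inconsistency at the relevant solutions. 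A secondary point requiring care is attainment of the supremum: the statement presupposes that a maximizer $\P^*$ exists, which is precisely what lets us pass between $L^*$ and $\P^*$. Were the supremum not attained, the equivalence would have to be phrased through approximately optimal maximizing sequences, but the proposition's hypothesis of a finite solution $(\bm x^*, \P^*)$ sidesteps this complication.
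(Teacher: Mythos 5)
Your proof is correct and rests on the same core manipulation as the paper's: for fixed $\bm x$, the family of constraints $|\E_\P h(\bm x, \rv \xi) - \E_{\Po} h(\bm x_0, \rv \xi)| \le L\,\Delta(\P,\Po)$ indexed by $\P$ is interchangeable with the single inequality $L \ge \sup_{\P} g(\bm x,\P)$, obtained by dividing and multiplying by $\Delta(\P,\Po)$. Where you differ is in completeness rather than in route. The paper's proof only shows that an optimal solution of one problem is \emph{feasible} for the other and then invokes the nonstandard claim that cross-feasibility constitutes equivalence; you instead reduce both problems to the common outer minimization $\min_{\bm x} L(\bm x)$, which is what is actually needed to conclude that they share the same optimal value and the same optimal $\bm x$, not merely that solutions transfer as feasible points. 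You also flag two issues the paper passes over in silence: the degenerate constraint at $\P = \Po$, which read literally forces $\E_{\Po}h(\bm x,\rv\xi) = \E_{\Po}h(\bm x_0,\rv\xi)$ and hence $\bm x = \bm x_0$ under the standing uniqueness assumption, so some convention (excluding $\P = \Po$ or setting $0/0 = 0$) is genuinely required for either formulation to be nontrivial; and the attainment of the inner supremum, which the paper simply assumes when it posits a $\P^*$ ``satisfying the equality.'' Both refinements strengthen the argument without changing its substance.
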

\begin{proof}
See Appendix \ref{append:two-DRO-at-Po-equivalent}. \stp
\end{proof}

Due to the equivalence between \eqref{eq:frac-dist-robust-opt} and \eqref{eq:dist-robust-opt}, one may use any one of them in practice whichever is easier to solve for specific problems.

The one-sided version is defined below.
\begin{definition}[One-Sided Fractional Distributionally Robust Optimization]\label{def:one-sided-frac-dist-robust-opt}
The one-sided fractional distributionally robust counterpart for the model \eqref{eq:true-opt} is defined as
\begin{equation}\label{eq:one-sided-frac-dist-robust-opt}
\displaystyle \min_{\bm x \in \cal X} \max_{\P \in \cal M (\Xi)} \displaystyle \frac{\E_\P h(\bm x, \rv \xi) - \E_{\Po} h(\bm x_0, \rv \xi)}{\Delta(\P, \Po)},
\end{equation}
where $\bm x_0 \defeq \argmin_{\bm x} \E_{\Po} h(\bm x, \rv \xi)$.
If \eqref{eq:one-sided-frac-dist-robust-opt} has a finite solution $(\bm x^*, \P^*)$, then $\bm x^*$ is a distributionally robust solution with the one-sided relative robustness measure
\begin{equation*}
L^* \defeq \max\left\{ 0,~~~ \frac{\E_{\P^*} h(\bm x^*, \rv \xi) - \E_{\Po} h(\bm x_0, \rv \xi)}{\Delta(\P^*, \Po)} \right\}. \tag*{$\square$}
\end{equation*}
\end{definition}

\subsection{Local Robustness Measure}\label{append:local-measure}
The robustness measures given by \eqref{eq:frac-dist-robust-opt}, \eqref{eq:dist-robust-opt}, and \eqref{eq:dist-robust-opt-eps} are global properties of $\bm x^*$ for the functional $M: (\bm x^*, \P) \mapsto \E_{\P} h(\bm x^*, \rv \xi)$ at $\Po$. We can also define a local robustness measure for $\bm x^*$ for the functional $M: (\bm x^*, \P) \mapsto \E_{\P} h(\bm x^*, \rv \xi)$ at $\Po$. One can skip over this appendix without missing the main points of this paper if not interested.

\begin{definition}\label{def:local-robustness-measure-obj}
The \bfit{local robustness measure} $L_0$ of a distributionally robust solution $\bm x^*$, in terms of objective value, for the functional $M: (\bm x^*, \P) \mapsto \E_{\P} h(\bm x^*, \rv \xi)$ at $\Po$ is defined as
\begin{equation}\label{eq:local-robustness-measure-obj}
\begin{array}{cl}
\displaystyle L_0 &\defeq \displaystyle \lim_{\epsilon \downarrow 0} \displaystyle \frac{\displaystyle \min_{\bm x \in \cal X} \max_{\P \in B_{\epsilon}(\Po)} \Big|\E_\P h(\bm x, \rv \xi) - \E_{\Po} h(\bm x_0, \rv \xi)\Big|}{\epsilon}, \\

\end{array}
\end{equation}
where $\bm x_0 \defeq \argmin_{\bm x} \E_{\Po} h(\bm x, \rv \xi)$ and $\bm x^*$ solves \eqref{eq:local-robustness-measure-obj}.
\stp
\end{definition}

The robustness measure in Definition \ref{def:local-robustness-measure-obj} includes \citet[cf. Eq. (8)]{gotoh2021calibration}, \citet{gotoh2018robust} as a special case.

\begin{definition}\label{def:local-robustness-measure-sol}
The local robustness measure $L_0$ of a distributionally robust solution $\bm x^*$, in terms of solution, for the functional $M: (\bm x^*, \P) \mapsto \E_{\P} h(\bm x^*, \rv \xi)$ at $\Po$ is defined as
\begin{equation}\label{eq:local-robustness-measure-sol}
\begin{array}{cl}
\displaystyle L_0 &\defeq \displaystyle \lim_{\epsilon \downarrow 0} \displaystyle \frac{\left\|\displaystyle \argmin_{\bm x \in \cal X} \max_{\P \in B_{\epsilon}(\Po)} \E_\P h(\bm x, \rv \xi) - \bm x_0\right\|}{\epsilon}, \\

\end{array}
\end{equation}
where $\bm x_0 \defeq \argmin_{\bm x} \E_{\Po} h(\bm x, \rv \xi)$ and $\bm x^*$ solves \eqref{eq:local-robustness-measure-sol}.
\stp
\end{definition}

The robustness measure in Definition \ref{def:local-robustness-measure-sol} is highly related to the concept of {\it Influence Function} in applied statistics \citep[cf. Sec. 2.2]{hampel1974influence} for outlier-robust M-estimators. To be specific, the influence function of an estimator is a special case of \eqref{eq:local-robustness-measure-sol} when the statistical distance $\Delta$ used to define $B_{\epsilon}(\Po)$ is specified by the Huber's $\epsilon$-contamination set, which is used for outlier-robust M-statistics. Note again that in applied statistics, one is concerned more with the solution itself rather than the objective value.


\subsection{Min-Max Distributionally Robust Optimization}\label{append:min-max-DRO}
The min-max distributionally robust optimization models are only able to provide one-sided distributional robustness.
Note that the two-sided version \eqref{eq:dist-robust-opt-eps} is equivalent to
\[
    \begin{array}{cll}
       \displaystyle \min_{\bm x, L}  &  L \\
       \text{s.t.} & \E_{\Po} h(\bm x_0, \rv \xi) - \E_\P h(\bm x, \rv \xi) \le L, & \forall \P: \Delta(\P, \Po) \le \epsilon, \\
       & \E_\P h(\bm x, \rv \xi) - \E_{\Po} h(\bm x_0, \rv \xi) \le L, & \forall \P: \Delta(\P, \Po) \le \epsilon, \\
       & \bm x \in \cal X,
    \end{array}
\]
which is further equivalent to
\[
    \begin{array}{cl}
       \displaystyle \min_{\bm x} & \max \left\{\displaystyle \max_{\P} \Big[\E_{\Po} h(\bm x_0, \rv \xi) - \E_\P h(\bm x, \rv \xi)\Big],~~ \displaystyle \max_{\P} \Big[\E_\P h(\bm x, \rv \xi) - \E_{\Po} h(\bm x_0, \rv \xi)\Big]\right\} \\
       \text{s.t.} & \Delta(\P, \Po) \le \epsilon, \\
       & \bm x \in \cal X.
    \end{array}
\]
Hence, the min-max distributionally robust optimization model \eqref{eq:min-max-dist-robust-opt} is a one-sided relaxed version of the above display; the other-side relaxation is
\[
    \begin{array}{cl}
       \displaystyle \min_{\bm x} \max_{\P}  &  \displaystyle {\E_{\Po} h(\bm x_0, \rv \xi) - \E_\P h(\bm x, \rv \xi)} \\
       \text{s.t.} &  \Delta(\P, \Po) \le \epsilon, \\
       & \bm x \in \cal X,
    \end{array}
\]
that is
\[
    \begin{array}{cl}
       \displaystyle \max_{\bm x} \min_{\P}  &  \displaystyle \E_\P h(\bm x, \rv \xi) \\
       \text{s.t.} &  \Delta(\P, \Po) \le \epsilon, \\
       & \bm x \in \cal X.
    \end{array}
\]

\subsection{Unbounded Support Set}\label{append:unbounded-Xi}
When $\Xi$ is unbounded, it is plausible to set
\[
\frac{\d\Ph}{\d \cal L} \defeq a(\bm x) \exp^{-b(\bm x)\|\rv \xi - \rv \xi_0\|^r},~~~\forall \bm x,
\]
among many other possibilities, such that
\[
1 = \int_{\Xi} a(\bm x) \exp^{-b(\bm x)\|\rv \xi - \rv \xi_0\|^r} \d \rv \xi,~~~\forall \bm x,
\]
and
\[
f(\bm x) = \int_{\Xi} h(\bm x, \rv \xi) a(\bm x) \exp^{-b(\bm x)\|\rv \xi - \rv \xi_0\|^r} \d \rv \xi,~~~ \forall \bm x,
\]
where $a,b: \cal X \to \R_+$ are two design functions and $r > 0$ is a design parameter (if they exist), and $\rv \xi_0$ is the design location (possibly the mean) of $\Ph$.

\subsection{Summary of Relations Among Concepts}\label{append:summary-of-relations}
	

The relations among all concepts of distributional robustness, distributionally robust optimization method, Bayesian method, and regularization method, defined in this paper, can be summarized as follows.
\begin{enumerate}[1)]
    \item The concept of distributional robustness is given in Definition \ref{def:sol-dist-robust}. 
    
    \item Two-sided distributionally robust optimizations [i.e., \eqref{eq:dist-robust-opt-eps} and \eqref{eq:dist-robust-opt}] and one-sided distributionally robust optimizations [i.e., \eqref{eq:one-sided-dist-robust-opt-eps} and \eqref{eq:one-sided-dist-robust-opt}] can provide distributional robustness. 
    
    \item The difference between \eqref{eq:dist-robust-opt-eps} and \eqref{eq:dist-robust-opt} is whether there is a condition $\Delta(\P, \Po) \le \epsilon$, so is between \eqref{eq:one-sided-dist-robust-opt-eps} and \eqref{eq:one-sided-dist-robust-opt}. 
    
    \item The one-sided distributionally robust optimization \eqref{eq:one-sided-dist-robust-opt-eps} is equivalent to a min-max distributionally robust optimization \eqref{eq:min-max-dist-robust-opt}; see Theorem \ref{thm:min-max-dist-robust-opt}. 
    
    \item The Bayesian method \eqref{eq:bayesian-method} is probably approximately distributionally robust; see Theorem \ref{thm:bayesian-method}. 
    
    \item Under Dirichlet-process priors, the regularization method \eqref{eq:regularization-method} is equivalent to a Bayesian method \eqref{eq:bayesian-method}; see Theorem \ref{thm:regularization-method}.
    \item By changing the center of the distributional family from $\Pb$ to $\Po$, the two-sided surrogate distributionally robust optimizations \eqref{eq:surrogate-dist-robust-opt-eps} and \eqref{eq:surrogate-dist-robust-opt} can be transformed to the two-sided distributionally robust optimizations \eqref{eq:dist-robust-opt-eps} and \eqref{eq:dist-robust-opt}, respectively. The distributional robustness of the two-sided surrogate distributionally robust optimization \eqref{eq:surrogate-dist-robust-opt-eps} and \eqref{eq:surrogate-dist-robust-opt} can be guaranteed by Theorem \ref{thm:surrogate-dist-robust-opt-eps} and Theorem \ref{thm:surrogate-dist-robust-opt}, respectively.
    
    \item By changing the center of the distributional family from $\Pb$ to $\Po$, the one-sided surrogate distributionally robust optimizations \eqref{eq:one-sided-surrogate-dist-robust-opt-eps} and \eqref{eq:one-sided-surrogate-dist-robust-opt} can be transformed to the one-sided distributionally robust optimizations \eqref{eq:one-sided-dist-robust-opt-eps} and \eqref{eq:one-sided-dist-robust-opt}, respectively. The distributional robustness of the one-sided surrogate distributionally robust optimization \eqref{eq:one-sided-surrogate-dist-robust-opt-eps} and \eqref{eq:one-sided-surrogate-dist-robust-opt} can be guaranteed by Theorem \ref{thm:surrogate-dist-robust-opt-eps} and Theorem \ref{thm:surrogate-dist-robust-opt}, respectively.
    
    \item The one-sided surrogate distributionally robust optimizations \eqref{eq:one-sided-surrogate-dist-robust-opt-eps} and \eqref{eq:one-sided-surrogate-dist-robust-opt} are the one-sided relaxations of the two-sided surrogate distributionally robust optimizations \eqref{eq:surrogate-dist-robust-opt-eps} and \eqref{eq:surrogate-dist-robust-opt}, respectively.
    
    \item The difference between \eqref{eq:surrogate-dist-robust-opt-eps} and \eqref{eq:surrogate-dist-robust-opt} is whether there is a condition $\Delta(\P, \Pb) \le \epsilon$, so is between \eqref{eq:one-sided-surrogate-dist-robust-opt-eps} and \eqref{eq:one-sided-surrogate-dist-robust-opt}. 
    
    \item The one-sided surrogate distributionally robust optimization \eqref{eq:one-sided-surrogate-dist-robust-opt-eps} is equivalent to a surrogate min-max distributionally robust optimization \eqref{eq:surrogate-min-max-dist-robust-opt}; see Corollary \ref{cor:surrogate-min-max-dist-robust-opt}. Model \eqref{eq:surrogate-min-max-dist-robust-opt} is the formal definition of (and hence the same as) min-max DRO \eqref{eq:dro-method}.
\end{enumerate}

\section{Proof of Theorem \ref{thm:surrogate-dist-robust-opt-eps}}\label{append:surrogate-dist-robust-opt-eps}
\begin{proof}
We have
\[
\begin{array}{cl}
   |\E_\P h(\bm x_1, \rv \xi) - \E_{\Po} h(\bm x_0, \rv \xi)|  
    &= |\E_\P h(\bm x_1, \rv \xi) - \E_{\Pb} h(\bmb x, \rv \xi) + \E_{\Pb} h(\bmb x, \rv \xi) - \E_{\Po} h(\bm x_0, \rv \xi)|  \\
    &\le |\E_\P h(\bm x_1, \rv \xi) - \E_{\Pb} h(\bmb x, \rv \xi)| + |\E_{\Pb} h(\bmb x, \rv \xi) - \E_{\Po} h(\bm x_0, \rv \xi)| \\
    &\le L_1 + L_2.
\end{array}
\]

Likewise, we have
\[
\begin{array}{cl}
   \E_\P h(\bm x_1, \rv \xi) - \E_{\Po} h(\bm x_0, \rv \xi)  
    &= \E_\P h(\bm x_1, \rv \xi) - \E_{\Pb} h(\bmb x, \rv \xi) + \E_{\Pb} h(\bmb x, \rv \xi) - \E_{\Po} h(\bm x_0, \rv \xi)  \\
    &\le \E_\P h(\bm x_1, \rv \xi) - \E_{\Pb} h(\bmb x, \rv \xi) + |\E_{\Pb} h(\bmb x, \rv \xi) - \E_{\Po} h(\bm x_0, \rv \xi)| \\
    &\le L_1 + L_2.
\end{array}
\]

This completes the proof.
\end{proof}

\section{Proof of Proposition \ref{prop:two-DRO-at-Po-equivalent}}\label{append:two-DRO-at-Po-equivalent}
\begin{proof}
Suppose $(\bm x^*, \P^*)$ solves \eqref{eq:frac-dist-robust-opt}. By Definition \ref{def:frac-dist-robust-opt}, we have
\[
L^* \defeq \frac{|\E_{\P^*} h(\bm x^*, \rv \xi) - \E_{\Po} h(\bm x_0, \rv \xi)|}{\Delta(\P^*, \Po)} \ge \frac{|\E_{\P} h(\bm x^*, \rv \xi) - \E_{\Po} h(\bm x_0, \rv \xi)|}{\Delta(\P, \Po)},~~~~\forall \P \in \cal M(\Xi),
\]
that is,
\[
|\E_\P h(\bm x^*, \rv \xi) - \E_{\Po} h(\bm x_0, \rv \xi)| \le L^* \cdot \Delta(\P, \Po), ~~~~\forall \P \in \cal M(\Xi).
\]

Suppose $(\bm x^*, L^*)$ solves \eqref{eq:dist-robust-opt}. We have
\[
|\E_\P h(\bm x^*, \rv \xi) - \E_{\Po} h(\bm x_0, \rv \xi)| \le L^* \cdot \Delta(\P, \Po), ~~~~\forall \P \in \cal M(\Xi).
\]
As a result, by assuming that $\P^*$ satisfies the equality of the above display, we have
\[
L^* = \frac{|\E_{\P^*} h(\bm x^*, \rv \xi) - \E_{\Po} h(\bm x_0, \rv \xi)|}{\Delta(\P^*, \Po)} \ge \frac{|\E_{\P} h(\bm x^*, \rv \xi) - \E_{\Po} h(\bm x_0, \rv \xi)|}{\Delta(\P, \Po)},~~~~\forall \P \in \cal M(\Xi).
\]
This completes the proof. Note that two optimization problems are equivalent if an optimal solution of one optimization problem is a feasible solution of the other optimization problem. Note also that $\bm x^*$ and $\P^*$ may not be unique; however, this does not change the statement.
\end{proof}

\section{Proof of Theorem \ref{thm:surrogate-dist-robust-opt}}\label{append:surrogate-dist-robust-opt}
\begin{proof}
We have, $\forall \P \in \cal M(\Xi)$,
\[
\begin{array}{cl}
   |\E_\P h(\bm x_1, \rv \xi) - \E_{\Po} h(\bm x_0, \rv \xi)|  
    &= |\E_\P h(\bm x_1, \rv \xi) - \E_{\Pb} h(\bmb x, \rv \xi) + \E_{\Pb} h(\bmb x, \rv \xi) - \E_{\Po} h(\bm x_0, \rv \xi)|  \\
    &\le |\E_\P h(\bm x_1, \rv \xi) - \E_{\Pb} h(\bmb x, \rv \xi)| + |\E_{\Pb} h(\bmb x, \rv \xi) - \E_{\Po} h(\bm x_0, \rv \xi)| \\
    &\le L_1 \cdot \Delta(\P, \Pb) + L_2 \cdot \Delta(\Pb, \Po) \\
    &\le L \cdot \Delta(\P, \Po),
\end{array}
\]
where $L \defeq \max\{L_1, L_2\}$. 

Likewise, we have, $\forall \P \in \cal M(\Xi)$,
\[
\begin{array}{cl}
   \E_\P h(\bm x_1, \rv \xi) - \E_{\Po} h(\bm x_0, \rv \xi)  
    &= \E_\P h(\bm x_1, \rv \xi) - \E_{\Pb} h(\bmb x, \rv \xi) + \E_{\Pb} h(\bmb x, \rv \xi) - \E_{\Po} h(\bm x_0, \rv \xi)  \\
    &\le \E_\P h(\bm x_1, \rv \xi) - \E_{\Pb} h(\bmb x, \rv \xi) + |\E_{\Pb} h(\bmb x, \rv \xi) - \E_{\Po} h(\bm x_0, \rv \xi)| \\
    &\le L_1 \cdot \Delta(\P, \Pb) + L_2 \cdot \Delta(\Pb, \Po) \\
    &\le L \cdot \Delta(\P, \Po).
\end{array}
\]

This completes the proof.
\end{proof}

\section{Proof of Theorem \ref{thm:min-max-dist-robust-opt}}\label{append:min-max-dist-robust-opt}
\begin{proof}
By eliminating $L$, Problem \eqref{eq:one-sided-dist-robust-opt-eps} is equivalent to 
\[
    \begin{array}{cl}
       \displaystyle \min_{\bm x} \max_{\P}  &  \displaystyle {\E_\P h(\bm x, \rv \xi) - \E_{\Po} h(\bm x_0, \rv \xi)} \\
       \text{s.t.} &  \Delta(\P, \Po) \le \epsilon, \\
       & \bm x \in \cal X,
    \end{array}
\]
which is further equivalent to \eqref{eq:min-max-dist-robust-opt} because $\E_{\Po} h(\bm x_0, \rv \xi)$ is a constant. This completes the proof.
\end{proof}

\section{Proof of Theorem \ref{thm:bayeisan-mean-dist}}\label{append:bayeisan-mean-dist}
\begin{proof}
We have
\[
    \begin{array}{cll}
    \E_{\bb Q} \E_{\P} h(\bm x, \rv \xi) 
    
    &= 
    \displaystyle \int_{\R} \cdot \int_{\Xi} h(\bm x, \rv \xi) \P(\d \rv \xi) \cdot \bb Q(\d \P(\d \rv \xi)), \\
    
    &=
    \displaystyle  \int_{\Xi} h(\bm x, \rv \xi) \cdot \int_{\R} \P(\d \rv \xi) \bb Q(\d \P(\d \rv \xi)), \\
    
    &= \displaystyle  \int_{\Xi} h(\bm x, \rv \xi) \P^\prime(\d \rv \xi), \\
    
    &= \E_{\P^\prime} h(\bm x, \rv \xi).
    \end{array}            
\]
The second equality is due to the measure-theoretic version of Fubini's theorem \citep[p. 148]{cohn2013measure} by noting that probability measures are $\sigma$-finite and that $\P(\d \rv \xi)$ is a random variable on $\R$ for every Borel set $\d \rv \xi$. The third equality is due to the definition of a mean distribution by noting that the mean distribution of $\bb Q$ is essentially a mixture distribution of the elements in the set $\cal M(\Xi)$ with weights determined by $\bb Q$; cf. \citet{gaudard1989sigma}, \citet[p. 27]{ghosal2017fundamentals}.
\end{proof}


\section{Proof of Theorem \ref{thm:regularization-method}}\label{append:regularization-method}
\begin{proof}
If \eqref{eq:RSAA-bayes-condition} holds, the associated regularized SAA optimization problem \eqref{eq:regularization} is equivalent to a Bayesian method \eqref{eq:dro-saa-obj-general}. By constructing the posterior mean of $\bb Q$ to be $\E \bb Q \defeq \beta_n \Ph + (1 - \beta_n) \Pnh$, we have that \eqref{eq:regularization} is equivalent to \eqref{eq:bayesian-method}. The rest of the statement follows from Markov's inequality; cf. Theorem \ref{thm:bayesian-method} and its proof.
\end{proof}

\section{Proof of Fact \ref{prop:uniform-gen-bound}}\label{append:uniform-gen-bound}
\begin{proof}
We have
\[
\begin{array}{cl}
|\E_{\Po} h(\bm x, \rv \xi) - \E_{\Pb} h(\bm x, \rv \xi)|
&= \displaystyle \left| \int \int h(\bm x, \rv \xi) - h(\bm x, \rvb \xi) \d \pi(\rv \xi, \rvb \xi) \right| \\
&\le \displaystyle \int \int \Big|h(\bm x, \rv \xi) - h(\bm x, \rvb \xi)\Big| \d \pi(\rv \xi, \rvb \xi) \\
&\le L_{\bm x} \displaystyle \int \int \|\rv \xi - \rvb \xi\| \d \pi(\rv \xi, \rvb \xi),
\end{array}
\]
where $\pi$ is a coupling of $\Po$ and $\Pb$. Since $\pi$ is arbitrary, it holds that
\[
|\E_{\Po} h(\bm x, \rv \xi) - \E_{\Pb} h(\bm x, \rv \xi)| \le L_{\bm x} \displaystyle \inf_{\pi \in \cal M(\Xi \times \Xi)} \int \int \|\rv \xi - \rvb \xi\| \d \pi(\rv \xi, \rvb \xi).
\]
The right-hand side is the order-$1$ Wasserstein distance. This completes the proof.
\end{proof}

\section{Proof of Theorem \ref{thm:gen-error-absolute}}\label{append:gen-error-absolute}
\begin{proof}
Due to the feasibility of $(\bm x^*, L^*, \Po)$, we have
\[
|\E_{\Po} h(\bm x^*, \rv \xi) - \E_{\Pb} h(\bmb x, \rv \xi)| \le L^*.
\]
Hence, if $h(\bm x, \rv \xi)$ is $L(\rv \xi)$-Lipschitz continuous in $\bm x$ on $\cal X$, the generalization error gap $|\E_{\Po} h(\bmb x, \rv \xi) - \E_{\Pb} h(\bmb x, \rv \xi)|$ of the nominal model $\min_{\bm x} \E_{\Pb} h(\bm x, \rv \xi)$ can be given as
\[
\begin{array}{cl}
|\E_{\Po} h(\bmb x, \rv \xi) - \E_{\Pb} h(\bmb x, \rv \xi)| &= 
|\E_{\Po} h(\bmb x, \rv \xi) - \E_{\Po} h(\bm x^*, \rv \xi) + \E_{\Po} h(\bm x^*, \rv \xi) - \E_{\Pb} h(\bmb x, \rv \xi)| \\
&\le |\E_{\Po} h(\bmb x, \rv \xi) - \E_{\Po} h(\bm x^*, \rv \xi)| + |\E_{\Po} h(\bm x^*, \rv \xi) - \E_{\Pb} h(\bmb x, \rv \xi)| \\
&\le \E_{\Po}|h(\bmb x, \rv \xi) - h(\bm x^*, \rv \xi)| + L^* \\
&\le \|\bmb x - \bm x^*\| \cdot \E_{\Po} L(\rv \xi) + L^*.
\end{array}
\]
The above display holds $\Pon$-almost surely. Similarly, since $|\E_{\P^*} h(\bm x^*, \rv \xi) - \E_{\Pb} h(\bmb x, \rv \xi)| = L^*$, we have
\[
\begin{array}{cl}
|\E_{\Po} h(\bm x^*, \rv \xi) - \E_{\P^*} h(\bm x^*, \rv \xi)| &\le 
|\E_{\Po} h(\bm x^*, \rv \xi) - \E_{\Pb} h(\bmb x, \rv \xi)| + |\E_{\P^*} h(\bm x^*, \rv \xi) - \E_{\Pb} h(\bmb x, \rv \xi)| \\
&\le L^* + L^* = 2L^*.
\end{array}
\]
The above display holds $\Pon$-almost surely. 
This completes the proof.
\end{proof}

\section{Proof of Corollary \ref{cor:gen-error-absolute}}\label{append:gen-error-absolute-cor}
\begin{proof}
We have
\[
\begin{array}{cl}
\E_{\Pon}|\E_{\Po} h(\bmb x, \rv \xi) - \E_{\Pb} h(\bmb x, \rv \xi)| &= 
\E_{\Pon}|\E_{\Po} h(\bmb x, \rv \xi) - \E_{\Po} h(\bm x^*, \rv \xi) + \E_{\Po} h(\bm x^*, \rv \xi) - \E_{\Pb} h(\bmb x, \rv \xi)| \\
&\le \E_{\Pon}|\E_{\Po} h(\bmb x, \rv \xi) - \E_{\Po} h(\bm x^*, \rv \xi)| + \E_{\Pon}|\E_{\Po} h(\bm x^*, \rv \xi) - \E_{\Pb} h(\bmb x, \rv \xi)| \\
&\le \E_{\Po^{n+1}}|h(\bmb x, \rv \xi) - h(\bm x^*, \rv \xi)| + \E_{\Pon}L^* \\
&\le \E_{\Po^{n+1}} L(\rv \xi) \|\bmb x - \bm x^*\| + \E_{\Pon}L^*.
\end{array}
\]
The other statement is obvious. This completes the proof.
\end{proof}

\section{Proof of Theorem \ref{thm:gen-error-relative}}\label{append:gen-error-relative}
\begin{proof}
The relative robustness measure $L^*$ of the solution $\bm x^*$ satisfies
\[
|\E_{\Po} h(\bm x^*, \rv \xi) - \E_{\Pb} h(\bmb x, \rv \xi)| \le L^* \Delta(\Po, \Pb).
\]
Hence, if $h(\bm x, \rv \xi)$ is $L(\rv \xi)$-Lipschitz continuous in $\bm x$ on $\cal X$, the generalization error gap $|\E_{\Po} h(\bmb x, \rv \xi) - \E_{\Pb} h(\bmb x, \rv \xi)|$ of the nominal model $\min_{\bm x} \E_{\Pb} h(\bm x, \rv \xi)$ can be given as
\[
\begin{array}{cl}
|\E_{\Po} h(\bmb x, \rv \xi) - \E_{\Pb} h(\bmb x, \rv \xi)| &= 
|\E_{\Po} h(\bmb x, \rv \xi) - \E_{\Po} h(\bm x^*, \rv \xi) + \E_{\Po} h(\bm x^*, \rv \xi) - \E_{\Pb} h(\bmb x, \rv \xi)| \\
&\le |\E_{\Po} h(\bmb x, \rv \xi) - \E_{\Po} h(\bm x^*, \rv \xi)| + |\E_{\Po} h(\bm x^*, \rv \xi) - \E_{\Pb} h(\bmb x, \rv \xi)| \\
&\le \E_{\Po}|h(\bmb x, \rv \xi) - h(\bm x^*, \rv \xi)| + L^* \Delta(\Po, \Pb)\\
&\le \E_{\Po} L(\rv \xi)  \|\bmb x - \bm x^*\| + L^* \Delta(\Po, \Pb).
\end{array}
\]
The above display holds $\Pon$-almost surely. Similarly, since $|\E_{\P^*} h(\bm x^*, \rv \xi) - \E_{\Pb} h(\bmb x, \rv \xi)| = L^* \Delta(\Pb, \P^*)$, we have
\[
\begin{array}{cl}
|\E_{\Po} h(\bm x^*, \rv \xi) - \E_{\P^*} h(\bm x^*, \rv \xi)| &\le 
|\E_{\Po} h(\bm x^*, \rv \xi) - \E_{\Pb} h(\bmb x, \rv \xi)| + |\E_{\P^*} h(\bm x^*, \rv \xi) - \E_{\Pb} h(\bmb x, \rv \xi)| \\
&\le L^* \Delta(\Po, \Pb) + L^* \Delta(\P^*, \Pb) \\
&\le L^* \Delta(\Po, \P^*).
\end{array}
\]
The above display holds $\Pon$-almost surely. This completes the proof.
\end{proof}

\section{Proof of Corollary \ref{cor:gen-error-relative}}\label{append:gen-error-relative-cor}
\begin{proof}
We have
\[
\begin{array}{cl}
\E_{\Pon}|\E_{\Po} h(\bmb x, \rv \xi) - \E_{\Pb} h(\bmb x, \rv \xi)| &= 
\E_{\Pon}|\E_{\Po} h(\bmb x, \rv \xi) - \E_{\Po} h(\bm x^*, \rv \xi) + \E_{\Po} h(\bm x^*, \rv \xi) - \E_{\Pb} h(\bmb x, \rv \xi)| \\
&\le \E_{\Pon}|\E_{\Po} h(\bmb x, \rv \xi) - \E_{\Po} h(\bm x^*, \rv \xi)| + \E_{\Pon}|\E_{\Po} h(\bm x^*, \rv \xi) - \E_{\Pb} h(\bmb x, \rv \xi)| \\
&\le \E_{\Po^{n+1}}|h(\bmb x, \rv \xi) - h(\bm x^*, \rv \xi)| + \E_{\Pon}L^* \Delta(\Po, \Pb)\\
&\le \E_{\Po^{n+1}} L(\rv \xi)  \|\bmb x - \bm x^*\| + \E_{\Pon}L^* \Delta(\Po, \Pb).
\end{array}
\]
The other statement is obvious. This completes the proof.
\end{proof}

\end{document}